\documentclass[sigconf]{mobihoc26_style/acmart}
\usepackage{amsmath}  
\usepackage{xcolor}
\usepackage{float}
\usepackage{caption} 
\usepackage{url}
\usepackage{hyperref}
\usepackage{bm}
\usepackage{graphicx}
\usepackage[ruled,vlined,linesnumbered]{algorithm2e}
\usepackage{subcaption}
\usepackage{wrapfig}
\usepackage{fancyhdr}
\usepackage{booktabs}
\usepackage{multicol}
\usepackage{setspace}
\usepackage{style}

\newcommand{\x}{\bm{x}}
\newcommand{\bu}{\bm{u}}
\newcommand{\z}{\bm{z}}
\newcommand{\fedopi}{{\tt FedOAG}}
\AtBeginDocument{%
  }

\setcopyright{acmlicensed}
\copyrightyear{2026}
\acmYear{2026}
\acmDOI{XXXXXXX.XXXXXXX}
\acmConference[Conference acronym 'XX]{Make sure to enter the correct
  conference title from your rights confirmation email}{June 03--05,
  2026}{Woodstock, NY}
\acmISBN{978-1-4503-XXXX-X/2018/06}

\begin{document}

\title{Over-the-Air Federated Learning in Heterogeneous Mobile Wireless Networks}

\author{Ming Xiang$^1$, 
Nicolò Michelusi$^2$, 
Yonina C. Eldar$^{1,3}$, 
Lili Su$^1$}
\affiliation{%
  \institution{$^1$Northeastern University, Boston, MA, USA; 
  $^2$Arizona State University, Tempe, AZ, USA;
  $^3$Weizmann Institute of Science, Rehovot, Israel.
  }
    \country{}
}

\renewcommand{\shortauthors}{Xiang et al.}

\begin{abstract}
  Over-the-air computation has emerged as a scalable and efficient solution for deploying federated learning algorithms in wireless networks by exploiting waveform superposition for simultaneous model aggregation.
  Most existing work struggles with heterogeneous fading channels.
  These approaches either enforce unbiased updates from all devices or 
  allow partial device contributions,
  requiring careful tuning of the convergence bound to mitigate bias under specific fading models.
  However, the former significantly amplifies receiver noise due to the weakest channel, whereas the latter is sensitive to fading model mismatch and converges only to a biased objective.
  To tackle these challenges, 
  we propose~\fedopi, which employs algorithmic components to 
   automatically satisfy energy constraints via gradient normalization and
  evenly mix devices' updates through implicit gossiping.
  Importantly,~\fedopi~does not require transmission from all devices,
  nor does it rely on a specific fading model or 
  knowledge of time-varying statistical channel distributions.
  We show that \fedopi~converges to a stationary point of an unbiased non-convex objective at the best possible rate $\calO(1/\sqrt{T})$ 
  for any stochastic first-order method.
  We corroborate our analysis with numerical experiments over dynamic wireless conditions on real-world datasets.
\end{abstract}

\begin{CCSXML}
<ccs2012>
   <concept>
       <concept_id>10010147.10010919.10010172</concept_id>
       <concept_desc>Computing methodologies~Distributed algorithms</concept_desc>
       <concept_significance>500</concept_significance>
       </concept>
   <concept>
       <concept_id>10003752.10010070.10010071.10010082</concept_id>
       <concept_desc>Theory of computation~Multi-agent learning</concept_desc>
       <concept_significance>500</concept_significance>
       </concept>
   <concept>
       <concept_id>10002950.10003714.10003716</concept_id>
       <concept_desc>Mathematics of computing~Mathematical optimization</concept_desc>
       <concept_significance>500</concept_significance>
       </concept>
 </ccs2012>
\end{CCSXML}

\ccsdesc[500]{Computing methodologies~Distributed algorithms}
\ccsdesc[500]{Theory of computation~Multi-agent learning}
\ccsdesc[500]{Mathematics of computing~Mathematical optimization}

\keywords{Federated learning, 
over-the-air computation, 
heterogeneous fading channel, 
time-varying statistical channel gain, 
noisy energy superposition.}

\received{20 February 2007}
\received[revised]{12 March 2009}
\received[accepted]{5 June 2009}

\maketitle

\section{Introduction}
\label{sec: introduction}
Federated learning is a distributed machine learning framework that enables distributed devices to collectively train a global model \cite{mcmahan2017communication}.
Instead of sending raw data directly to the parameter server, devices process their data locally and periodically report updates to it.
In wireless networks, the high-dimensional device updates are transmitted over noisy and bandwidth-limited wireless channels, creating a communication bottleneck that limits scalability \cite{3gpp38901,iturm1225}.

Over-the-air (OTA) federated learning addresses this by exploiting the principle of waveform superposition on the multiple access channel (MAC), enabling a noisy single-shot aggregation \cite{yang2020federated,tegin2023federated,zhu2024over}.
The single-shot feature renders conventional federated learning approaches largely infeasible, as the parameter server receives only a superimposed analog waveform and cannot isolate, decode, or individually reweight the updates from distinct devices.
The OTA federated learning approach, by contrast, focuses on designing power-control strategies to achieve high-quality global aggregation under an energy constraint. 
For example, the seminal OTA federated learning algorithm \cite{yang2020federated} ensures unbiased global aggregation over a fading channel by scaling down the device signal based on the weakest device channel; however, this aggressive down-scaling forces the parameter server to massively amplify received transmission, inadvertently blowing up the receiver noise.

Truncated channel inversion through thresholding has then been adopted to facilitate partial device contributions and exclude devices with poor channels to mitigate noise amplification at the receiver
\cite{amiri2020federated,zhu2019broadband}, yet with an implicit homogeneous wireless path loss assumption across devices \cite{amiri2020federated,sery2021over,mao2022charles}.
In reality, heterogeneous wireless channel conditions and a diversified device population are two defining features of OTA federated learning systems. 
When mobile devices are spatially distributed, their statistical channel distribution information (CDI) becomes both heterogeneous and time-varying.
Relying on homogeneous channels introduces prohibitive overhead for continuously tracking time-varying CDIs or inevitably leads to biased global aggregation.
Biased global aggregation leads to objective inconsistency and can significantly harm the federated learning performance. Details can be found in~\prettyref{sec: problem formulation}.
When path losses are heterogeneous, a line of work \cite{cao2020optimized,zhu2019broadband} relaxes unbiased aggregation in exchange for reduced noise amplification; however, the resulting bias remains uncorrected, and its impact on learning performance is not quantitatively characterized.
Recently, Abrar and Michelusi in \cite{abrar2025non,11475389} propose a successive convex approximation framework to simultaneously optimize the learning and power control by designing an optimal pre-scalar at each device.
However, the learning objective remains biased, 
the analysis therein only accounts for Rayleigh fading, and does not readily extend to other fading models.
Moreover, the statistical CDIs are assumed to be time-invariant and known a priori, which may break down under device mobility.
As devices move, channel distributions drift over time, rendering any precomputed CDIs quickly outdated.

\noindent{\bf Contributions.}
In this paper, we develop~\fedopi~
that converges to an unbiased objective over noisy MACs without requiring transmission from every device, avoiding high variance and degraded utility.
To capture heterogeneous and time-varying channel effects, we model device $i$'s uplink availability in round $t$ by probability $p_i^t$, while assuming reliable downlink communication.
An illustration plot can be found in~\prettyref{fig:heterogeneous mobile OTA setup}.

Importantly, neither the parameter server nor the devices know $p_i^t$'s or the underlying fading models.
This precludes conventional OTA approaches that rely on exact channel knowledge for power control and scheduling.
A detailed formulation of $p_i^t$'s can be found in~\prettyref{sec: problem formulation}.
\begin{figure}[!t]
    \centering
    \includegraphics[width=\linewidth,trim=6cm 5.5cm 7cm 2cm, clip]{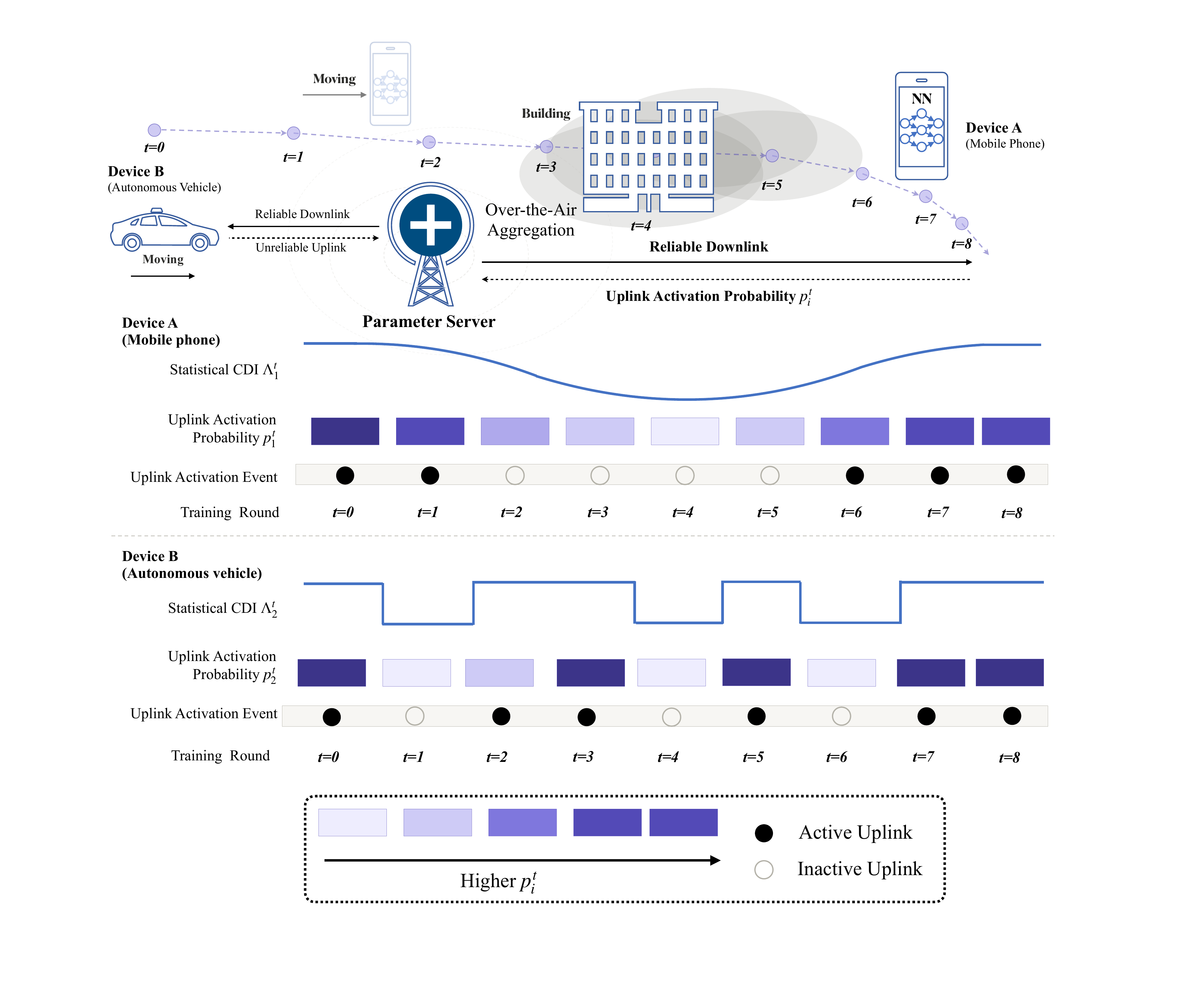}
    \caption{\small
    Mobile devices lead to time-varying statistical CDIs, which determine the uplink activation probabilities $ p_i^t$ for heterogeneous devices.}
    \label{fig:heterogeneous mobile OTA setup}
    \vspace*{-1.5\baselineskip}
\end{figure}
Our contributions are three-fold:
\begin{itemize}
    \item We propose~\fedopi.
    The design of~\fedopi~relies on two algorithmic innovations:
    (i) channel inversion with gradient normalization,
    and (ii) postponed global model broadcast.
    Generally speaking, these algorithmic components
    (i) tackle energy-constrained transmission with partial device contributions over noisy MACs,
    and (ii) enable devices to evenly mix their information through implicit device-device gossip without the knowledge of fading models or time-varying $p_i^t$'s. 
    It is worth noting that no direct device-device communication is involved.
    \item We provide a non-asymptotic convergence bound.
    Despite dynamic, heterogeneous, and mobile wireless conditions,
    we show that
    \fedopi~converges at a rate of $\calO(1/\sqrt{T})$, 
    which is the best possible rate for any first-order method that has access to only stochastic gradients.
    \item We corroborate our analysis with experiments over dynamic wireless channel conditions on real-world datasets.
\end{itemize}
\noindent{\bf Organizations.}
The remainder of this paper is organized as follows.
\prettyref{sec: related work} reviews related work.
\prettyref{sec: problem formulation} formulates the problem.
\prettyref{sec: algorithm} introduces the design of~\fedopi.
\prettyref{sec: convergence analysis} analyzes the convergence of~\fedopi.
\prettyref{sec: numerical experiment} presents numerical experiments.
\prettyref{sec: conclusion} concludes this paper.

\noindent{\bf Notations.}
We introduce the notations that we will use throughout the paper.
$\norm{\bm{v}}$ defines its $l_2$ norm for a given vector $\bm{v}$.
$\fnorm{A}$ defines the Frobenius norm of a given matrix $A$, and 
$\lambda_2(A)$ denotes its second largest eigenvalue when $A$ is a square matrix.  
$\reals^d$ defines a $d$-dimensional vector space. %
$[m] \triangleq \{1, \cdots, m\}$. %
$\Indc$ is an all-one vector.
$\indc{\calE}$ is an indicator function of event $\calE$, \ie, $\indc{\calE}=1$ when the event $\calE$ occurs; $\indc{\calE}=0$ otherwise.
$\calF^t$ denotes the sigma-algebra generated by the randomness up to round $t$. 
$f(n) = \calO( g(n))$ if there exist 
$c_0>0$ and $n_0 \in \naturals$ such that $f(n)\le c_0 g(n)$, $\forall n \ge n_0$.

\section{Related Work}
\label{sec: related work}

\begin{table}[!t]
    \centering
    \resizebox{\linewidth}{!}{
    \begin{tabular}{ccc|cc|c}
    \toprule 
    \multirow{2}{*}{\bf
    Algorithms
    }
    & \multicolumn{2}{c|}{\bf Small-scale CDIs $h_{i}^{t}$} 
    & \multicolumn{2}{c|}{\bf Statistical CDIs $\Lambda_{i}^{t}$} 
    &
    \multirow{2}{*}{\bf Fading Model}
    \\
    \cline{2-5}
    & {\bf The PS} & {\bf The Devices}
    & {\bf The PS} & {\bf The Devices}
    \\
    \hline 
    {\bf Ours (\fedopi)}
    & 
    $\times$ 
    & 
    $\checkmark$
    & 
    $\bm{\times}$
    & 
    $\bm{\times}$
    &
    {\bf Arbitrary}
    \\
    SCA
    \cite{11475389,abrar2025non}
    & 
    $\times$ 
    & 
    $\checkmark$
    & 
    $\checkmark$ 
    & 
    $\times$
    &
    Rayleigh
    \\
    Vanilla OTA
    \cite{yang2020federated}
    & 
    $\checkmark$
    & 
    $\checkmark$
    & 
    $\times$ 
    & 
    $\times$ 
    &
    Arbitrary
    \\
    BB methods 
    \cite{zhu2019broadband}
    & 
    $\times$ 
    & 
    $\checkmark$
    & 
    $\checkmark$ 
    & 
    $\checkmark$ 
    &
    Rayleigh
    \\
    OPC/LCPC methods 
    \cite{cao2020optimized}
    & 
    $\checkmark$
    & 
    $\checkmark$
    & 
    $\checkmark$
    & 
    $\times$  
    &
    Rayleigh
    \\
    Air-FEEL
    \cite{cao2021optimized}
    & 
    $\times$  
    & 
    $\checkmark$
    & 
    $\checkmark$
    & 
    $\times$  
    &
    Block
    \\
    \bottomrule
    \end{tabular}
    }
    \caption{\small The knowledge of channel distribution information from key references.
    }
    \label{tab:cdi knowledge}
    \vspace*{-2.5\baselineskip}
\end{table}

The core challenge in OTA federated learning is to balance strict local energy constraints with the need to achieve an unbiased global update across noisy, heterogeneous MACs.
To ensure the parameter server receives a high-quality aggregated update, each device must meticulously calibrate its transmission message while strictly adhering to local energy constraints.
Earlier work \cite{amiri2020federated,sery2021over,mao2022charles,zhu2024over} lets devices contribute uniformly at random, thereby implicitly assuming identical path losses.
Yang et al. \cite{yang2020federated} ensure unbiased aggregation via channel inversion.
A strict per-device energy constraint forces transmit scaling to be dictated by the weakest device, thereby substantially amplifying noise in the aggregated signal.
Yao et al. in \cite{yao2024wireless} enforce unbiased aggregation 
by mandating truncated channel inversion and dropping out devices with deep fading.
Unfortunately, the path losses are naturally heterogeneous and may even be time-varying in expectation \cite{3gpp38901,iturm1225}, \eg, when devices are mobile.
In scenarios with heterogeneous path losses, 
Abrar and Michelusi in \cite{abrar2023analog} perform channel inversion at every device to achieve unbiased aggregation, but offload devices with weak channels to digital networks.
They focus on minimizing the error of gradient estimation from both digital and analog networks under a delay constraint, 
without convergence guarantees.
In contrast, we establish explicit convergence guarantees for~\fedopi.
Truncated channel inversion is a commonly adopted method to filter out devices with extreme channel conditions.
For example,
Zhu et al. in \cite{zhu2019broadband} schedule only a fixed subset of devices in each round according to their distances to the parameter server; however, their methods remain largely heuristic and inherently introduce bias due to incomplete device population.
Cao et al. in \cite{cao2020optimized} optimize joint transmit-power and denoising-factor to balance the aggregation-coefficient misalignment against receiver noise amplification, leaving learning performance uncharacterized and therefore still leading to a biased convergence.
Cao et al. in \cite{cao2021optimized} optimize the power control of devices to minimize a learning convergence bound, which admits two regimes.
The former admits a biased aggregation that will hurt performance; the latter recovers exact convergence, but each device's power budgets have to be relatively high.
Abrar and Michelusi in \cite{abrar2025non,11475389} design a successive convex approximation framework to simultaneously optimize the learning performance and power control of a biased learning objective under the Rayleigh fading model, while our~\fedopi~can correct bias and account for arbitrary fading models.
\prettyref{tab:cdi knowledge} illustrates the CDI knowledge required by the key references, both at the parameter server and devices. 
It is easy to check that our~\fedopi~requires the least amount of channel distribution information and allows the fading model to be arbitrary.

In federated learning without OTA transmission, dynamic device availability is typically modeled as a discrete network event, independent from wireless transmission errors.
The device updates are transmitted and decoded individually without errors.
The OTA federated learning, however, relies on the superposition property of analog waveforms; the absence of individual device updates fundamentally alters the aggregation protocol, rendering direct applications of their tools infeasible.
We defer the detailed discussions to Appendix A.

\section{Problem Formulation}
\label{sec: problem formulation}
\noindent{\bf Federated Learning System.}
A federated learning system deployed in a wireless network consists of one parameter server and $m$ devices that collaboratively minimize
\begin{align}
    \label{eq:global objective}
    \min_{\x \in \reals^d}
    F (\x) \triangleq
    \frac{1}{m} \sum_{i=1}^m F_i (\x),
\end{align}
where, at device $i$, $F_i (\x) = \Expect_{\xi_i \in \calD_i}[\ell_i (\x;\xi_i)]$ is the local objective, $\calD_i$ is the local distribution, $\xi_i$ is a stochastic sample that device $i$ has access to, and $\ell_i$ is the local loss function, which can be non-convex.

We are interested in solving~\eqref{eq:global objective} over dynamic wireless networks.
Specifically, in each round $t$, the communication uplink between the parameter server and the device $i$ is active with probability $p_i^t$, 
which can be simultaneously time-varying across rounds and heterogeneous across devices.
The activation probability $p_i^t$ will be formally derived next as a function of the fading distribution and a truncation threshold.

\begin{assumption}[Uncertainties in $p_i^t$'s]
\label{ass: threat model}
There exists a $c \in (0,1]$ such that $p_i^t \ge c$, where the activation events of uplink $i$ are independent across devices $i \in [m]$ and across rounds $t \in [T]$.    
\end{assumption}
\prettyref{ass: threat model} is a mild regularity condition on the environment.
For our algorithm to work, neither the parameter server nor the devices are required to know $c$.
Dynamic $p_i^t$'s cause the standard FedAvg to optimize an auxiliary objective.

\begin{remark}[Objective Inconsistency]
    \label{rmk: objective inconsistency}
    In a standard digital FedAvg implementation, each device $i$ computes a local update based on the most recent global model parameters and transmits it to the parameter server.
    When $p_i$'s are heterogeneous across devices but time-invariant across rounds, the learning process minimizes
    a biased objective \cite{wang2023lightweight,11475389}
    \begin{align}
    \label{eq:biased convergence aux}
    \tilde F(\x) = \frac{\sum_{i=1}^{m} p_i F_i (\x)}{\sum_{i=1}^m p_i},    
    \end{align}
    rather than the unbiased objective in \eqref{eq:global objective}.
    Depending on the heterogeneity 
    in $p_i$'s, the output of~FedAvg~can be significantly away from the true optimum of \eqref{eq:global objective}. 
    It is easy to see that time-varying $p_i^t$'s only exacerbate the bias.
\end{remark}
In our setting, $p_i^t$ is dictated by the wireless channel model and the over-the-air computation mechanism.
Without an algorithmic correction, this objective inconsistency is unavoidable.
We next make $p_i^t$'s explicit under channel models.

\noindent{\bf Communication Channel Model.}
Following the common practice of over-the-air federated learning literature~\cite{yang2020federated,xu2021learning,sery2021over}, we assume that the downlink channels, used by the parameter server to broadcast the global model to all devices in a wireless network, are reliable and error-free.

The uplink channel from device $i$ to the parameter server in round $t$ is subject to complex fading $h_{i}^{t} \in \mathbb{C}$ independent across devices and rounds. 
We define the statistical channel gain $\Lambda_{i}^{t} = \Expect [|h_{i}^{t}|^2] > 0$, which may change across rounds due to, \eg, device mobility.
Rayleigh fading, where $h_i^{t} \sim \calC\calN (0, \Lambda_{i}^t)$, is a canonical example. 
We assume $\Lambda_{i}^{t}$ is constant within each round but may change across rounds, and that neither the parameter server nor the devices have access to the statistical CDI $\Lambda_{i}^{t}$'s during training.

\noindent{\bf Over-the-Air Computation.}
We assume each device $i$ knows its own small-scale CDI $h_i^t$, which can be acquired via standard pilot-based channel estimation.
The statistical channel gain $\Lambda_{i}^{t}$ remains unknown.
Based on this, device $i$ uses a truncated channel inversion strategy: if the channel coefficient magnitude exceeds a threshold $\phi_i > 0$, the device transmits a pre-scaled signal; otherwise, no message will be transmitted.
Let $d$ define the model dimension, and $E_s$ the per-dimension energy budget.
Often, the transmitted message $\hat{\bu}_i^{t}$ is a multi-step local gradient sum. 
Formally, $\hat{\bu}_{i}^{t}$,
subject to an energy constraint $\norm{\hat{\bu}_i^t}^2 \le d E_s$ \cite{abrar2023analog,abrar2025non,11475389}, 
can be defined as 
\begin{align}
    \label{eq:transmit signal}
    \hat{\bu}_{i}^{t}
    \triangleq
    \begin{cases}
        \frac{\gamma \bu_{i}^{t}}{h_{i}^{t}},~&\text{if}~ \abth{h_{i}^{t}} \ge \phi_{i},\\
        \bm{0},~&\text{otherwise}.
    \end{cases}
\end{align}
Specifically, it requires $\|\bu_{i}^{t}\|^2\leq d E_s |h_i^t|^2 / \gamma^2$ to satisfy the energy constraint when $|h_i^t| \ge \phi_i$.
When the transmitted message $\bu_i^t$ is the gradient, prior work commonly addresses this via assuming a uniformly bounded gradient \cite{sery2021over,tegin2023federated,11475389,zhu2024over,abrar2025non}.
However, in practice, the value of such a bound may {\em not} be readily available prior to algorithm implementation. 
This motivates our algorithm design in~\prettyref{sec: algorithm}, which does {\em not} require its value as an input.

For the uplink of device $i$ in round $t$, we define its binary activation indicator function as $\delta_{i}^{t} \triangleq \Indc \{\abth{h_{i}^{t}} \ge \phi_{i}\}$.
The transmitted message $\hat{\bu}_{i}^{t}$ can therefore be simplified as $\hat{\bu}_{i}^{t} = \gamma \bu_{i}^{t} \delta_{i}^{t}/{h_{i}^{t}}$.
When the activation events are independent of the transmission message, 
the uplink activation probability is 
\begin{align}
    \label{eq:uplink avail prob}
    p_{i}^{t}
    &\triangleq
    \expect{\delta_{i}^{t}}
    =
    \prob{\abth{h_{i}^{t}} \ge \phi_{i}}
    \in (0,1].
\end{align}
The specific form of $p_i^t$ depends on the fading model, for example, under Rayleigh fading, $p_i^t = \exp (- {\phi_{i}^{2}}/{\Lambda_{i}^t})$.
The parameter server receives the superimposed signal over a noisy MAC and estimates the global aggregation as
\begin{align}
    \label{eq:noisy mac}
    \bm{y}^t
    = \frac{1}{\gamma} \pth{\sum_{i=1}^{m} 
    h_{i}^{t} \hat{\bu}_i^{t} + \z^t}
    = \sum_{i=1}^{m} 
    \delta_{i}^{t} \bu_i^{t} + \frac{\z^t}{\gamma},
\end{align}
where $z^t \sim \calN (0, \sigma_z^2\, \identity)$ is receiver noise.
The pre-scalar $\gamma$ controls a tradeoff between transmit power and effective receiver noise: 
a large $\gamma$ reduces the effective noise term $\z^t / \gamma$ at the cost of requiring higher transmission power, while a smaller $\gamma$ relaxes the power requirement at the expense of increased effective receiver noise.

When $\bu_i^t$ encodes device $i$'s local gradients, taking the expectation with respect to the randomness in $\delta_{i}^{t}$'s and the receiver noise $\bm{z}^t$, we have a weighted average of local gradients $\expects{\bm{y}^t}{\delta,\z} = \sum_{i=1}^m p_i^t \bu_{i}^{t}$, which leads to the biased objective in~\eqref{eq:biased convergence aux} in~\prettyref{rmk: objective inconsistency}.

\section{Algorithm:~\fedopi}
\label{sec: algorithm}
\subsection{Overview}
\label{sec: algorithm overview}
In this section, we propose {\bf Fed}erated {\bf O}ver-the-{\bf A}ir Implicit {\bf G}ossiping
(\fedopi), formally presented in~\prettyref{alg:fedopi online estimate}.
\fedopi~aims to tackle two primary challenges in over-the-air computation: 
(i) energy-constrained transmission over noisy MACs and
(ii) unbiased convergence without enforcing all device transmission.
To address these challenges,~\fedopi~introduces two key mechanisms:
(i) truncated channel inversion with gradient normalization that guarantees energy compliance and partial device contributions,
and (ii) postponed global model broadcast that enables a balanced information mixture through implicit device-device gossip.
Next, we provide algorithm design details.

In each round $t$, device $i$ performs $s$-steps of local stochastic gradient descent on its own model $\x_{i}^{t}$, which generates a local iterate $\tilde \x_i^{t}$ (lines 4-9). 
Let $\eta$ denote a learning rate, 
and $\tau_i(t)$ denote the most recent round prior to $t$ in which device $i$ successfully uploaded to the parameter server, with the convention $\tau_i(0) = -1$ for $i \in [m]$.
The multi-step local update is then defined as $\Delta_{i}^{t} = (\tilde \x_{i}^{t} - \x^{\tau_i(t)+1}) / \eta$, which characterizes the local cumulative unsent gradient innovation since the device's $\tau_i(t) + 1$.

Notably, each device continues to compute fresh gradients in each round regardless of its respective uplink conditions. 
Departing from the standard FedAvg algorithm, where each device updates using the most recent global model,~\fedopi~has each device update its own local model, which may be stale depending on its uplink conditions.
In the special case where $c=1$, the staleness vanishes, and~\fedopi~reduces to standard FedAvg.
It turns out that such updates help mitigate the bias caused by heterogeneous and time-varying uplinks. 
We show that the staleness is bounded in expectation 
by $1/c$ in~Lemma 7 in~Appendix D,
and there is no significant empirical slowdown observed in~\prettyref{fig:final results} in~\prettyref{sec: numerical experiment}.

\noindent{\bf Truncated channel inversion with gradient normalization (Energy Compliance).}
In fading MACs, weak channels might cause arbitrarily large amplification of the transmitted message when inverted. 
We adopt the truncated channel inversion~\eqref{eq:transmit signal} at device $i$ in round $t$ (lines 11 - 17). 
Specifically, device $i$ transmits only when $|h_{i}^{t}| \ge {\gamma}/{\sqrt{d E_s}}$. 
Consequently, in highly dynamic wireless channels, not all devices will transmit.
Let $\calA^t \triangleq \{i: |h_{i}^{t}| \ge {\gamma}/{\sqrt{d E_s}}\}$ denote the collection of devices at round $t$. 
For ease of exposition, we refer to devices in $\calA^t$ as devices with active uplinks in round $t$. 
The transmission message $\hat{\x}_{i}^{t}$ at device $i$ in round $t$ is designed as
\begin{align}
\label{eq: message design}
    \hat{\x}_{i}^{t}
    &=
    \begin{cases}
        \frac{\gamma}{h_{i}^{t}}
        \frac{\Delta_{i}^{t}}{B_t},
        &~\text{if}~\abth{h_{i}^{t}} \ge \frac{\gamma}{\sqrt{d E_s}}, \\
        \bm{0},&~\text{otherwise},
    \end{cases}
\end{align}
where $\gamma$ is a pre-scalar to determine the channel quality relative to the energy constraint $\sqrt{d E_s}$, and $B_t \triangleq \max_{i \in \calA^t} \norm{\Delta_i^t}$ is an online norm tracker to obtain the maximum norm of local updates across the devices with active uplinks. 

In practice, $\gamma$ can be tuned to obtain the best empirical performance.
The scalar $B_t$ can be determined through a bilateral lightweight coordination between the parameter server and the devices. Upon observing $h_{i}^{t}$, devices with active uplinks report their one-dimensional $\norm{\Delta_{i}^{t}}$ to the parameter server over high-quality digital uplinks such as FDMA, OFDMA, and TDMA \cite{3gpp38901,iturm1225}. 
The parameter server then computes $B_t = \max_{i \in \calA^t} \norm{\Delta_{i}^{t}}$ and sends it back to the devices in $\calA^t$. The additional communication overhead is $\calO (1)$ per device and therefore negligible relative to the $d$ dimensional model parameter transmission.

On the technical front, our design guarantees 
(i) energy constraint compliance for each device $i$ in $\calA^t$, and 
(ii) independent uplink activation events without requiring a known gradient bound.
For (i), observe that for any device $i$ in $\calA^t$ in round $t$, we have
\begin{align*}
    |h_{i}^{t}| \ge \frac{\gamma}{\sqrt{d E_s}}
    & \Leftrightarrow
    \frac{\gamma}{\abth{h_{i}^{t}}} \le \sqrt{d E_s}
    \Rightarrow
    \norm{\hat{\x}_{i}^{t}}^2 =
    \norm{\frac{\gamma}{h_{i}^{t}} \frac{\Delta_i^t}{B_i^t}}^2 
    \le
    \norm{\frac{\gamma}{h_{i}^{t}}}^2 
    \le d E_s.
\end{align*}
For (ii), since energy constraint is guaranteed by construction, the uplink activation event $\{i \in \calA^t\}$ depends only on the fading channel threshold $\abth{h_{i}^{t}} \ge \gamma / \sqrt{d E_s}$, and thus inherits the independence structures of the fading variable $h_{i}^{t}$ across devices and rounds.
Our design is also more general than most prior OTA works, which often require a uniformly bounded gradient norm as an input \cite{sery2021over,tegin2023federated,11475389,zhu2024over,abrar2025non}, a condition that is hard to obtain in advance of any algorithm implementations.

Upon receiving $\bm{y}^{t}$ as in~\eqref{eq:noisy mac},
the parameter server rescales the received noisy message by $B_t / (\gamma |\calA^t|)$,
thereby amplifying the receiver noise accordingly: the effective noise variance in the recovered update is $B_t^2 \sigma_z^2 / (\gamma^2 |\calA^t|^2)$ (line 18). 
$\gamma$ governs a fundamental tradeoff between uplink availability and signal quality.
Intuitively, $\gamma \approx 0$ allows all devices to satisfy the energy constraint and transmit their local updates, but the resulting noise amplification renders the gradient information unusable. Conversely, a large $\gamma$ suppresses noise but excludes devices with weak channels from contributions to the global training. 
We will characterize its impacts theoretically in~\prettyref{rmk: convergence rate} in~\prettyref{sec: convergence analysis} and empirically in~\prettyref{fig:impacts of gamma} in~\prettyref{sec: numerical experiment}.

\noindent{\bf Postponed Global Model Broadcast (Debias Updates).}
Unlike standard FedAvg, where the parameter server broadcasts the global model at the {\em beginning} of each round,~\fedopi~employs a {\em postponed broadcast}. 
In~\fedopi, the parameter server collects updates from devices with active uplinks, 
and then sends the updated global model $\x^{t+1}$ back to devices in $\calA^t$ at the {\em end} of each round (lines 22-23).

Note that the aggregated signal $\breve{\Delta}^t$ is computed relative to different reference models $\{\x^{\tau_{i}(t) + 1}\}_{i \in \calA^t}$, the parameter server must compensate for the reference models accordingly to recover the original updated model parameter $\tilde \x_{i}^{t}$ (line 19).
This is because the local cumulative unsent innovation is calculated relative to the updated global, which is broadcast back to the device at the {\em end} of its last successful transmission round $\tau_i(t)$.
Formally, we have
\begin{align*}
\x^{t+1}
= \eta \breve{\Delta}^t + 
\frac{1}{\abth{\calA^t}} \sum_{i \in \calA^t} \x^{\tau_i(t) + 1}     
=
\frac{1}{\abth{\calA^t}} \sum_{i \in \calA^t} \tilde \x_{i}^{t}
+
\frac{\eta B_t}{\gamma \abth{\calA^t}}
\bm{z}^{t},
\end{align*}
where $\bm{z}^t \sim \calN(\bm{0},\, \sigma_z^2\ \identity)$ is the receiver noise.
Notably, these reference models are originally generated by the parameter server; the server simply retrieves them from a checkpoint buffer $\calC$. 
Thus, we will not incur any additional communication overhead to obtain such computation anchors. 
In line 21, we remove outdated models from $\calC$ that will no longer be referenced by any devices. 
In the worst-case scenario where all devices hold different $\tau_i (t)$'s, the parameter server will incur an additional memory overhead of $\calO (m d)$.

\subsection{Implicit Gossiping}
\label{subsec: implicit gossiping}
Through postponed broadcast in Algorithm \ref{alg:fedopi online estimate}, devices in $\calA^t$ {\em implicitly} gossip their local updated models with each other through the parameter server \cite{xiang2023towards,xiang2024efficient,xiang2025empowering}.
Unlike standard gossiping protocols, no direct device-to-device communication is involved.

Gossip-type algorithms were originally used for device-to-device networks and are well-known for their resilience to reach average consensus despite communication failures and asynchronous information exchange
\cite{boyd2006randomized,Lynch:1996:DA:2821576,nedic2009distributed}. 
At a high level, the devices' local estimates are ultimately weighted evenly in the final model output. 
Formally, the information mixing matrix $W$ is constructed as a doubly stochastic matrix in~\eqref{eq: gossiping matrix}.
\begin{align}
\label{eq: gossiping matrix}
W_{ij}^{(t)} =
\begin{cases} 
\frac{1}{\abth{\calA^t}}, &~~~~ \text{if }i, j\in \calA^t; \\    
1, &~~~~ \text{if } i=j\,\text{and} \, i\notin \calA^t;\\ %
0, &~~~~ \text{otherwise}. 
\end{cases}
\end{align}
The information mixing matrix $W$ can be time-varying even in expectation due to the high uncertainties in $p_i^t$'s. Let $M^{(t)} \triangleq \Expect [(W^{(t)})^2]$, $\rho(t) \triangleq \lambda_2 (M^{(t)})$, $\allones \triangleq \Indc \Indc^{\top} / m$, and $\rho \triangleq \max_t \rho(t)$, where $\lambda_2(\cdot)$ denotes the second largest eigenvalue.
\prettyref{lmm: spectral norm main} characterizes the information mixing error, \ie, consensus error.
\begin{lemma}[\cite{nedic2017achieving,nedic2018network}]
\label{lmm: spectral norm main}
For any matrix $B \in \reals^{d \times m}$ that is independent of $W$ matrices, it holds that
\begin{align*}
\expect{\Bigg\|B \pth{\prod_{r=1}^{t} W^{(r)} - \allones}\Bigg\|_{\mathrm{F}}^2 ~\Big|~ B} \le \rho^t \fnorm{B}^2, 
\end{align*}
where $\fnorm{\cdot}$ denotes the Frobenius norm of a matrix. 
\end{lemma}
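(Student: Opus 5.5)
The argument runs row by row, using induction on $t$, the independence of the $W^{(r)}$ across rounds, and the conditional independence of $B$. Write $b_k^\top$ for the $k$-th row of $B$. Since $\fnorm{B(\prod_r W^{(r)} - \allones)}^2 = \sum_k \norm{(\prod_r W^{(r)} - \allones)^\top b_k}^2$, it suffices to show, for a fixed vector $v\in\reals^m$ independent of the $W$'s,
\[
\expect{\norm{(\Phi_t - \allones)^\top v}^2} \le \rho^t \norm{v}^2, \qquad \Phi_t \triangleq \prod_{r=1}^t W^{(r)} .
\]
Summing over the rows then gives the claim.

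The first step collects the structural facts. Each $W^{(r)}$ in~\eqref{eq: gossiping matrix} is symmetric and doubly stochastic, and it is in fact a projection: averaging over the block $\calA^r$ and identity elsewhere gives $(W^{(r)})^2 = W^{(r)}$. Hence $W^{(r)}\allones = \allones W^{(r)} = \allones$ and $\allones^2 = \allones$. These give
\[
\Phi_t - \allones = \prod_{r=1}^t \pth{W^{(r)} - \allones}.
\]
Also $W^{(r)}\Indc = \Indc$, so $M^{(r)} = \Expect[(W^{(r)})^\top W^{(r)}]$ is symmetric, positive semidefinite and stochastic. Its top eigenvalue is $1$ with eigenvector $\Indc$, and $\rho(r) = \lambda_2(M^{(r)})$ is its largest eigenvalue on $\Indc^\perp$.

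For the induction, set $u_{t-1} \triangleq (\Phi_{t-1} - \allones)^\top v$, which lies in $\Indc^\perp$ because $\Indc^\top(\Phi_{t-1} - \allones)^\top = \bm{0}$. Then
\[
(\Phi_t - \allones)^\top v = (W^{(t)} - \allones)^\top u_{t-1} = W^{(t)} u_{t-1},
\]
using symmetry and $\allones u_{t-1} = 0$. The vector $u_{t-1}$ depends only on $v$ and $W^{(1)},\dots,W^{(t-1)}$. By \prettyref{ass: threat model}, the activation sets are independent across rounds and of $B$, so $W^{(t)}$ is independent of $u_{t-1}$. Conditioning on $u_{t-1}$ gives
\[
\expect{\norm{W^{(t)} u_{t-1}}^2 \mid u_{t-1}} = u_{t-1}^\top M^{(t)} u_{t-1} \le \rho(t)\norm{u_{t-1}}^2 \le \rho\norm{u_{t-1}}^2,
\]
by the Rayleigh quotient of the symmetric $M^{(t)}$ restricted to $\Indc^\perp$. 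Taking total expectation and iterating yields $\rho^t\norm{(I-\allones)v}^2 \le \rho^t\norm{v}^2$.

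The main obstacle is the conditioning bookkeeping. The lemma conditions on $B$, and in later use $B$ will be built from past iterates. We must ensure that $W^{(t)}$ is independent of both $u_{t-1}$ and $B$ jointly, not merely of each separately. This follows from the round-wise independence of the fading indicators in \prettyref{ass: threat model}, and from the fact that activation does not depend on the messages thanks to the normalization design. I would state this explicitly before running the tower property. A minor point is that $\rho<1$ is not needed for the inequality itself; it matters only for later use and follows from $c>0$.
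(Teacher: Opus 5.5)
Your proof is correct. The paper gives no proof of its own for this lemma and only cites Nedi\'c et al. Your argument is the standard one for this kind of random-mixing bound: factor $\prod_r W^{(r)} - \allones = \prod_r (W^{(r)} - \allones)$, peel off one mixing matrix at a time using independence across rounds and from $B$, and bound the Rayleigh quotient of $M^{(t)} = \mathbb{E}[(W^{(t)})^2]$ on $\mathbf{1}^{\perp}$ by $\lambda_2(M^{(t)}) \le \rho$. Your extra observations, that $W^{(r)}$ is idempotent and $M^{(t)}$ is positive semidefinite, are true but not needed for the bound.
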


\begin{lemma}[Ergodicity \cite{xiang2023towards,xiang2024efficient,xiang2025empowering}]
\label{lmm: ergodicity}
Suppose that~\prettyref{ass: threat model} holds.
For all $t\ge 1$, it holds that 
$\rho = \max_{t} \rho(t) \le 1 - \frac{c^4\qth{1-\pth{1-c}^m}^2}{8}$.
\end{lemma}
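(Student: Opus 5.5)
The plan is to exploit the special structure of the mixing matrix in \eqref{eq: gossiping matrix}. I will show that $I-M^{(t)}$ dominates a scaled Laplacian of the complete graph, which yields a bound at least as strong as the one claimed. Fix a round $t$ and write $\calA=\calA^t$. The first observation is that $W^{(t)}$ is an orthogonal projection. On the coordinates in $\calA$ it acts as the averaging matrix $\frac{1}{|\calA|}\Indc_{\calA}\Indc_{\calA}^\top$, and on the complement it is the identity. Both blocks are symmetric and idempotent, so $(W^{(t)})^2=W^{(t)}$; this also holds trivially when $\calA=\emptyset$, where $W^{(t)}=I$. Consequently
\[
M^{(t)}=\Expect\big[(W^{(t)})^2\big]=\Expect\big[W^{(t)}\big],
\]
which is a convex combination of orthogonal projections. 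Hence $M^{(t)}$ is symmetric positive semidefinite with spectrum in $[0,1]$. Moreover $M^{(t)}\Indc=\Indc$, so $\Indc$ is an eigenvector for the top eigenvalue $1$, and $\rho(t)=\max_{v\perp \Indc,\|v\|=1} v^\top M^{(t)} v$.

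Next I will write $I-W^{(t)}$ as a weighted graph Laplacian. On $\calA$ we have
\[
I_{\calA}-\tfrac{1}{|\calA|}\Indc_{\calA}\Indc_{\calA}^\top=\tfrac{1}{|\calA|}\sum_{\{i,j\}\subseteq\calA}(e_i-e_j)(e_i-e_j)^\top,
\]
which is the Laplacian of the complete graph on $\calA$ divided by $|\calA|$. Taking expectations gives
\[
I-M^{(t)}=\sum_{i<j} w_{ij}\,(e_i-e_j)(e_i-e_j)^\top, \qquad w_{ij}=\Expect\Big[\frac{\indc{i,j\in\calA}}{|\calA|}\Big].
\]
Since $|\calA|\le m$, and by \prettyref{ass: threat model} the events $\{i\in\calA\}$ and $\{j\in\calA\}$ are independent with probabilities at least $c$, we get $w_{ij}\ge c^2/m$. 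Therefore
\[
I-M^{(t)}\succeq \frac{c^2}{m}\,L_{K_m}, \qquad L_{K_m}=mI-\Indc\Indc^\top.
\]
For $v\perp\Indc$ we have $v^\top L_{K_m}v=m\|v\|^2$, so $v^\top M^{(t)}v\le(1-c^2)\|v\|^2$ and hence $\rho(t)\le 1-c^2$.

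The final step is the comparison with the stated constant. Since $c\in(0,1]$ and $1-(1-c)^m\in(0,1]$, we have $\frac{c^4[1-(1-c)^m]^2}{8}\le c^2$, so $1-c^2\le 1-\frac{c^4[1-(1-c)^m]^2}{8}$. Taking the maximum over $t$ gives the claim.

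The main point requiring care is the identification of $\rho(t)$ with the quadratic form restricted to $\Indc^\perp$. This relies on two facts: $M^{(t)}$ is PSD, so no negative eigenvalue exceeds the others in magnitude, and $\Indc$ is a top eigenvector. The idempotence argument handles both. The degenerate event $\calA=\emptyset$ contributes $0$ to $I-M^{(t)}$ and is therefore harmless. If the authors intend to bound $\rho$ as a spectral norm rather than an eigenvalue, the same PSD argument shows the two coincide.
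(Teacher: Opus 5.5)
Your proof is correct, and it takes a different and sharper route than the paper's.

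The paper's route is as follows. It computes the entries of $(W^{(t)})^2$ and lower bounds every entry of $M^{(t)}$ by $\frac{c^2}{m}[1-(1-c)^m]$, conditioning on $\mathcal{A}^t\neq\emptyset$. It then treats $M^{(t)}$ as the transition matrix of an irreducible, aperiodic, reversible Markov chain with uniform stationary distribution. It lower bounds the conductance by $c^2[1-(1-c)^m]/2$ and applies Cheeger's inequality $\lambda_2\le 1-\Phi^2/2$. The squaring in Cheeger is exactly where the $c^4/8$ comes from.

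Your route avoids the Markov-chain machinery. You write $I-M^{(t)}$ as a weighted Laplacian with edge weights $M^{(t)}_{ij}\ge c^2/m$ and compare it in the positive semidefinite order with the complete-graph Laplacian. Courant--Fischer then gives the spectral gap directly, so $\rho\le 1-c^2$. This is linear rather than quartic in $c$ and has no $[1-(1-c)^m]$ factor. You correctly note that the indicator $\mathbf{1}\{i,j\in\mathcal{A}^t\}$ already forces $\mathcal{A}^t\neq\emptyset$, so no conditioning is needed.

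Your idempotence observation is a convenience rather than a necessity. Since $W^{(t)}$ is symmetric and doubly stochastic, $M^{(t)}=\mathbf{E}[(W^{(t)})^\top W^{(t)}]$ is automatically symmetric, positive semidefinite and doubly stochastic. So $I-M^{(t)}$ is always the Laplacian with weights $M^{(t)}_{ij}$. In particular, the paper's own entrywise bound fed into your comparison, instead of into Cheeger, would already give $1-c^2[1-(1-c)^m]$.

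Conductance plus Cheeger is the generic tool for ergodic reversible chains, which is presumably why it was inherited from the cited prior work. For this complete-graph-like averaging structure, however, it is strictly lossy. Your bound would also tighten the $(1-\sqrt{\rho})^{-2}$ factors that propagate into the consensus-error lemma and the main theorem.
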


\begin{algorithm}[!t]
    \textbf{Input:} 
    Total number of rounds: $T$, 
    model initialization: $\x^0$, 
    number of local steps: $s$, 
    learning rate: $\eta$,
    pre-scalar $\gamma$, 
    and checkpoint buffer $\calC = \emptyset$\;
    \textbf{Initialization:} 
    $\x_{i}^{0} \gets \x^{0}$,
    $\tau_i(0) \gets -1$, $\forall i \in [m]$,
    $\calC = \{(0, \x^{0})\}$
    \;

    \For{$t=0, \cdots, T-1$}
    {
        \tcc{\color{blue} On all devices.}
        \For{$i\in [m]$}
        {
            Draw a fresh sample $\xi_{i}^{t}$\;
            $\x_i^{(t,0)} \gets \x_i^{t}$\;
            \For{$k=0, \cdots, s-1$}
            {
                $\x_i^{(t, k+1)} \gets \x_i^{(t, k)} - \eta \nabla \ell_i(\x_i^{(t, k)}; \xi_{i}^t)$\;
            }
            $\tilde{\x}_i^{t} \gets \x_i^{(t,s)}$\;
            Compute local cumulative unsent innovation $\bm{\Delta}_i^t \gets (\tilde{\x}_i^t - \x^{\tau_i(t) + 1}) / \eta$
            \;
        }

        \tcc{\color{olive} Decide active uplinks.}
        $\calA^t \gets \sth{i \in [m] \mid \abth{h_{i}^{t}} \ge \frac{\gamma}{\sqrt{d E_s} }}$\;

        \If{$\calA^t \neq \emptyset$}{
            \tcc{\color{blue} On the devices with active uplinks.}
            \For{$i \in \calA^t$}
            {
                Report $\norm{\bm{\Delta}_i^t}$ to the parameter server\;
            }
            \tcc{\color{blue} On the parameter server.}
            $B_t \gets \max_{i \in \calA^t} \norm{\bm{\Delta}_i^t}$\;

            \For{$i \in \calA^t$}
            {
                $\hat{\x}_i^{t} \gets \dfrac{\gamma}{h_{i}^{t}} \dfrac{\bm{\Delta}_i^t}{B_t}$\;
            }

            \tcc{\color{blue} On the parameter server.} 
            $\breve{\bm{\Delta}}^{t} \gets \dfrac{B_t}{\gamma \abth{\calA^t}}
                \pth{\bm{z}^t + 
                \sum_{i \in \calA^t} h_i^t \hat{\x}_i^t}$,
            where $\bm{z}^t \sim \calN(\bm{0},\, \sigma_z^2\ \identity)$\;

            \tcc{\color{olive} Model reconstruction.}
            ${\x}^{t+1} \gets \eta \breve{\bm{\Delta}}^t +
                \dfrac{1}{|\calA^t|} \sum_{i \in \calA^t} \x^{\tau_i(t) + 1}$\;

            \tcc{\color{olive} Local checkpoint buffer update.}
            $\calC \gets \calC \cup \{(t+1,\, \x^{t+1})\}$
            \tcp*{store latest model}
            $\calC \gets \{(r,\, \x^r) \in \calC \mid r \geq \min_{j \in [m]} \tau_j(t+1) + 1\}$
            \tcp*{prune outdated models}
            
            \tcc{\color{olive} Postponed broadcast.}
            \For{$i \in \calA^t$}
            {
                $\x_i^{t+1} \gets \x^{t+1}$,
               $\tau_i(t+1) \gets t$\;
            }
            \lElse{
            $\x_i^{t+1} \gets \tilde{\x}_i^t$,
            $\tau_i(t+1) \gets \tau_i(t)$
            }
        }
        \Else{$\x^{t+1} \gets \x^t$\;
        \lFor{$i\in [m]$}{$\x_i^{t+1} \gets \tilde{\x}_i^t$,
            $\tau_i(t+1) \gets \tau_i(t)$}} 
        }
    \caption{\fedopi}
    \label{alg:fedopi online estimate}
\end{algorithm}

\section{Convergence Analysis}
\label{sec: convergence analysis}
In this section, we analyze the convergence of~\fedopi.
All missing proofs and intermediate results are deferred to Appendix.

\subsection{Assumptions and Preliminaries}
\label{sec: assumptions}
We first introduce regularity assumptions and preliminary results that are used in our convergence analysis. 
Let $\calF^t$ denote the natural filtration generated by the randomness up to round $t$.
\begin{assumption}[Smoothness]
\label{ass: 2 smmothness}
For each device node $i\in [m]$, its local stochastic gradient function
$\nabla \ell_{i}(\x, \xi)$ is $L_i$-Lipschitz, \ie, for any stochastic sample $\xi$,
$$
\norm{\nabla \ell_{i}(\x_1; \xi)-\nabla \ell_{i}(\x_2; \xi)}
\le L_i \norm{\x_1-\x_2}
\le L \norm{\x_1-\x_2}
, ~~~~ \forall \x_1, \x_2,
$$ 
where $L \triangleq \max\limits_{i\in[m]} L_i$. 
\end{assumption}
\begin{assumption}[Bounded Variance]
\label{ass: bounded variance client-wise}
Stochastic gradients at each device node $i\in[m]$ are unbiased estimates of the true gradient of the local objectives, i.e., 
\[
\expect{\nabla \ell_i(\x_i^t; \xi_{i}^{t}) \mid \calF^{t}}=\nabla F_i(\x_i^t),  
\]
and the variance of stochastic gradients at each device node $i\in[m]$ is uniformly bounded, i.e., 
\[
\expect{\norm{\nabla \ell_i(\x_{i}^{t}; \xi_{i}^{t})-\nabla F_i(\x_{i}^{t})}^2 \mid \calF^t}\le\sigma^2, ~~~ \forall ~ \x \in \reals^d. 
\]
\end{assumption}
Notably, in Algorithm \ref{alg:fedopi online estimate}, although each round consists of $s$ local steps, only a single sample $\xi_i^t$ is drawn. The unbiasedness in \prettyref{ass: bounded variance client-wise} is imposed with respect to $\x_i^t$, {\bf rather than} $\x_i^{(t,k)}$ for all $k=0, ... , s-1$. 
In other words, Assumption \ref{ass: bounded variance client-wise} is more relaxed than the assumption of unbiased stochastic gradients at each local step. 
To account for this relaxation, following \cite{su2023federated,xiang2023towards,xiang2025empowering}, we bound the cumulative deviation of $\nabla \ell_i(\x_i^{(t,k)}; \xi_i^t)$ from $\nabla \ell_i(\x_i^{t}; \xi_i^t)$; details can be seen in Lemma \ref{lmm: local step perturbation}. 
\begin{lemma}[\cite{su2023federated,xiang2023towards,xiang2025empowering}]
\label{lmm: local step perturbation}
Suppose~\prettyref{ass: 2 smmothness} holds. When $s> 1$,   
\[
\|\sum_{k=0}^{s-1} 
[\nabla \ell_i (\x_i^{\pth{t,k}};\xi_i^{t}) - \nabla \ell_i (\x_i^t; \xi_i^{t})]\|_2
\le \frac{\kappa \eta s(s-1) L_i}{2} \norm{\nabla \ell_{i}(\x_i^t;\xi_i^{t})},
\]
where
$
\kappa \triangleq \max_{i}\frac{2 \pth{(1+\eta L_i)^s - 1- s \eta L_i}}{
s(s-1)
\pth{\eta L_i}^2} 
$
and non-decreases with respect to $\eta>0$.
When $s=1$, $\|\nabla \ell_i (\x_i^{\pth{t,0}};\xi_i^{t}) - \nabla \ell_i (\x_i^t; \xi_i^{t})\|_2 =0$.  
\end{lemma}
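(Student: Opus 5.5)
We prove the statement by controlling how the local iterates drift away from $\x_i^t$ within a round, and then summing that drift over the $s$ steps. Fix $i$ and $t$, and abbreviate $g_k \triangleq \nabla \ell_i(\x_i^{(t,k)};\xi_i^t)$ and $g_0 = \nabla \ell_i(\x_i^t;\xi_i^t)$. The case $s=1$ is trivial because the only summand is $k=0$ and $\x_i^{(t,0)}=\x_i^t$. For $s>1$, the key observation is that the same sample $\xi_i^t$ is used at every local step. Hence~\prettyref{ass: 2 smmothness} applies pathwise to the deterministic map $\x\mapsto\nabla\ell_i(\x;\xi_i^t)$, and no expectation is needed.

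The first step is a recursive bound on the deviation $e_k \triangleq \norm{g_k - g_0}$. By the update rule, $\x_i^{(t,k)} - \x_i^t = -\eta\sum_{r=0}^{k-1} g_r$, so Lipschitzness gives
\begin{align*}
e_k \le L_i\eta \sum_{r=0}^{k-1}\norm{g_r} \le L_i\eta\sum_{r=0}^{k-1}\pth{\norm{g_0}+e_r}.
\end{align*}
Let $S_k \triangleq \sum_{r=0}^{k-1}(\norm{g_0}+e_r)$. Then $S_{k+1} = S_k + \norm{g_0} + e_k \le (1+\eta L_i) S_k + \norm{g_0}$ with $S_0=0$. A discrete Grönwall induction yields $S_k \le \norm{g_0}\,\frac{(1+\eta L_i)^k - 1}{\eta L_i}$, and therefore $e_k \le \norm{g_0}\pth{(1+\eta L_i)^k - 1}$.

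The second step sums these bounds with the triangle inequality:
\begin{align*}
\Big\|\sum_{k=0}^{s-1}(g_k-g_0)\Big\| \le \sum_{k=0}^{s-1} e_k \le \norm{g_0}\pth{\frac{(1+\eta L_i)^s-1}{\eta L_i} - s}.
\end{align*}
The last expression equals $\norm{g_0}\,\frac{(1+\eta L_i)^s - 1 - s\eta L_i}{\eta L_i}$. Writing this as $\frac{\eta s(s-1)L_i}{2}\cdot\frac{2((1+\eta L_i)^s-1-s\eta L_i)}{s(s-1)(\eta L_i)^2}\,\norm{g_0}$ and bounding the second factor by its maximum over $i$ gives exactly $\kappa$. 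This establishes the claimed inequality.

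The third step is the monotonicity claim. We expand
\begin{align*}
\frac{(1+x)^s-1-sx}{x^2} = \sum_{j=2}^{s}\binom{s}{j}x^{j-2},
\end{align*}
which is a polynomial with nonnegative coefficients in $x=\eta L_i$. Each such function is nondecreasing in $\eta>0$, and a maximum of nondecreasing functions is nondecreasing, so $\kappa$ is nondecreasing in $\eta$. The expansion also shows $\kappa\ge 1$, with equality as $\eta\to0$.

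The main obstacle is the first step, specifically getting the geometric-sum recursion to close with the exact constant. The tempting approach of bounding each $\norm{g_r}$ crudely would lose the precise form $(1+\eta L_i)^s-1-s\eta L_i$. To avoid this, we track the cumulative quantity $S_k$ rather than the individual terms $e_k$, which is what the Grönwall induction above does.
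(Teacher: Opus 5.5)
Your proof is correct: the pathwise Lipschitz step, the Grönwall-type recursion giving $\|\nabla \ell_i(\x_i^{(t,k)};\xi_i^t)-\nabla \ell_i(\x_i^t;\xi_i^t)\|\le\bigl((1+\eta L_i)^k-1\bigr)\|\nabla\ell_i(\x_i^t;\xi_i^t)\|$, the geometric-series summation, and the binomial-expansion argument for monotonicity of $\kappa$ are all sound. The paper does not reprove this lemma but imports it from the cited works, where it is obtained by essentially this same per-step induction followed by summation over $k$.
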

It is easy to check that $\kappa$ remains bounded from above as long as the learning rate $\eta$ is bounded. 
For example, when $\eta \le \epsilon/ (s L)$, we have $\kappa \le 2 (e^\epsilon - 1 -\epsilon) / \epsilon^2$.
That is, we can treat $\kappa$ as a {\em constant} when $\eta$ is sufficiently small.

\begin{assumption}[Bounded Inter-device Heterogeneity]
\label{ass: bounded similarity}
We say that local objective function $F_i$'s satisfy $(\beta,\zeta)$-bounded dissimilarity condition for $\beta,\zeta \ge 0$ if
\begin{align}
\label{eq: BG condition}
\frac{1}{m}\sum_{i=1}^m \norm{\nabla F_i(\x)- \nabla F(\x)}^2 \le \beta^2 \norm{\nabla F(\x)}^2+ \zeta^2, ~~~ \forall ~ \x \in \reals^d.
\end{align} 
\end{assumption}

Assumptions~\ref{ass: 2 smmothness}, and~\ref{ass: bounded variance client-wise} are standard in federated learning analysis \cite{karimireddy2020scaffold,li2020federated,yuan2022}.
Assumption \ref{ass: bounded similarity} captures the heterogeneity across different users.
When devices have identical local datasets, 
it holds for~\eqref{eq: BG condition} that $\beta = \zeta = 0$ since $F_i = F_j$.

\begin{assumption}[Bounded True Gradient]
\label{ass: bounded true gradient}
The full-batch gradient is uniformly bounded, \ie, 
$\norm{\nabla F_i (\x)} \le G_{\max}$, $\forall \x \in \reals^d$.
\end{assumption}
\prettyref{ass: bounded true gradient} assumes that the true gradients are bounded from above. 
It is more relaxed than those commonly adopted in the OTA literature \cite{sery2021over,tegin2023federated,11475389,zhu2024over,abrar2025non}, where stochastic gradients are typically required to be uniformly bounded. We conjecture that the gradient boundedness assumption is primarily imposed for technical convenience, and we leave its relaxation for future work. 
Notably, it is easy to check that implementing our algorithm does not require $G_{\max}$ as an input.

\subsection{Convergence Results}
\label{sec: convergence results}
Following the widely adopted roadmap of gossip-type learning algorithm analysis
\cite{nedic2009distributed,nedic2017achieving,nedic2018network,wang2022matcha},
we focus on the convergence of $\bar{\x}^t \triangleq \frac{1}{m}\sum_{i=1}^m \x_i^t$,  
while our numerical experiments validate the $\x^t$. 
\begin{lemma}[Descent Lemma]
\label{lmm: descent lemma}
Suppose Assumptions \ref{ass: 2 smmothness}, \ref{ass: bounded variance client-wise}, and \ref{ass: bounded similarity} hold.
Choose a learning rate $\eta$ such that
$\eta \le \frac{1}{108s L (\kappa + 1) (\beta^2+1)}$.
It holds that
\begin{align*}
&\expect{F(\bar{\x}^{t+1})  -  F(\bar{\x}^{t}) \mid \calF^{t}}
\le -\frac{\eta s }{3} \norm{\nabla F(\bar{\x}^t)}^2 
+
\underbrace{\frac{\eta^2  L d \sigma_z^2}{\gamma^2 m^2}\expect{B_t^2 \mid \calF^t}}_{\text{Receiver Noise}}
\\
&\qquad  \qquad
+ 6 \eta^2 s^2  L\pth{\zeta^2+\sigma^2}\pth{1+\kappa^2 L^2}
+  \frac{\eta s L^2 }{m} \sum_{i=1}^m 
\underbrace{\norm{\x_i^t - \bar{\x}^t}^2}_{\text{Consensus Error}}. 
\end{align*} 
\end{lemma}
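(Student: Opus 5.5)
We first derive a clean recursion for the device-average $\bar{\x}^t$. Stack the local models as $X^t = [\x_1^t,\dots,\x_m^t]$ and the local iterates as $\tilde X^t$. By the model-reconstruction identity in~\prettyref{sec: algorithm overview}, every $i \in \calA^t$ receives $\x^{t+1} = \frac{1}{|\calA^t|}\sum_{j\in\calA^t}\tilde\x_j^t + \frac{\eta B_t}{\gamma|\calA^t|}\z^t$, and every $i\notin\calA^t$ keeps $\tilde \x_i^t$. Hence
\[
X^{t+1} = \tilde X^t W^{(t)} + \frac{\eta B_t}{\gamma |\calA^t|}\z^t \Indc_{\calA^t}^\top .
\]
Since $W^{(t)}$ in~\eqref{eq: gossiping matrix} is doubly stochastic, averaging the columns gives
\[
\bar\x^{t+1} = \bar\x^t - \frac{\eta}{m}\sum_{i=1}^m\sum_{k=0}^{s-1}\nabla\ell_i(\x_i^{(t,k)};\xi_i^t) + \frac{\eta B_t}{\gamma m}\z^t .
\]
The mixing therefore never biases the average, and the only extra term is the noise. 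Its coefficient is $\frac{\eta B_t|\calA^t|}{\gamma|\calA^t| m}$, which simplifies to the factor above.

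Next apply $L$-smoothness of $F$:
\[
F(\bar\x^{t+1}) \le F(\bar\x^t) + \langle\nabla F(\bar\x^t),\bar\x^{t+1}-\bar\x^t\rangle + \tfrac L2\|\bar\x^{t+1}-\bar\x^t\|^2 .
\]
Take $\Expect[\cdot\mid\calF^t]$. Conditionally on $\calF^t$, $\calA^t$ and $B_t$, the noise $\z^t$ has zero mean and is independent of everything else, so it drops out of the inner product. In the quadratic term its cross term vanishes, and the noise contributes $\frac{L}{2}\frac{\eta^2 d\sigma_z^2}{\gamma^2 m^2}\Expect[B_t^2\mid\calF^t]$. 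This is at most the receiver-noise term, with a factor of 2 to spare, which we use to write $\|a+b\|^2\le 2\|a\|^2+2\|b\|^2$ loosely.

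For the gradient part, write
\[
\sum_k\nabla\ell_i(\x_i^{(t,k)};\xi_i^t) = s\nabla\ell_i(\x_i^t;\xi_i^t) + e_i^t,
\]
where $\|e_i^t\|\le \frac{\kappa\eta s(s-1)L}{2}\|\nabla\ell_i(\x_i^t;\xi_i^t)\|$ by~\prettyref{lmm: local step perturbation}. By~\prettyref{ass: bounded variance client-wise}, the conditional mean of $s\nabla\ell_i(\x_i^t;\xi_i^t)$ is $s\nabla F_i(\x_i^t)$. The inner product $-\eta s\langle\nabla F(\bar\x^t),\frac1m\sum_i\nabla F_i(\x_i^t)\rangle$ is handled with the polarization identity. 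This yields $-\frac{\eta s}{2}\|\nabla F(\bar\x^t)\|^2 - \frac{\eta s}{2}\|\frac1m\sum_i \nabla F_i(\x_i^t)\|^2$ plus $\frac{\eta s}{2}\|\frac1m\sum_i(\nabla F_i(\x_i^t)-\nabla F_i(\bar\x^t))\|^2$, and the last term is at most $\frac{\eta s L^2}{2m}\sum_i\|\x_i^t-\bar\x^t\|^2$ by~\prettyref{ass: 2 smmothness} and Jensen.

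The perturbation $e_i^t$ enters the inner product through Young's inequality, $\langle a,b\rangle\le\frac{1}{12}\|a\|^2+3\|b\|^2$. This costs a small fraction of $\eta s\|\nabla F(\bar\x^t)\|^2$ plus $O(\eta^3 s^3\kappa^2L^2)\frac1m\sum_i\Expect\|\nabla\ell_i(\x_i^t;\xi_i^t)\|^2$.

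The step that needs care is bounding the second moment $\frac1m\sum_i\Expect\|\nabla\ell_i(\x_i^t;\xi_i^t)\|^2$. We decompose it as at most
\[
3\sigma^2 + \frac{3L^2}{m}\sum_i\|\x_i^t-\bar\x^t\|^2 + \frac3m\sum_i\|\nabla F_i(\bar\x^t)\|^2,
\]
and use~\prettyref{ass: bounded similarity} to bound the last average by $(\beta^2+1)\|\nabla F(\bar\x^t)\|^2+\zeta^2$ up to a factor of 2. The same decomposition, multiplied by $\eta^2s^2(1+\kappa^2\eta^2s^2L^2)L$, controls the quadratic term $\frac L2\|\cdot\|^2$.

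Every resulting $\|\nabla F(\bar\x^t)\|^2$ coefficient carries an extra factor of order $\eta sL(\kappa+1)(\beta^2+1)$. The step-size condition $\eta\le 1/(108sL(\kappa+1)(\beta^2+1))$ makes these factors small enough that, together with the Young cost, they eat at most $\frac{\eta s}{6}$ of the $\frac{\eta s}{2}$, leaving $-\frac{\eta s}{3}\|\nabla F(\bar\x^t)\|^2$. The same condition makes the consensus coefficients coming from these terms at most $\frac{\eta s L^2}{2m}$, so together they total $\frac{\eta sL^2}{m}$. The remaining constant terms collect into $6\eta^2s^2L(\zeta^2+\sigma^2)(1+\kappa^2L^2)$, using $\eta s\le 1$ to absorb the stray $\eta^2s^2$ factors inside the $\kappa^2$ term.

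The main obstacle is this constant bookkeeping, so that each of the three absorptions fits within the stated step size. A secondary subtlety is that $B_t$ depends on the stochastic samples, so the noise term cannot be bounded deterministically. This is why it is left as $\Expect[B_t^2\mid\calF^t]$, and why we only need the independence of $\z^t$.
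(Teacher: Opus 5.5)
Your proposal is essentially the paper's own proof. It uses the same unrolling of $\bar{\x}^{t+1}$ through the doubly stochastic $W^{(t)}$, $L$-smoothness with the noise inner product removed by independence of $\z^t$, the polarization identity for the $s\nabla F_i(\x_i^t)$ part, Young's inequality with Lemma~\ref{lmm: local step perturbation} for the local-step perturbation, a bias--variance plus Assumption~\ref{ass: bounded similarity} bound on the second moment, and absorption of everything via the step-size condition. Your small deviations make the bookkeeping easier rather than changing the argument: you keep the exact $\tfrac{L}{2}$ because the noise cross term vanishes, where the paper splits with Jensen, and you use the scale-consistent Young weight $\tfrac{1}{12}\eta s$ instead of the paper's $\tfrac{\eta^2 s^2}{2}$. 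Also, with the exact decomposition $\Expect\|\nabla\ell_i(\x_i^t;\xi_i^t)\|^2\le\sigma^2+\|\nabla F_i(\x_i^t)\|^2$ and $\kappa\eta sL\le 1/108$, you do not need your extra appeal to $\eta s\le 1$, which the stated step size does not imply when $L$ is small.
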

\begin{proof}[Proof Sketch]
Unroll one iteration of $\bar{\x}^{t+1}$, it holds that    
\begin{small}
\begin{align}
    \notag
    \bar{\x}^{t+1} - \bar{\x}^{t}
    &=
    \frac{1}{m} 
    \sum_{i \in \calA^t}
    \pth{
    \frac{1}{\abth{\calA^t}} 
    \pth{\frac{\eta B_t \bm{z}^t}{\gamma}
    +
    \sum_{i\in \calA^t} \tilde \x_i^{t} 
    }
    - \x_i^t} 
    +
    \frac{1}{m} 
    \sum_{i \not \in \calA^t}
    \pth{
    \tilde \x_i^{t}
    - \x_i^t
    } \\
    \label{eq: unroll one round}
    &=
    -
    \frac{\eta}{m} 
    \sum_{i=1}^m
    \sum_{r=0}^{s-1}
    \nabla \ell_{i}(\x_{i}^{(t,r)})
    +
    \frac{\eta }{m} \frac{B_t \bm{z}^t}{\gamma}
    .
\end{align}
\end{small}
Via~\prettyref{ass: 2 smmothness}, we have
\begin{small}
\begin{align*}
&F(\bar{\x}^{t+1})  -  F(\bar{\x}^{t}) 
\le 
\underbrace{\iprod{\nabla F(\bar{\x}^{t})}{- \frac{\eta }{m} 
\sum_{i=1}^m 
\sum_{r=0}^{s-1}
\nabla \ell_i(\x_i^{(t,r)};\xi_{i}^{t})
}
}_{(A)} \\
&~~~+  
\underbrace{\iprod{\nabla F(\bar{\x}^{t})}{\frac{\eta}{m} \frac{B_t}{\gamma} \bm{z}^{(t)}} }_{(B)}
+
\frac{L}{2}
\underbrace{\norm{\frac{\eta}{m} 
\sum_{i=1}^m 
\sum_{r=0}^{s-1}
\nabla \ell_i(\x_i^{(t,r)};\xi_{i}^{t})
- 
\frac{\eta B_t \bm{z}^t}{\gamma m}}^2
}_{(C)}
.  
\end{align*}
\end{small}
Term $(A)$ can be bounded by following the standard roadmap of stochastic gradient analysis with~\prettyref{lmm: local step perturbation} and Assumptions~\ref{ass: bounded variance client-wise} and~\ref{ass: bounded similarity}.
Recall that $B_t \triangleq \max_{i \in \calA^t} \|\Delta_{i}^{t}\|_2^2$ is an online norm tracker to obtain the maximum norm of local updates in $\calA^t$ in round $t$.
The key observation is that, conditional on $\calF^t$, $B_t$ and receiver noise $\bm{z}^t$ are independent. So, we have 
\begin{small}
\begin{align*}
    \expect{(B) \mid \calF^t}
    &=
    \frac{\eta}{m \gamma}
    \iprod{\nabla F(\bar{\x}^{t})}{
    \expect{B_t \mid \calF^t}
    \expect{\bm{z}^{t} \mid \calF^t} }= 0
\end{align*}
\end{small}
Next, we use Jensen's inequality to upper bound the term $(C)$:
\begin{small}
\begin{align*}
    \expect{(C) \mid \calF^t}
    &\le
    \frac{2 \eta^2}{m^2}
    \underbrace{\norm{
    \sum_{i=1}^m 
    \sum_{r=0}^{s-1}
    \nabla \ell_i(\x_i^{(t,r)};\xi_{i}^{t})
    }^2}_{(C.{\rm I})}
    +
    \frac{2 \eta^2}{\gamma^2 m^2 }
    \underbrace{\norm{B_t \bm{z}^{t}}^2}_{(C.{\rm II})}
    .
\end{align*}
\end{small}
By the conditional independence of $B_t$ and $\bm{z}^t$ on $\calF^t$, we have $(C.{\rm II}) = \Expect [\|B_t \bm{z}^t\|_2^2 | \calF^t] \le d \sigma_z^2 \Expect[B_t^2 | \calF^t]$.
Term $(C.{\rm I})$ can be upper bounded by invoking Assumptions~\ref{ass: bounded variance client-wise},~\ref{ass: bounded similarity}, and~\prettyref{lmm: local step perturbation}.
The rest of the proof combines the intermediate results of terms $(A)$, $(B)$, and $(C)$, and by applying our learning rate condition $\eta \le {1}/(108s L (\kappa + 1) (\beta^2+1))$. 
Unlike FedAvg algorithm,~\prettyref{lmm: descent lemma} highlights a consensus error term arising from the implicit gossiping and receiver noise term from OTA computation.
\end{proof}

Performing a direct analysis of $B_t$ is challenging due to the high uncertainties of $\calA^t$ and the heterogeneous updates of the device. Recall that device $i$ computes $\Delta_i^t$ relative to $\x^{\tau_i(t) + 1}$, where $\tau_i(t) \triangleq \{t^{\prime} : t^{\prime} < t ~\text{and}~ i \in \calA^{t^{\prime}}\}$ defines the most recent round when device $i$ had an active uplink.
As such, we define an auxiliary variable $C_t^2 \triangleq \sum_{i=1}^{m} \norm{\Delta_i^t}^2$. It is easy to see that $B_t^2 = \max_{i\in\calA^t} \|\Delta_{i}^{t}\|_2^2\le C_t^2$. Note that $C_t^2$ is never computed by devices but is used as a tool to aid our analysis.
\prettyref{lmm:gradient Bt} presents the result.

\begin{lemma}
    \label{lmm:gradient Bt}
    Suppose Assumptions~\ref{ass: threat model},~\ref{ass: 2 smmothness}, %
\ref{ass: bounded variance client-wise}, 
\ref{ass: bounded similarity}, %
and \ref{ass: bounded true gradient} 
hold, and $\eta \le \frac{1}{s L (\kappa + 1)}$.
It holds that
\begin{small}
\begin{align*}
    \frac{1}{T}\sum_{t=0}^{T-1}\expect{B_t^2}
    &\le
    \frac{1}{T}\sum_{t=0}^{T-1}\expect{C_t^2} 
    \le
    \frac{12 m s^2 
    \pth{\sigma^2 + G_{\max}^2} }{c^2}.
\end{align*}
\end{small}
\end{lemma}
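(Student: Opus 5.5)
The first inequality is immediate, since $B_t^2=\max_{i\in\calA^t}\norm{\Delta_i^t}^2\le\sum_{i=1}^m\norm{\Delta_i^t}^2=C_t^2$. The work is in bounding $\frac1T\sum_t\expect{C_t^2}$, which I will do device by device.

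\textbf{Step 1: write $\Delta_i^t$ as a sum of gradients.} Fix device $i$. Between rounds $\tau_i(t)+1$ and $t$, device $i$ never receives a broadcast, so its local model evolves only by its own local SGD steps starting from $\x^{\tau_i(t)+1}$. Hence
\begin{align*}
\Delta_i^t=-\sum_{r=\tau_i(t)+1}^{t}\sum_{k=0}^{s-1}\nabla\ell_i(\x_i^{(r,k)};\xi_i^r).
\end{align*}
Let $N_i^t\triangleq t-\tau_i(t)$ be the staleness, i.e., the number of rounds in the sum.

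\textbf{Step 2: bound each round's contribution.} By \prettyref{lmm: local step perturbation}, each inner sum is within $\frac{\kappa\eta s(s-1)L}{2}\norm{\nabla\ell_i(\x_i^r;\xi_i^r)}$ of $s\nabla\ell_i(\x_i^r;\xi_i^r)$. Since $\eta\le 1/(sL(\kappa+1))$, this perturbation is at most $\frac{s}{2}\norm{\nabla\ell_i(\x_i^r;\xi_i^r)}$. The inner sum therefore has norm at most $\frac{3s}{2}\norm{\nabla\ell_i(\x_i^r;\xi_i^r)}$. Applying Cauchy--Schwarz over the $N_i^t$ rounds gives
\begin{align*}
\norm{\Delta_i^t}^2\le \tfrac{9s^2}{4}\,N_i^t\sum_{r=\tau_i(t)+1}^{t}\norm{\nabla\ell_i(\x_i^r;\xi_i^r)}^2 .
\end{align*}
Under Assumptions~\ref{ass: bounded variance client-wise} and~\ref{ass: bounded true gradient}, the conditional second moment of each stochastic gradient is at most $2(\sigma^2+G_{\max}^2)$; in fact it is $\le\sigma^2+G_{\max}^2$ by the bias--variance decomposition.

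\textbf{Step 3: handle the random staleness (main obstacle).} The window length $N_i^t$ is random and is coupled to the gradients inside the window. The key observation is that, by \prettyref{ass: threat model} and the design of the algorithm, the activation indicators $\delta_i^r$ depend only on the fading $h_i^r$. They are therefore independent of all sample and gradient randomness, and independent across rounds with $\prob{\delta_i^r=1}\ge c$.

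I would rewrite the bound as a sum over lookback lengths:
\begin{align*}
\sum_{n\ge1}\indc{N_i^t\ge n}\,N_i^t\,\norm{g_i^{t-n+1}}^2 .
\end{align*}
To decouple, I would bound $N_i^t$ crudely and then condition on the channel process $\{h_i^r\}$. Conditional on the channels, each $\expect{\norm{g_i^r}^2}\le\sigma^2+G_{\max}^2$, because the gradient bound is uniform in $\x$ and the sample $\xi_i^r$ is fresh. This reduces the problem to
\begin{align*}
\expect{\norm{\Delta_i^t}^2}\lesssim s^2(\sigma^2+G_{\max}^2)\,\expect{(N_i^t)^2}.
\end{align*}
The staleness $N_i^t$ is dominated by a geometric random variable with success probability $c$, since it is truncated at $t+1$. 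Hence
\begin{align*}
\expect{(N_i^t)^2}\le\frac{2-c}{c^2}\le\frac{2}{c^2},
\end{align*}
which is consistent with Lemma 7's $1/c$ bound on the first moment.

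\textbf{Step 4: assemble.} Combining the previous steps gives $\expect{\norm{\Delta_i^t}^2}\le \frac{9}{4}\cdot 2\cdot 2\, s^2(\sigma^2+G_{\max}^2)/c^2\le 12s^2(\sigma^2+G_{\max}^2)/c^2$, with the slack absorbing the constant choices above. Summing over the $m$ devices and averaging over $t$ yields the claimed bound $\frac{12ms^2(\sigma^2+G_{\max}^2)}{c^2}$.

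The delicate point is Step 3: justifying that conditioning on the channel realizations leaves the gradient second-moment bound intact. This holds because $\x_i^r$ depends on past channels only through the broadcasts, and Assumption~\ref{ass: bounded true gradient} is uniform in $\x$.
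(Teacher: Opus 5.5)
Your proof is correct, and it decouples staleness from gradients differently from the paper.

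Both arguments share the first step, $\|\Delta_i^t\|^2\le N_i^t\sum_{r=\tau_i(t)+1}^{t}\|\sum_{k}\nabla\ell_i(x_i^{(r,k)};\xi_i^r)\|^2$. The paper then swaps the order of summation over $(t,k)$. Each round-$k$ gradient is thereby multiplied by the total staleness weight $S_{i,k}=\sum_{t\ge k}(t-\tau_i(t))\mathbb{1}\{\tau_i(t)+1\le k\}$ it accumulates in later rounds. The paper conditions on $\mathcal{F}^k$ to separate the fresh sample $\xi_i^k$ from future channels, and bounds $\mathbb{E}[S_{i,k}\mid\mathcal{F}^k]$ via the look-ahead time $\tau_i^+(k)$ and the look-back staleness $k-\tau_i(k)$. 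Lemma 7 handles both, giving $(2/c^2+3/(2c))\,s^2(\kappa^2\eta^2s^2L^2+2)(\sigma^2+G_{\max}^2)\le 12s^2(\sigma^2+G_{\max}^2)/c^2$ per device.

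You instead condition on the entire channel process at once. This makes $N_i^t$ and the summation window deterministic. Because the samples and receiver noise are independent of the fading, and Assumption~5 is uniform in the iterate, each round keeps conditional second moment at most $\sigma^2+G_{\max}^2$. (Strictly, the activations are independent of the samples and noise, not of the gradients, which depend on past channels through $x_i^r$; uniformity of $G_{\max}$ is what makes this harmless, as you note.) This gives $\mathbb{E}\|\Delta_i^t\|^2\le\frac{9}{4}s^2(\sigma^2+G_{\max}^2)\,\mathbb{E}[(N_i^t)^2]$ directly. The lookback-length rewriting and the ``crude'' bound on $N_i^t$ are unnecessary.

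Your route avoids the summation swap and the look-ahead variable. It also yields a stronger statement: a bound for every round $t$, namely $\mathbb{E}[C_t^2]\le\frac{9}{2}ms^2(\sigma^2+G_{\max}^2)/c^2$ with the bias--variance bound, rather than only a time average. What it gives up is structure. Conditioning on all future channels works only because of the uniform gradient bound. The paper's $\mathcal{F}^k$-based decomposition keeps each gradient paired with its own round and carries $\|\nabla F_i(x_i^k)\|^2$ until the last step, which is the natural starting point for relaxing Assumption~5. Both proofs rest on the same modeling fact: samples are independent of current and future fading. You correctly flag this as the point needing justification.
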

\begin{proof}[Proof Sketch]
Expand $C_t^2$, and by Jensen's inequality, we have
\begin{small}
\begin{align*}
    C_t^2 
    =
    \sum_{i=1}^m
    \norm{\Delta_{i}^{t}}^2
    &=
    \sum_{i=1}^m
    \norm{\sum_{k=\tau_i(t) + 1}^{t}
    \sum_{r=0}^{s-1} \nabla \ell_i (\x_i^{(k,r)}; \xi_i^{k})}^2 \\
    &\le
    \sum_{i=1}^m
    (t - \tau_i(t))
    \sum_{k=\tau_i(t) + 1}^{t}
    \norm{\sum_{r=0}^{s-1} \nabla \ell_i (\x_i^{(k,r)};\xi_i^{k})}^2
    .
\end{align*}
\end{small}
By taking the expectation and averaging over $T$ rounds, we have
\begin{small}
\begin{align}
    \frac{1}{T}\sum_{t=0}^{T-1}\expect{C_t^2}
    &\le
    \frac{1}{T}
    \sum_{t=0}^{T-1}
    \sum_{i=1}^m
    \Expect\Bigg[
        (t - \tau_i(t))
        \sum_{k=\tau_i(t) + 1}^{t}
        \norm{\sum_{r=0}^{s-1} \nabla \ell_i (\x_i^{(k,r)};\xi_i^{k})}^2
    \Bigg].
    \label{eq:bt_start main text}
\end{align}
\end{small}
The key step is to regroup the terms in \eqref{eq:bt_start main text}: 
\begin{small}
\begin{align*}
    \eqref{eq:bt_start main text}
    &=
    \frac{1}{T}
    \sum_{i=1}^m
    \sum_{k=0}^{T-1}
    \Expect
    \Bigg[
         \underbrace{\norm{
       \sum_{r=0}^{s-1} \nabla \ell_i (\x_i^{(k,r)};\xi_i^{k})}^2}_{
        \triangleq\; g_{i,k}
        }
        \underbrace{
        \sum_{t=k}^{T-1}
        (t - \tau_i(t))
        \indc{\tau_i(t) + 1 \le k}
        }_{\triangleq\; S_{i,k}}
    \Bigg].
\end{align*}
\end{small}
Crucially, conditional on $\calF^{k}$, 
the multi-step gradient sum $g_{i,k}$ and $S_{i,k}$ are {\em independent} because future channel randomness and the current round stochastic gradients are independent.
Therefore, we have
\begin{small}
\begin{align}
\label{eq: conditional independence}
\expect{g_{i,k} \cdot S_{i,k} ~\Big |~\calF^k}
&=
\expect{g_{i,k} ~\Big |~\calF^k}
\expect{S_{i,k} ~\Big |~\calF^k}.    
\end{align}
\end{small}
We first bound $S_{i, k}$. 
We define an auxiliary look-ahead variable $\tau_{i}^{+}(k) \triangleq \min\{t > k : i \in \calA^t\}$ to denote the
first round after $k$ at which uplink $i$ becomes active.
Since $\indc{\tau_i(t) + 1 \le k} = 0$ for all $t \ge \tau_{i}^{+}(k) + 1$, the
sum $S_{i,k}$ accumulates only until device $i$'s next contribution:
\begin{small}
\begin{align*}
    &S_{i,k}
    =
    \sum_{t=k}^{\tau_{i}^{+}(k) }(t - \tau_i(t))
    \overset{(a)}{=}
    \sum_{j=0}^{\tau_{i}^{+}(k) - k} \pth{j + k - \tau_i(k)} \\
    &=
    \frac{(\tau_{i}^{+}(k) - k)^2}{2}
    +
    \frac{\tau_{i}^{+}(k) - k}{2} 
    +(k - \tau_i(k)) \pth{\tau_{i}^{+}(k) - k}
    +(k - \tau_i(k)),
\end{align*}
\end{small}
where equality $(a)$ follows from change of variable $j = t - k$ and the fact that $\tau_i (k) = \tau_i(j + k)$ for $k \le j + k  \le \tau_i^{+}(k)$ because the uplink is inactive during that period.
We can then invoke
Lemma 7 in~Appendix D
to bound the first moment of the staleness factor $(\tau_i^{+}(k) - k)$ by $1/c$ and its second moment by $2/c^2$.
\prettyref{lmm: local step perturbation} is used to bound $g_{i,k}$.
Combining them together, we have
\begin{small}
\begin{align*}
    &\expect{g_{i, k} S_{i, k} ~\big | ~ \calF^k } 
    =
    \expect{ \frac{{(\tau_{i}^{+}(k) - k)^2}
    +{\tau_{i}^{+}(k) - k} }{2} 
      ~\big | ~ \calF^k}
     \expect{g_{i, k} ~\big | ~\calF^k}\\
    &~~~+
    \expect{(k - \tau_i(k)) 
    \pth{\tau_{i}^{+}(k) - k} 
    +(k - \tau_i(k)) ~\big | ~ \calF^k} 
    \expect{g_{i, k} ~\big | ~\calF^k}\\
    &\le
    \qth{\frac{1}{c^2} + \frac{1}{2 c}
    + \pth{\frac{1}{c} + 1}
    (k - \tau_i(k))
    } s^2 \pth{\kappa^2 \eta^2 s^2 L^2 + 2}
    \pth{\sigma^2 + \norm{\nabla F_i(\x_i^k)}^2}.
\end{align*}
\end{small}
We use~\prettyref{ass: bounded true gradient} to bound $\norm{\nabla F_i (\x_i^k)} \le G_{\max}$.
Observe that the only remaining randomness comes from $(k -\tau_i(k))$.
Taking expectation over the remaining randomness and reusing~Lemma 7 to bound $(k - \tau_i(k))$ yields~\prettyref{lmm:gradient Bt} under $\eta \le 1/ (sL(\kappa + 1)$.
\end{proof}
Next, we bound the consensus error in~\prettyref{lmm: consensus}.
To facilitate analysis, we introduce the following compact matrix forms:
\begin{small}
\begin{align*}
\label{eq: compact matrix}
\bm{X}^{(t)} & = \qth{\x_1^t, \cdots, \x_m^t}; ~
\bm{G}^{(t)} = \Big[{\sum_{r=0}^{s-1}\nabla \ell_1(\x_1^{(t,r)}), \cdots,  \sum_{r=0}^{s-1}\nabla \ell_m(\x_m^{(t,r)})}\Big] ;\\
\bm{Z}^{(t)} & = \frac{B_t}{\gamma}
\frac{\indc{\abth{\calA^t} > 0}}{\abth{\calA^t} + \indc{\calA^{t} = 0}}
\qth{\bm{z}^t \indc{1 \in \calA^t}, \cdots, \bm{z}^t \indc{m \in \calA^t}},
\end{align*}
\end{small}
where $\bm{X}^{(t)}$ is a concatenated model parameter matrix for all devices at the {\em beginning} of round $t$,
$\bm{G}^{(t)}$ is a local stochastic gradient accumulation,
and $\bm{Z}^{(t)}$ is a scaled receiver noise matrix.
When $\calA^t \not = \emptyset$, the receiver noise is rescaled by $B_t / (\gamma \abth{\calA^t})$.
When $\calA^t = \emptyset$, it is easy to check that we simply have $\bm{Z}^{t} = 0 \identity$.

\begin{lemma}[Consensus Error]
\label{lmm: consensus}
Suppose Assumptions \ref{ass: threat model},
\ref{ass: 2 smmothness}, 
\ref{ass: bounded variance client-wise}, 
\ref{ass: bounded similarity},
and \ref{ass: bounded true gradient} hold.
Choose a learning rate $\eta$ such that
$\eta \le
\frac{1 - \sqrt{\rho}}{632 s L (\kappa + 1) \sqrt{\beta^2 + 1}}$.
It holds that
\begin{small}
\begin{align*}
\frac{1}{mT}\sum_{t=0}^{T-1}
\expect{\norm{\x_{i}^{t} - \bar{\x}^{t}}^2} 
&\le
\frac{108 \rho \eta^2 s^2 (\beta^2 + 1)}{(1 - \sqrt{\rho})^2}
\frac{1}{T} \sum_{t=0}^{T-1}
\expect{\norm{\nabla F (\bar{\x}^t)}^2} 
\\
&~~~+ \frac{36 \rho \eta^2 s^2 \sigma^2}{(1-\sqrt{\rho})^2} 
+ \frac{96 \rho m \eta^2 s^2 d \sigma_z^2 (\sigma^2 + G_{\max}^2) }{c^2 {\gamma^2 (1-\sqrt{\rho})^2}}.
\end{align*}
\end{small}
\end{lemma}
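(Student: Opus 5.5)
We will follow the standard gossip consensus-error roadmap, written in the compact matrix form introduced just before the statement. First we write one round of \fedopi{} as a matrix recursion. Every device runs $s$ local steps, so $\tilde{\bm{X}}^{(t)} = \bm{X}^{(t)} - \eta \bm{G}^{(t)}$. Postponed broadcast replaces the columns in $\calA^t$ by their average plus the scaled noise and leaves the other columns unchanged. This gives
\begin{align*}
\bm{X}^{(t+1)} = \pth{\bm{X}^{(t)} - \eta \bm{G}^{(t)}} W^{(t)} + \eta \bm{Z}^{(t)},
\end{align*}
with $W^{(t)}$ as in \eqref{eq: gossiping matrix}. Since all devices start at $\x^0$, we have $\bm{X}^{(0)}(I-\allones)=0$. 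Unrolling and right-multiplying by $(I-\allones)$, and using $W^{(r)}\allones=\allones$ (double stochasticity), yields
\begin{align*}
\bm{X}^{(t)}(I-\allones) = -\eta\sum_{q=0}^{t-1} \bm{G}^{(q)}\pth{\prod_{r=q}^{t-1}W^{(r)} - \allones} + \eta \sum_{q=0}^{t-1}\bm{Z}^{(q)}\pth{\prod_{r=q+1}^{t-1}W^{(r)}-\allones}.
\end{align*}
Also $\bm{Z}^{(q)}\allones$ is exactly the shared noise contribution to the mean. The quantity to bound is $\frac1m\fnorm{\bm{X}^{(t)}(I-\allones)}^2$.

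Next we split the square with $\fnorm{a+b}^2\le 2\fnorm{a}^2+2\fnorm{b}^2$. For each sum we use the weighted Cauchy--Schwarz trick $\fnorm{\sum_q a_q}^2 \le \pth{\sum_q \rho^{(t-q)/2}}\sum_q \rho^{-(t-q)/2}\fnorm{a_q}^2$, together with \prettyref{lmm: spectral norm main}. That lemma applies because $\bm{G}^{(q)}$ is determined before $W^{(q)},\dots,W^{(t-1)}$ are drawn: uplink activation depends only on $h_i^q$, independent of gradients. The outcome is a geometric convolution $\frac{\eta^2}{1-\sqrt\rho}\sum_q \rho^{(t-q)/2}\,\mathbb{E}\fnorm{\bm{G}^{(q)}}^2$. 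Summing over $t$ and exchanging the order of summation gives a factor $\frac{\sqrt\rho}{1-\sqrt\rho}$, so we obtain the advertised $\rho/(1-\sqrt\rho)^2$ prefactors. (Pulling out one factor of $\rho$ from the product length $\ge1$ gives $\rho$ rather than $\sqrt\rho$.)

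For the noise sum we use $\fnorm{\bm{Z}^{(q)}}^2\le \frac{B_q^2}{\gamma^2}\norm{\bm{z}^q}^2/|\calA^q|\le B_q^2\norm{\bm z^q}^2/\gamma^2$. Conditional independence of $B_q$ and $\bm{z}^q$ given $\calF^q$ (as in \prettyref{lmm: descent lemma}) gives expectation at most $d\sigma_z^2\mathbb{E}[B_q^2]/\gamma^2$. Averaging over $t$ and invoking \prettyref{lmm:gradient Bt} yields the term $\frac{96\rho m\eta^2 s^2 d\sigma_z^2(\sigma^2+G_{\max}^2)}{c^2\gamma^2(1-\sqrt\rho)^2}$. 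One subtlety is that $\bm{Z}^{(q)}$ is built from $W^{(q)}$'s active set, so we apply \prettyref{lmm: spectral norm main} only to the later product $\prod_{r>q}W^{(r)}$. The $q=t-1$ term, with an empty product, is handled directly; we may lose a constant there.

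For the gradient sum we bound $\mathbb{E}\fnorm{\bm{G}^{(q)}}^2$. We split each column $\sum_r\nabla\ell_i(\x_i^{(q,r)})$ into three parts: the local-step drift, controlled by \prettyref{lmm: local step perturbation}; stochastic noise, giving $s^2\sigma^2$ per device; and the true gradient $s\nabla F_i(\x_i^q)$. We then write $\nabla F_i(\x_i^q) = [\nabla F_i(\x_i^q)-\nabla F_i(\bar\x^q)] + [\nabla F_i(\bar\x^q)-\nabla F(\bar\x^q)] + \nabla F(\bar\x^q)$ and use \prettyref{ass: 2 smmothness} and \prettyref{ass: bounded similarity}. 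This gives
\begin{align*}
\mathbb{E}\fnorm{\bm{G}^{(q)}}^2\lesssim s^2\pth{m\sigma^2 + m\zeta^2 + m(\beta^2+1)\norm{\nabla F(\bar\x^q)}^2 + L^2\fnorm{\bm{X}^{(q)}(I-\allones)}^2}.
\end{align*}
The last term reintroduces the consensus error on the right-hand side, multiplied by $\eta^2 s^2 L^2\rho/(1-\sqrt\rho)^2$. The step size condition $\eta\le \frac{1-\sqrt\rho}{632sL(\kappa+1)\sqrt{\beta^2+1}}$ makes this coefficient at most $1/2$, so the term can be absorbed into the left-hand side, which doubles the constants. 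We expect this self-referential absorption, together with tracking the measurability needed for \prettyref{lmm: spectral norm main} and the $\zeta$ term, to be the main obstacle. The stated bound shows no $\zeta^2$ term, so either $\zeta$ is folded into $\sigma^2$-type constants or handled via $G_{\max}$. If necessary we would bound $\norm{\nabla F_i}^2\le G_{\max}^2$ instead, at the cost of matching the stated constants less exactly.
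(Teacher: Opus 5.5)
Your roadmap is the paper's own: the recursion $\bm{X}^{(t+1)}=(\bm{X}^{(t)}-\eta\bm{G}^{(t)})W^{(t)}+\eta\bm{Z}^{(t)}$, the Jensen split, Lemma~\ref{lmm: spectral norm main} with geometric weights (your weighted Cauchy--Schwarz is equivalent to the paper's Young-type cross-term bound), then Lemma~\ref{lmm: local step perturbation}, Proposition~\ref{prop: average gradient to global gradient}, Lemma~\ref{lmm:gradient Bt}, and absorption via the step size. The $\zeta^2$ discrepancy you noticed is real, and it cannot be absorbed. $\sigma$ and $\zeta$ are unrelated parameters. Your $G_{\max}$ fallback also fails, because the only $G_{\max}$ term in the statement carries the factor $d\sigma_z^2/\gamma^2$. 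In fact the displayed bound is false. Take $\ell_i(\x;\xi)=F_i(\x)=-\langle a_i,\x\rangle$ with $\sum_i a_i=0$ and all $a_i\neq 0$, and set $\sigma_z=0$ (or let $\sigma_z\to 0$). Every assumption holds with $\sigma=0$, $\beta=0$, $\zeta^2=\frac1m\sum_i\|a_i\|^2$, $G_{\max}=\max_i\|a_i\|$, and $\nabla F\equiv 0$, so the right-hand side is $0$. Yet $\bar{\x}^1=\x^0$, while every device outside $\calA^0$ has $\x_i^1=\x^0+\eta s a_i$. Hence the left-hand side is positive as soon as some $p_i^0<1$ and $T\ge 2$. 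The paper's proof has the same hole: Proposition~\ref{prop: average gradient to global gradient} contributes $3\zeta^2$, and that term silently disappears in step (iv). What this roadmap actually proves is the stated bound plus a term of order $\rho\eta^2 s^2\zeta^2/(1-\sqrt{\rho})^2$ (with the paper's constants, $108\rho\eta^2 s^2\zeta^2/(1-\sqrt{\rho})^2$). That corrected statement is the one to aim for.

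Second, your noise step loses the factor $\rho$ in the last term, not just a constant. Applying Lemma~\ref{lmm: spectral norm main} only to $\prod_{r>q}W^{(r)}$ gives decay $\rho^{t-q-1}$ instead of $\rho^{t-q}$, so every summand is off by $1/\rho$; the $q=t-1$ term has no decay at all. You end up with $C\eta^2 s^2 d\sigma_z^2(\sigma^2+G_{\max}^2)/(c^2\gamma^2(1-\sqrt{\rho})^2)$, which does not imply the stated term when $\rho m$ is small. The missing idea is that round $q$ itself contracts. Since $\bm{Z}^{(q)}W^{(q)}=\bm{Z}^{(q)}$, your summand equals $\bm{Z}^{(q)}(W^{(q)}-\allones)(\prod_{r>q}W^{(r)}-\allones)$. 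Here $\|\bm{Z}^{(q)}(W^{(q)}-\allones)\|_{\mathrm{F}}^2\le B_q^2\|\z^q\|^2\|W^{(q)}-\allones\|_{\mathrm{F}}^2/\gamma^2$ and $\Expect\|W^{(q)}-\allones\|_{\mathrm{F}}^2\le m\rho$. This block vanishes when $\calA^q=[m]$, because the shared $\z^q$ moves all active devices identically. To take the expectation you must decouple $B_q$ from $W^{(q)}$. Conditional independence of $B_q$ and $\z^q$ is not enough, because $B_q$ is a maximum over $\calA^q$. The paper uses $B_q^2\le C_q^2=\sum_i\|\Delta_i^q\|^2$, which is independent of $(\calA^q,\z^q)$ and is exactly what Lemma~\ref{lmm:gradient Bt} controls.
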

\begin{proof}[Proof Sketch]
Recall that $\allones = \frac{1}{m}\Indc \Indc^{\top}$ .
Unroll the recursion, we have
\begin{small}
\begin{align*}
    &\fnorm{\bm{X}^{(t)} (\identity - \allones)}^2
    \overset{(a)}{=} \eta^2
    \fnorm{\sum_{q=0}^{t-1} 
    \pth{\bm{G}^{(q)} - \bm{Z}^{(q)}} 
    \pth{\prod_{l=q}^{t-1} W^{l} - \allones}}^2 \\
    &\qquad
    \overset{(b)}{\le}
    2\fnorm{\sum_{q=0}^{t-1} 
    \bm{G}^{(q)}
    \pth{\prod_{l=q}^{t-1} W^{l} - \allones}}^2 
    +
    2\fnorm{\sum_{q=0}^{t-1} 
    \bm{Z}^{(q)}
    \pth{\prod_{l=q}^{t-1} W^{l} - \allones}}^2 
    ,
\end{align*}
\end{small}
where equality $(a)$ follows from the double-stochasticity, and inequality $(b)$ follows from Jensen's inequality.
By following a similar roadmap as in \cite{wang2022matcha,xiang2023towards,xiang2024efficient,xiang2025empowering},  Lemmas~\ref{lmm: local step perturbation} and~\ref{lmm: spectral norm main}, 
we can bound the term $\|\sum_{q=0}^{t-1} \bm{G}^{(q)} (\prod_{l=q}^{t-1} W^{l} - \allones)\|_{\rm F}^2$. %

The key technical challenge for bounding $\|\sum_{q=0}^{t-1} \bm{Z}^{(q)} (\prod_{l=q}^{t-1}  W^{l} - \allones)\|_{\rm F}^2$ comes from the fact that $\bm{Z}^{q}$ and $W^{q}$ are {\em not independent}.
Observe that, by construction, $\bm{Z}^{q}$ depends on $\calA^q$ and $B_q$, which are both functions of $\calA^q$.
We first upper bound $B_q$ with $C_q = \sum_{i=1}^{m} \|\Delta_{i}^{q}\|_2^2$ because $C_q$ and $(\calA^q, W^{(q)})$ are independent.
It is easy to see that $\prod_{l=q}^{t-1} W^{(l)} - \allones = (W^{(q)} - \allones)( \prod_{l=q+1}^{t-1} W^{(l)} - \allones)$ due to double-stochasticity.
Let $\mathbb{I}_{\calA^q} \triangleq \qth{\indc{1 \in \calA^q}, \cdots, \indc{m \in \calA^q}}$.
We then have $\bm{Z}^{q} =  \frac{B_q}{\gamma}  
\frac{\indc{\abth{\calA^q} > 0} \bm{z}^{q} \mathbb{I}_{\calA^q}
}{\abth{\calA^q} + \indc{\calA^{q} = 0}}$. 
We can show that 
\begin{small}
\begin{align*}
    &\Big \|\frac{\indc{\abth{\calA^q} > 0} \bm{z}^{q} \mathbb{I}_{\calA^q} (W^{(q)} - \allones)}{\abth{\calA^q} +   \indc{\abth{\calA^q} = 0}}\Big\|_{\mathrm{F}}^2 
    \le 
    \norm{\bm{z}^{q}}^2
    \fnorm{W^{(q)} - \allones}^2 
    .
\end{align*}
\end{small}

Using a similar technique to bounding $\|\sum_{q=0}^{t-1} \bm{G}^{(q)} (\prod_{l=q}^{t-1} W^{l} - \allones)\|_{\rm F}^2$ and by~\prettyref{lmm:gradient Bt}, we complete 
bounding $\|\sum_{q=0}^{t-1} \bm{Z}^{(q)} (\prod_{l=q}^{t-1}  W^{l} - \allones)\|_{\rm F}$.
Putting everything together, we obtain the desired consensus error bound in~\prettyref{lmm: consensus} with $\eta \le
(1 - \sqrt{\rho})/(632 s L (\kappa + 1) \sqrt{\beta^2 + 1})$. 
\end{proof}

In the special case where all uplinks are always reliable, \ie, when $c=1$, we have $\rho = 0$, and the consensus error reduces to $0$.
\prettyref{lmm: consensus} says that the spectral norm $\rho$ must be strictly less than $1$ to ensure a valid bound, which is an important ingredient to reach a stationary point in the final convergence.
\prettyref{lmm: ergodicity} guarantees $\rho < 1$.
Let $F^{\ast} \triangleq \min_{\x} F(\x)$.

\begin{theorem}
\label{thm: main}
Suppose Assumptions \ref{ass: threat model},
\ref{ass: 2 smmothness}, %
\ref{ass: bounded variance client-wise}, 
\ref{ass: bounded similarity}, %
and \ref{ass: bounded true gradient} hold. 
Choose a learning rate $\eta $ such that %
$\eta \le
\frac{1 - \sqrt{\rho}}{632 s L m (\kappa + 1) \sqrt{\beta^2 + 1}}$. 
It holds that
\begin{small}
\begin{align*}
&\frac{1}{T}\sum_{t=0}^{T-1}
\expect{\norm{\nabla F(\bar{\x}^t)}^2}
\le
\frac{6(\expect{F(\bar{\x}^{0})} - F^{\ast})}{\eta s T}
+\frac{216 \rho \eta^2 s^2 L^2 \sigma^2}{(1-\sqrt{\rho})^2} \\
&~~~+ 36 \eta s L  \pth{\kappa^2 L^2 + 1}\pth{\sigma^2 + \zeta^2} 
+\frac{73 \eta s L d \sigma_z^2}{c^2 \gamma^2 m}
\pth{\sigma^2 + G_{\max}^2}.
\end{align*}
\end{small}
\end{theorem}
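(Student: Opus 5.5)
The plan is to telescope \prettyref{lmm: descent lemma} over $t=0,\dots,T-1$ and then absorb the consensus-error and receiver-noise terms using \prettyref{lmm: consensus} and \prettyref{lmm:gradient Bt}. The stated step size $\eta \le \frac{1-\sqrt{\rho}}{632 sLm(\kappa+1)\sqrt{\beta^2+1}}$ implies the step-size conditions of all three lemmas: it is smaller than $1/(108 sL(\kappa+1)(\beta^2+1))$ once we check $(1-\sqrt{\rho})/(632 m\sqrt{\beta^2+1}) \le 1/(108(\beta^2+1))$, which holds since $\sqrt{\beta^2+1}\le \beta^2+1$ and $m\ge 1$. So all three lemmas may be invoked simultaneously.

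First, take total expectation in \prettyref{lmm: descent lemma}, sum over $t$, telescope, and use $F(\bar{\x}^T)\ge F^\ast$. This gives
\begin{align*}
\frac{\eta s}{3}\sum_{t=0}^{T-1}\expect{\norm{\nabla F(\bar{\x}^t)}^2}
&\le \expect{F(\bar{\x}^0)}-F^\ast + \frac{\eta^2 L d\sigma_z^2}{\gamma^2 m^2}\sum_{t}\expect{B_t^2}\\
&\quad + 6T\eta^2 s^2 L(\zeta^2+\sigma^2)(1+\kappa^2L^2) + \eta s L^2 \sum_t \frac{1}{m}\sum_i \expect{\norm{\x_i^t-\bar{\x}^t}^2}.
\end{align*}
Next, apply \prettyref{lmm:gradient Bt} to replace $\frac1T\sum_t\expect{B_t^2}$ by $12ms^2(\sigma^2+G_{\max}^2)/c^2$. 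The noise term then becomes $\frac{12\eta^2 s^2 L d\sigma_z^2(\sigma^2+G_{\max}^2)}{c^2\gamma^2 m}\,T$.

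Then apply \prettyref{lmm: consensus} to the last term. Its gradient part contributes $\eta sL^2\cdot\frac{108\rho\eta^2 s^2(\beta^2+1)}{(1-\sqrt\rho)^2}\sum_t\expect{\norm{\nabla F(\bar{\x}^t)}^2}$. The key check is that this coefficient is at most $\frac{\eta s}{6}$, so that it can be moved to the left-hand side and leave $\frac{\eta s}{6}$ there. This requires $648\rho\eta^2 s^2L^2(\beta^2+1)\le(1-\sqrt\rho)^2$, which follows from the step size since $632^2\ge 648$. Dividing by $\eta sT/6$ produces the $\frac{6(\expect{F(\bar{\x}^0)}-F^\ast)}{\eta sT}$ term and, from the heterogeneity/variance part, $36\eta sL(\kappa^2L^2+1)(\sigma^2+\zeta^2)$. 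The $\sigma^2$ consensus term gives $6L^2\cdot 36\rho\eta^2s^2\sigma^2/(1-\sqrt\rho)^2 = 216\rho\eta^2 s^2L^2\sigma^2/(1-\sqrt\rho)^2$, matching the statement.

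The main obstacle is the receiver-noise bookkeeping, because two noise contributions must merge into the single term $\frac{73\eta sLd\sigma_z^2}{c^2\gamma^2 m}(\sigma^2+G_{\max}^2)$:
\begin{itemize}
\item The direct term gives $72\eta sLd\sigma_z^2(\sigma^2+G_{\max}^2)/(c^2\gamma^2m)$ after multiplying by $6/(\eta s)$.
\item The consensus noise term gives $6L^2\cdot\frac{96\rho m\eta^2s^2d\sigma_z^2(\sigma^2+G_{\max}^2)}{c^2\gamma^2(1-\sqrt\rho)^2}$. This carries a factor $m$ rather than $1/m$, which is exactly why the theorem's step size has the extra $1/m$.
\end{itemize}
For the second contribution, use $\eta\le(1-\sqrt\rho)/(632sLm)$ once, i.e.\ $\eta^2 \le \eta(1-\sqrt\rho)^2/(632^2 s L m^2)$ after also using $1-\sqrt\rho\le1$. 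This bounds it by $\frac{576\,\eta s L d\sigma_z^2(\sigma^2+G_{\max}^2)}{632^2\, m\, c^2\gamma^2}\le \frac{\eta sLd\sigma_z^2(\sigma^2+G_{\max}^2)}{c^2\gamma^2 m}$. Adding the two yields the constant $73$. I expect the only real care needed is tracking these constants and confirming that each absorption uses the given step-size bound consistently.
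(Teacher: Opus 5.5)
\paragraph{Verdict.} Your roadmap is the paper's own: telescope Lemma~\ref{lmm: descent lemma}, insert Lemmas~\ref{lmm:gradient Bt} and~\ref{lmm: consensus}, and absorb the gradient part of the consensus bound via $648\le 632^2$. The constants $6$, $36$, $216$ and $72$ come out as you say. The gap is in the step you yourself flag as the crux.

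\paragraph{The noise merge into the constant $73$.} Write $D=\frac{d\sigma_z^2(\sigma^2+G_{\max}^2)}{c^2\gamma^2}$. After dividing by $\eta sT/6$, the consensus-noise contribution is $\frac{576\rho m\eta^2 s^2L^2}{(1-\sqrt{\rho})^2}D$. Dominating it by $\frac{\eta sL}{m}D$ requires $\eta sL\le\frac{(1-\sqrt{\rho})^2}{576\rho m^2}$. The term exceeds the target by only one power of $\eta$, so you can spend the step-size bound only once. Yet you must remove the factor $576\rho m^2/(1-\sqrt{\rho})^2$, which is two powers of $m/(1-\sqrt{\rho})$.

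Your claimed $\eta\le(1-\sqrt{\rho})^2/(632^2 sLm^2)$ is stronger than the hypothesis $\eta\le(1-\sqrt{\rho})/(632sLm)$ by the factor $(1-\sqrt{\rho})/(632m)<1$. So it does not follow, and using $1-\sqrt{\rho}\le 1$ pushes in the wrong direction. What the hypothesis actually gives is
\[
\frac{576\rho m\eta^2 s^2L^2}{(1-\sqrt{\rho})^2}D\le\frac{576\rho}{632(1-\sqrt{\rho})(\kappa+1)\sqrt{\beta^2+1}}\,\eta sL\,D ,
\]
with equality at the largest admissible $\eta$. The $1/m$ is gone and a $1/(1-\sqrt{\rho})$ remains. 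Hence the bound by $\frac{73\,\eta sL}{m}D$ is false at that $\eta$ whenever $576\rho m>632(1-\sqrt{\rho})(\kappa+1)\sqrt{\beta^2+1}$, which is the typical regime ($\rho$ near $1$, or $m$ large).

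The paper's proof performs the same merge, justified only by ``because $\eta\le\ldots$'', so you have reproduced an unjustified step rather than created a new one. A correct version must do one of two things:
\begin{itemize}
\item keep the extra term $\frac{576\rho m\eta^2s^2L^2}{(1-\sqrt{\rho})^2}D$ in the bound; it is $O(\eta^2)$, i.e.\ $O(m^2 s/T)$ under the corollary's $\eta$, so the $1/\sqrt{T}$ rate survives; or
\item additionally assume $\eta sL\le(1-\sqrt{\rho})^2/(576\rho m^2)$.
\end{itemize}

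\paragraph{The descent lemma's step-size condition.} Your preliminary claim that the stated step size implies $\eta\le 1/(108sL(\kappa+1)(\beta^2+1))$ is also wrong. That implication is equivalent to $108(1-\sqrt{\rho})\sqrt{\beta^2+1}\le 632m$. The inequality $\sqrt{\beta^2+1}\le\beta^2+1$ gives $\frac{1}{632\sqrt{\beta^2+1}}\ge\frac{1}{632(\beta^2+1)}$, which is a lower bound where you need an upper bound. The implication fails, e.g., when $c=1$ (so $\rho=0$) and $\sqrt{\beta^2+1}>632m/108$.

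The paper invokes Lemma~\ref{lmm: descent lemma} without checking this either; note the $\sqrt{\beta^2+1}$ versus $\beta^2+1$ mismatch. The simple repair is to require $\eta$ to lie below both thresholds. Your remaining verifications are correct: the step-size conditions of Lemmas~\ref{lmm:gradient Bt} and~\ref{lmm: consensus}, and the absorption $648\rho\eta^2s^2L^2(\beta^2+1)\le(1-\sqrt{\rho})^2$.
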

The proof of~\prettyref{thm: main} combines the key intermediate results stated in \prettyref{sec: convergence results} and can be obtained by taking expectation over the remaining randomness, telescoping sum, and term rearrangements.
Putting a specific choice of learning rate $\eta = \sqrt{m}/(L \sqrt{sT})$ in~\prettyref{thm: main}, we have~\prettyref{cor: damped convergence}.
\begin{corollary}
\label{cor: damped convergence}
Suppose Assumptions \ref{ass: threat model},
\ref{ass: 2 smmothness}, %
\ref{ass: bounded variance client-wise}, 
\ref{ass: bounded similarity}, %
and \ref{ass: bounded true gradient} hold. 
Let %
$\eta = \sqrt{m}/(L \sqrt{sT})$, 
where $T \ge 
\pth{
\frac{632 s m (\kappa + 1) \sqrt{\beta^2 + 1}}{(1 - \sqrt{\rho})}
}^2
.$
It holds that
\begin{small}
\begin{align*}
    &\frac{1}{T}\sum_{t=0}^{T-1}
    \expect{\norm{\nabla F(\bar{\x}^t)}^2}
    \le
    \underbrace{\frac{6 L \pth{\Expect[F(\bar{\x}^{0})] - F^\star}}{\sqrt{m s T} } }_{(A)}
   + 
    \underbrace{\frac{216 \rho \sigma^2}{(1 - \sqrt{\rho})^2 } 
    \frac{ms}{T}
    }_{(B)}\\
    &+
    \underbrace{
    36 
    \pth{\kappa^2 L^2 + 1}
    \pth{\sigma^2 + \zeta^2} 
    \sqrt{\frac{ms}{T}}
    }_{(C)}
    +
    \underbrace{\frac{73 d \sigma_z^2  \pth{\sigma^2 + G_{\max}^2}}{c^2 \gamma^2}
    \sqrt{\frac{s}{m T}}
    }_{(D)}
    .
\end{align*}
\end{small}
\end{corollary}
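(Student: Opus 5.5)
The corollary follows by substituting $\eta = \sqrt{m}/(L\sqrt{sT})$ into Theorem~\ref{thm: main}. Before that, I check that this choice satisfies the theorem's learning-rate condition. That condition reads $\eta \le (1-\sqrt{\rho})/(632 s L m(\kappa+1)\sqrt{\beta^2+1})$. With our $\eta$ it becomes
\begin{align*}
\sqrt{T} \ge \frac{632\, s m (\kappa+1)\sqrt{\beta^2+1}\,\sqrt{m}}{(1-\sqrt{\rho})\sqrt{s}}.
\end{align*}
This should follow from the stated lower bound on $T$, possibly after checking powers of $s$ and $m$ carefully. If the stated threshold does not exactly dominate, I would note that it does so up to the constant factor $\sqrt{m/s}$ and adjust the threshold accordingly. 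One subtlety is that $\kappa$ depends on $\eta$. Since $\kappa$ is non-decreasing in $\eta$ (Lemma~\ref{lmm: local step perturbation}), I would evaluate it at the chosen $\eta$, or bound it by a constant once $\eta \le 1/(sL)$, which the lower bound on $T$ guarantees.

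Next I substitute into each of the four terms of Theorem~\ref{thm: main}:
\begin{itemize}
\item The optimization term gives $6(\Expect[F(\bar{\x}^0)]-F^\ast)/(\eta s T) = 6L(\Expect[F(\bar{\x}^0)]-F^\ast)\sqrt{sT}/(\sqrt{m}\, sT) = 6L(\cdot)/\sqrt{msT}$. This is term $(A)$.
\item The consensus term has $\eta^2 s^2 L^2 = m s/T$, which yields $(B)$.
\item The heterogeneity/variance term has $\eta s L = \sqrt{ms/T}$, which yields $(C)$.
\item The receiver-noise term has $\eta s L/m = \sqrt{s/(mT)}$, which yields $(D)$.
\end{itemize}
Each of these is a direct algebraic identity, so no inequalities beyond the theorem itself are needed.

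The only step that needs care is verifying the learning-rate admissibility, including the $\eta$-dependence of $\kappa$. The rest is routine substitution.
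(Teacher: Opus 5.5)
Your proposal matches the paper's argument: the paper also obtains the corollary in one line by plugging $\eta=\sqrt{m}/(L\sqrt{sT})$ into Theorem~\ref{thm: main}, and your four substitutions ($1/(\eta sT)=L/\sqrt{msT}$, $\eta^2s^2L^2=ms/T$, $\eta sL=\sqrt{ms/T}$, $\eta sL/m=\sqrt{s/(mT)}$) are correct. On your hedge: the stated bound on $T$ implies the theorem's step-size condition only when $s\ge m$; in general one needs $T\ge \bigl(632\sqrt{s}\,m^{3/2}(\kappa+1)\sqrt{\beta^2+1}/(1-\sqrt{\rho})\bigr)^2$, which is the stated threshold times $m/s$. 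So your fallback of enlarging the threshold is required rather than optional, and the paper's one-line derivation does not address this point.
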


\begin{remark}
    \label{rmk: convergence rate}
    \prettyref{thm: main} and \prettyref{cor: damped convergence} establish our final convergence bound.

    \noindent {\bf On the structures.} $(A)$ characterizes the initialization error, $(B)$ results from the consensus error, $(C)$ is due to noisy stochastic gradients (\prettyref{ass: bounded variance client-wise}) and inter-device gradient heterogeneity (\prettyref{ass: bounded similarity}), and $(D)$ stems from receiver noise.
    Notably, the receiver noise term in~\prettyref{thm: main} exhibits an $\calO (1/m)$ reduction, where simultaneous transmission effectively averages out the receiver noise, consistent with the OTA literature \cite{sery2021over,abrar2025non}.

    \noindent {\bf On the convergence.} 
    \prettyref{cor: damped convergence} shows that $\bar{\x}^t$ converges to a
    stationary point of an {\em unbiased} non-convex objective $F$ at rate $1/\sqrt{T}$, which
    is optimal for any first-order method with access only to stochastic
    gradients~\cite{arjevani2023lower}, rather than to the biased auxiliary
    objective $\tilde F$ in~\eqref{eq:biased convergence aux}. 
    Term~$(A)$ exhibits linear speedup in $m$ and $s$, and term~$(D)$ speeds up with $m$. 
    However, terms~$(B)$ and~$(C)$ ultimately dominate, consistent
    with the literature~\cite{Li2020}. 
    In future work, we will investigate how to achieve linear speedup in both the number of devices $m$ and the number of local steps $s$.

    \noindent {\bf On the role of uplink dynamics.}
    A larger $c$ implies a better uplink reliability and a smaller spectral norm $\rho$.
    Consequently, our bound on $\frac{1}{T} \sum_{t=0}^{T-1} \Expect [\|\nabla F (\bar{\x}^t)\|_2^2]$ would become tighter.
    In the special case when $c=1$, \ie, uplinks are always reliable,~\fedopi~reduces to FedAvg with a receiver noise, and we also have $\rho = 0$.
    Without 
    receiver noise ($\sigma_z^2 = 0$),
    our bound reduces to $\calO(\frac{1}{\sqrt{m s T}} + \sqrt{\frac{ms}{T}} (\sigma^2 + \zeta^2))$ and matches the FedAvg literature \cite{wang2020tackling}.
    When $p_i^t = c$ for all $i \in [m]$ and $t \in [T]$, \ie, uniformly at random uplink availability. Setting $\eta = \sqrt{k/sT}$ in~\prettyref{thm: main}, our convergence rate becomes $\calO (\frac{1}{\sqrt{k s T}} + \frac{\sigma^2 + \zeta^2}{(1 - \sqrt{\rho})^2}\sqrt{\frac{k}{sT}}) $, which introduces a larger variance compared to fully reliable, consistent with literature \cite{yang2021achieving}.
    Note that $\gamma$ also affects $c$.
    For example, we have $p_{i}^{t} = \exp(- \gamma^2/(d \Lambda_{i}^{t}E_s))$ in the special case of Rayleigh fading.
    When the parameters $d$, $\Lambda_{i}^{t}$, and $E_s$ are fixed, a smaller $\gamma$ leads to a greater $c$, yet at the expense of amplifying the receiver noise term $(D)$. 
    Similarly, a greater $\gamma$, which attenuates the receiver noise term $(D)$,  leads to a smaller $c$, \ie, fewer active uplinks.
    In other words, the term $(D)$ reveals the fundamental tradeoff between uplink availability and signal quality.
    We demonstrate such a tradeoff empirically in~\prettyref{fig:impacts of gamma}. A principled choice of $\gamma$ would require explicit knowledge of the statistical CDIs; we defer the design of CDI-agnostic tuning frameworks for $\gamma$ to future work.

\end{remark}

\section{Numerical Experiments}
\label{sec: numerical experiment}
In this section, we evaluate the proposed~\fedopi~against baseline algorithms under time-varying and heterogeneous wireless conditions arising from mobile devices.
\subsection{Experiment Setup}
\label{sec: exp setup}

\noindent{\bf Federated Learning System.}
We consider $m = 30$ devices, each continuing to compute locally despite uncertainties in their uplinks.
Only devices with active uplinks can transmit their updates to the parameter server.
Each device performs $s = 10$ steps of local stochastic gradient descent with a mini-batch size of $32$.
The results are obtained over five repetitions with a total $T=800$ rounds.
We use a multi-layer perceptron as the neural network with ReLU activations.
Hyperparameter settings and network details are deferred to~Appendix E.
We use MNIST \cite{lecun2010mnist} dataset, which admits $10$ classes of images, as the basis for our simulations.
We first partition the datasets according to a Dirichlet distribution parameterized by $0.1$, which creates a highly non-\iid local data distribution across devices. 
Then, we assign partitioned data samples to devices.

\noindent{\bf Wireless Channel Model.}
The devices are uniformly distributed within a cell of radius $r_{\max} = 1500$ meters centered on the parameter server. 
The communication bandwidth is $B = 1$ MHz with carrier frequency $f_c = 2.4 $ GHz, and the transmission power is set to be $P_{\text{tx}} = 0$ dBm. 
The noise power spectral density at the parameter server is $N_0 = -173$ dBm/Hz.
For a device at a distance $D$ from the parameter server, the statistical gain is modeled as
\begin{equation}
  \Lambda(D)=
    10^{-\mathrm{PL}_{\mathrm{ref}}/10}
    \big(\frac{D}{D_0}\big)^{-n},
  \label{eq:path_loss_model}
\end{equation}
where $D$ is the device-to-parameter-server distance, $D_0 = 1$~meter is the
reference distance, $\mathrm{PL}_{\mathrm{ref}} = 50$~dB is the
path loss at $D_0$, and $n = 3.5$ is the path-loss exponent, 
which yields a highly heterogeneous loss profile across devices.
Since devices move according to the mobility model below, we denote $\Lambda_{i}^{t} \triangleq \Lambda(\bm{{\rm p}}_{i}^{t})$ as a function of its position $\bm{{\rm p}}_i^t$ for the time-varying statistical gain of device $i$ in round $t$.
The small-scale fading coefficient is $h_{i,t} \sim \calC\calN(0, \Lambda_{i}^{t})$, drawn independently across devices and rounds.

\noindent{\bf Mobility Model.}
To simulate time-varying channel conditions while preserving statistical
channel heterogeneity across devices, we design a {\em bounded} random waypoint (RWP) model~\cite{hyytia2007random}.
Each device $i$ starts at the initial position $\mathbf{p}_{i}^{0} \in \reals^2$
drawn from the stratified placement described in {\em Wireless Channel Model},
and a personal territory of radius $r_{\text{local}}=50$ m, centered on
$\mathbf{p}_{i}^{0}$, is fixed for the duration of training.
At the start of each leg, device $i$ draws a target waypoint
$\mathbf{w}_i$ uniformly at random over its personal territory disc
and a travel speed $v_i \sim \mathcal{U}[v_{\min}, v_{\max}]$.
It then moves in a straight line toward $\mathbf{w}_i$ at speed $v_i$ for a duration of $\tau$
until the waypoint is reached, at which point a new waypoint and speed are drawn immediately, and the next leg begins.
To draw $\mathbf{w}_i$ uniformly over the personal territory disc of
radius $r_{\text{local}}$, one samples an angle $\phi \sim \calU [0, 2\pi]$
and a radius via the inverse-CDF of the marginal PDF
$f(r) = 2r / r_{\text{local}}^2$,
giving waypoint coordinates
$\mathbf{w}_i = \mathbf{p}_{i}^{0} + (r\cos\phi,\, r\sin\phi)$,
clipped to the cell boundary.
At the end of round $t$, device $i$ has traveled at most
$\Delta_i = v_i \tau$ along its current leg, and its updated position is
\begin{small}
\begin{equation}
  \mathbf{p}_{i}^{t+1} = \mathbf{p}_{i}^{t}
    + \min\pth{1,
        \frac{\Delta_i}{\norm{\mathbf{w}_i - \mathbf{p}_{i}^{t}}}}
      (\mathbf{w}_i - \mathbf{p}_{i}^{t}),
  \label{eq:position_update}
\end{equation}
\end{small}
where $\tau = 8$ seconds.

\noindent{\bf Baseline algorithms.}
We compare the proposed~\fedopi~with five baseline algorithms.
(i) The ideal FedAvg algorithm \cite{mcmahan2017communication} aggregates each device's update without any receiver noise,
(ii) the SCA algorithm \cite{abrar2025non},
(iii) the BB-interior algorithm \cite{zhu2019broadband},
(iv) the BB-alternative algorithm \cite{zhu2019broadband},
and (v) the vanilla OTA \cite{yang2020federated}.
\prettyref{tab:cdi knowledge} specifies the CDI knowledge of each algorithm.
Details of algorithm implementations are deferred to~Appendix E.

\subsection{Experiment Results}
\label{sec: exp results}
\begin{figure}[!t]
    \centering
    \begin{subfigure}[b]{.85\linewidth}
    \includegraphics[width=\textwidth,trim=0.1cm 0 0 0, clip]{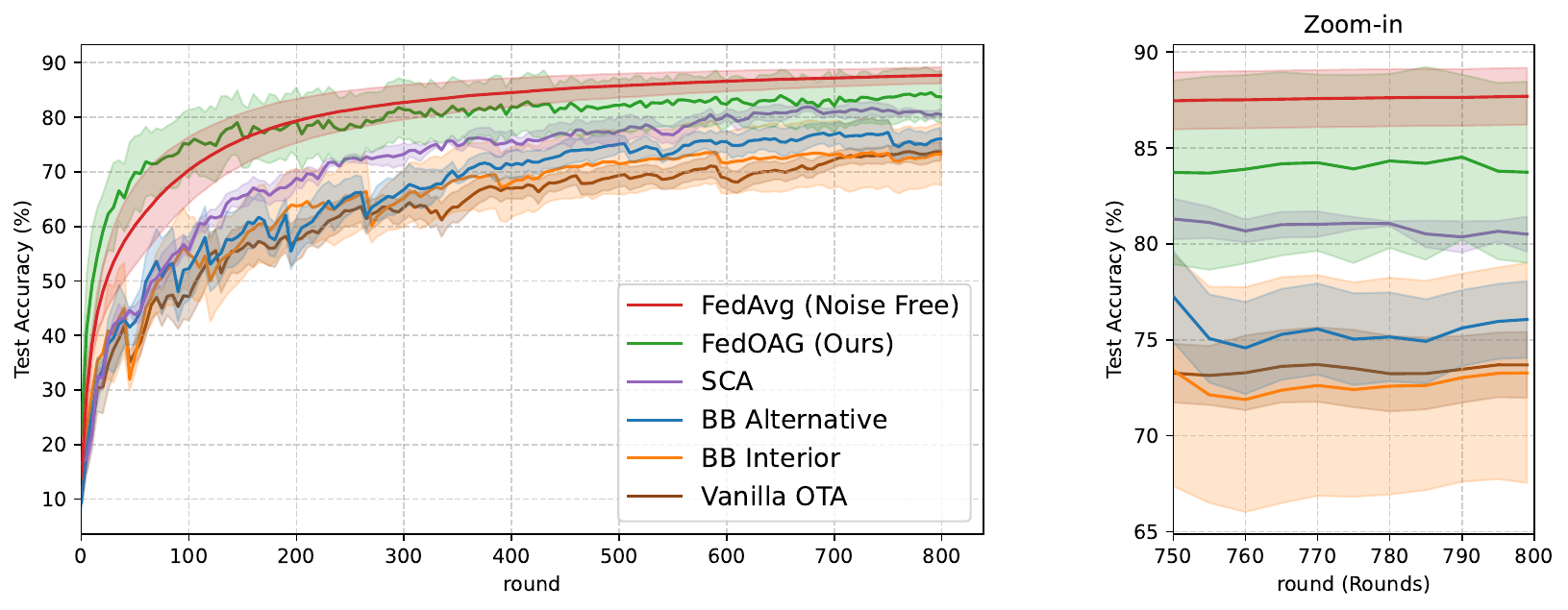}    
    \caption{Test accuracy for stationary statistical CDI dynamics.}
    \label{fig: stationary}
    \end{subfigure}

    \begin{subfigure}[b]{.85\linewidth}
    \includegraphics[width=\textwidth,trim=0.1cm 0 0 0, clip]{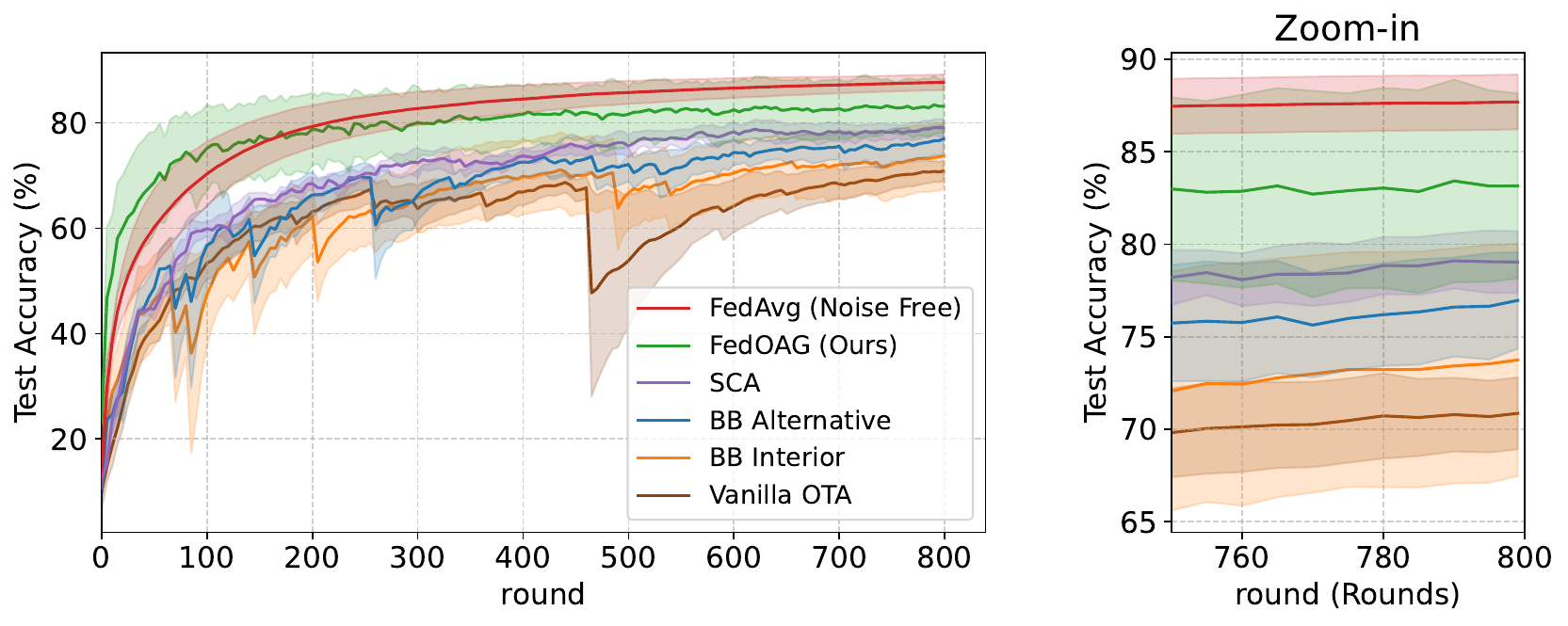}    
    \caption{Test accuracy for non-stationary statistical CDI dynamics.}
    \label{fig: non-stationary}
    \end{subfigure}

    \begin{subfigure}[b]{.85\linewidth}
    \includegraphics[width=\textwidth,trim=0.1cm 0 0 0, clip]{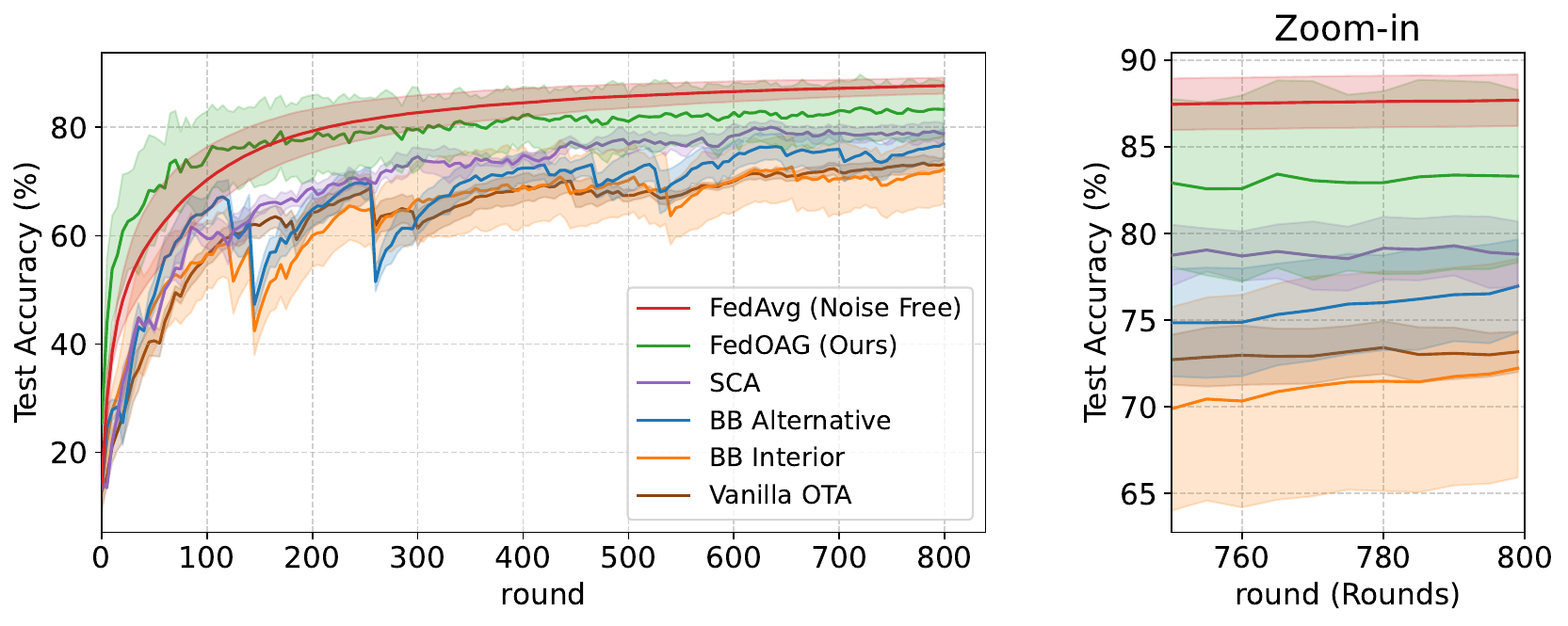}    
    \caption{Test accuracy for mixed non-stationary statistical CDI dynamics.}
    \label{fig: mixed non-stationary}
    \end{subfigure}

    \caption{\small Comparisons of different OTA federated learning algorithms on the MNIST dataset with $m = 30$ devices. 
    The results are obtained over five repetitions with solid curves depicting mean test accuracy, and the shaded area denotes standard deviation.
    }
    \label{fig:final results}
\end{figure}
\begin{figure}[!t]
    \centering
    \includegraphics[width=.85\linewidth]{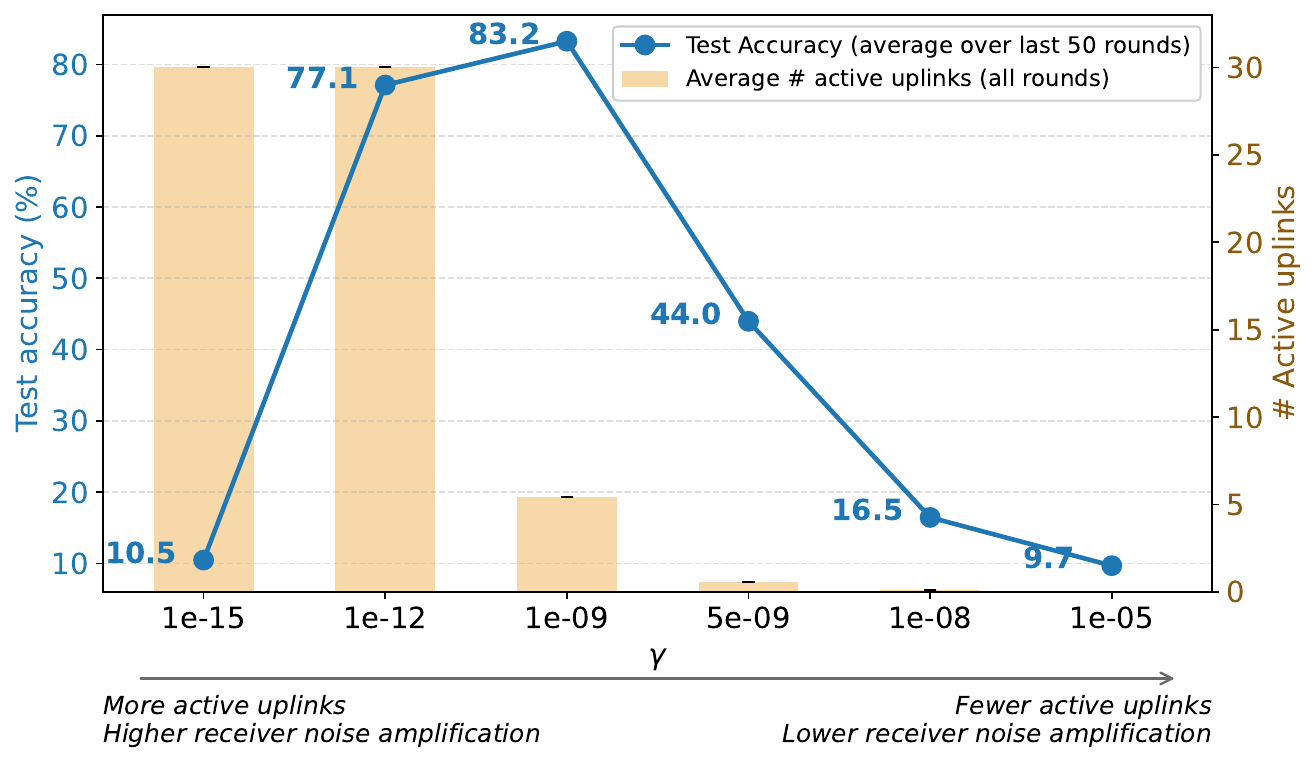}
    \caption{\small
    Comparison of \fedopi~under different $\gamma$ on MNIST with $m=30$ devices
    in the stationary regime, averaged over five repetitions. 
    The blue curve (left axis) shows test accuracy over the final $50$ rounds; yellow bars (right axis) show the average number of active devices per round. Smaller $\gamma$ lowers the truncation threshold, yielding more active uplinks but greater receiver noise amplification, and vice versa. 
    $\gamma$ values on the $x$-axis are uniformly spaced for readability.
    }
    \label{fig:impacts of gamma}
\end{figure}
We consider three mobility regimes.
(i) Stationary:
$v_{\min} = v_{\max} = 0$ m/s. In this regime, devices do not move throughout training. statistical gains $\Lambda_{i}^{t} = \Lambda_{i}^{0}$ for all $i \in [m]$ and $t \in [T]$.
(ii) Non-stationary: Following the mobility classifications defined in \cite[Table 7.2-3]{3gpp38901} and \cite{iturm1225}, we consider the speed range $v_{\min} = 0$ m/s, $v_{\max} = 2.5$ m/s $=9$ km/h, which covers walking to cycling speeds.
(iii) Mixed non-stationary: Half of the devices remain stationary throughout training. The other half are pedestrian mobile with $v_{\min} = 0$ m/s, and $v_{\max} = 2.5$ m/s.

Ideal FedAvg, which aggregates updates without receiver noise, serves as the
benchmark. \prettyref{fig:final results} shows that \fedopi~achieves the
highest test accuracy among all distributed OTA baselines, maintaining
around 83\% across mobility regimes. The SCA algorithm reaches
about 81\% in the stationary regime but drops below 80\% under
non-stationarity. 
Although the SCA tunes its pre-scalars through a bias-variance tradeoff, its convergence remains biased, whereas \fedopi~converges asymptotically to a stationary point of the non-convex objective.
Figs.~\ref{fig: non-stationary} and~\ref{fig: mixed non-stationary} further
show that vanilla OTA, BB-interior, and BB-alternative fluctuate heavily
under mobility, while \fedopi~remains stable. 
The BB methods admit only a subset of devices per round, yielding a highly dynamic aggregation population that injects variance into the trajectory. Vanilla OTA, by forcing every device to transmit, is dominated by the weakest
channel, which disproportionally amplifies receiver noise. 
\fedopi's shaded band is comparable to the SCA and BB-alternative but centered at a higher mean test accuracy, while vanilla OTA and BB-interior show markedly wider dispersion. Finally,~\fedopi~requires only local small-scale CDI at each device, avoiding the overhead and latency of global CDI acquisition at the parameter server.

We can see from
\prettyref{fig:final results} that \fedopi~exhibits no
significant slowdown, confirming the mild impact of stale updates
(Lemma 7 in Appendix).
In fact,~\fedopi~achieves the fastest convergence among the OTA baselines. 
This is partly enabled by the ability to tune $\gamma$ to balance uplink availability against receiver noise.
\prettyref{fig:impacts of gamma} illustrates this tradeoff: smaller $\gamma$
activates more uplinks but degrades utility due to significantly amplified receiver noise, while larger $\gamma$
suppresses noise at the cost of fewer active uplinks. 
Test accuracy peaks at an intermediate $\gamma = 10^{-9}$, where the two effects are jointly balanced.

\section{Conclusion}
\label{sec: conclusion}
In this paper, we have shown that heterogeneous and time-varying statistical CDIs induce highly uncertain uplink availability probabilities $ p_i^t$.
To tackle this, we have proposed~\fedopi, which converges to a stationary point of an unbiased non-convex objective 
via truncated transmission with gradient normalization
and a postponed global broadcast.
Notably, the algorithm achieves convergence without knowing the underlying true dynamics in $p_i^t$'s or the fading models.
Future work will investigate how to improve storage efficiency, 
avoid the need to save checkpoints, 
relax~\prettyref{ass: bounded true gradient}, 
and develop a CDI-agnostic tuning framework for $\gamma$.

\section*{Acknowledgements}
We gratefully acknowledge the support from the NSF CAREER award under grants 2129015 and 2340482. The views and conclusions contained in this document are those of the authors and should not be interpreted as representing the official policies, either expressed or implied, of the NSF or the U.S. Government. The U.S. Government is authorized to reproduce and distribute reprints for Government purposes notwithstanding any copyright notation herein.

\bibliographystyle{ACM-Reference-Format}
\bibliography{bib,ota_fl}

\clearpage
\newpage
\onecolumn
\appendix

\begin{figure*}[!t]
\begin{minipage}{\textwidth}
\begin{center}
{\LARGE Appendix for
{\bf  ``Over-the-Air Federated Learning in Heterogeneous Mobile Wireless Networks''}}

\bigskip
{\large
Ming Xiang,
Nicolò Michelusi,
Yonina C. Eldar,
Lili Su
}

\noindent\hrulefill
\end{center}
\end{minipage}
\end{figure*}

\section{Additional Related Work}
\label{app: dynamic device availability}

\noindent {\bf Dynamic device availability.}
Such dynamic client availability is often handled either via proactively server-side scheduling \cite{mcmahan2017communication, cho2022towards} or 
by modeling uncontrollable device availability through homogeneous Markovian dynamics \cite{ribero2022federated}, 
regularized participation \cite{wang2022,crawshaw2024federated},
and time-invariant but heterogeneous available probabilities \cite{wang2023lightweight,ying2025exact}.
Xiang et al. develop resilient algorithms for heterogeneous and time-varying device availability in \cite{xiang2023towards,xiang2024efficient,xiang2025empowering} in standard digital networks, where device updates are transmitted and decoded individually without errors.
The OTA federated learning, however, relies on the superposition property of analog waveforms, the absence of individual device updates fundamentally alters the aggregation protocol.
For example, the algorithms proposed in \cite{gu2021fast,jhunjhunwala2022fedvarp,yan2023federated}, despite their superior numerical performance, require the parameter server to access each device's most recent $d$-dimensional individual multi-step gradients. 
Unless we can have a separate procedure to extract and deliver those memorized updates, direct application of these algorithms to OTA remains infeasible.
Our algorithm is specifically designed for noisy OTA aggregation and 
can correct the bias arising from heterogeneous and time-varying uplink $p_i^t$'s in wireless communication channels.

\noindent {\bf Asynchronous federated learning.}
Another related line of research is asynchronous federated learning \cite{xie2019asynchronous,nguyen2022federated,toghani2022unbounded,koloskova2022sharper}, in which devices experience message transmission delays, and the reported updates may be stale.
However, the proposed methods therein assume that the uplinks of all devices are either always available or uniformly available at random, which is infeasible in practice.

\section{Proof of~\prettyref{lmm: ergodicity}}
We include the proof here for completeness.
For ease of exposition, we drop time index $t$ in this proof.
We first get the explicit expression for $\expect{W^2_{jj^{\prime}}\mid \calA \neq \emptyset}$. 
Suppose that $\calA\not=\emptyset$, 
we have
\begin{align*}
(W^2)_{jj^{\prime}} & = \sum_{k=1}^m W_{jk}W_{j^{\prime}k} 
= W_{jj}W_{j^{\prime}j} + W_{jj^{\prime}}W_{j^{\prime}j^{\prime}} 
\indc{j \not = j^{\prime}}
+ \sum_{k\in [m]\setminus \{j, j^{\prime}\}} W_{jk}W_{j^{\prime}k}.   
\end{align*}
When $k\not=j$ and $k\not=j^{\prime}$ by Eq.\,\eqref{eq: gossiping matrix}, we have 
\begin{align*}
W_{jk}W_{j^{\prime}k} & = \frac{1}{|\calA|^2} \indc{j\in \calA} \indc{j^{\prime}\in \calA}\indc{k\in \calA}.     
\end{align*}
In addition, we have $W_{jj}W_{j^{\prime}j}  = \frac{1}{|\calA|^2} \indc{j\in \calA}\indc{j^{\prime}\in \calA},$ and $W_{j^{\prime}j^{\prime}}W_{jj^{\prime}} = \frac{1}{|\calA|^2} \indc{j\in \calA}\indc{j^{\prime}\in \calA}$.  
Thus, 
\begin{itemize}
\item 
For  $j \neq j^\prime$, we have
\begin{align*}
&(W^2)_{jj^{\prime}}
= \sum_{k=1}^m W_{jk}W_{j^{\prime}k} 
= \frac{1}{|\calA|}\indc{j\in \calA} \indc{j^{\prime}\in \calA};
\end{align*}
\item
For $j = j^\prime$, we have
\begin{align*}
&(W^2)_{jj} = \frac{1}{|\calA|}\indc{j\in \calA} + \pth{1-\indc{j\in \calA}}.
\end{align*}
\end{itemize}
\noindent{\em (i) The general case where $p_i^t \ge c$.}
In the special case where $\calA = \emptyset$, we simply have $W = \identity$ by the algorithmic clauses.
Therefore,
$\expect{W_{j j^\prime} \mid \calA = \emptyset} \ge 0$ holds for any pair of $j,j^\prime  \in [m]$.
It follows, by the law of total expectation and for all $j,j^\prime \in [m]$, that
\begin{align}
\expect{W_{j j^\prime}}
&=
\expect{W_{j j^\prime} \mid \calA = \emptyset}
\prob{\calA = \emptyset} 
+
\expect{W_{j j^\prime} \mid \calA \neq \emptyset}
\prob{\calA \neq \emptyset} 
\ge
\expect{W_{j j^\prime} \mid \calA \neq \emptyset}
\prob{\calA \neq \emptyset}.
\label{eq: W matrix total expectation}
\end{align}
\begin{itemize}
\item 
For $j \neq j^\prime$, it holds that
\begin{align*}
\expect{(W^2)_{jj^{\prime}}\mid \calA \neq \emptyset} 
&=
\expect{\frac{1}{|\calA|}\indc{j\in \calA} \indc{j^{\prime}\in \calA} \Big| \calA \neq \emptyset} 
\overset{(\rma)}{\ge} 
\expect{\frac{1}{m}\indc{j\in \calA} \indc{j^{\prime}\in \calA} \Big| \calA \neq \emptyset} 
= 
\frac{p_j p_{j^{\prime}}}{m} 
\ge \frac{c^2}{m},
\end{align*}
where $(\rma)$ holds because $\abth{\calA} \le m$
;
\item
For $j = j^\prime$, it holds that
\begin{align*}
&\expect{(W^2)_{jj}\mid \calA\neq\emptyset} 
= 
\expect{\frac{1}{|\calA|}\indc{j\in \calA} + \pth{1-\indc{j\in \calA}} \Big| \calA \neq \emptyset} 
\ge
\expect{\frac{1}{m}\qth{\indc{j\in \calA} + \pth{1-\indc{j\in \calA}}} \Big| \calA \neq \emptyset} 
=
\frac{1}{m}
\ge
\frac{c^2}{m}.
\end{align*}
\end{itemize}
Recall that $M = \expect{W^2}$.
Next, we show that each element of $M$ is lower bounded.
\begin{align*}
M_{jj^{\prime}}
\ge 
\expect{(W^2)_{j j^\prime} 
\mid \calA \neq \emptyset}
\prob{\calA \neq \emptyset}
\ge \frac{c^2}{m} \qth{1-\pth{1-c}^m}.
\end{align*} 
We note that $\rho (t) = \lambda_2 (M)$,
where $\lambda_2$ is the second largest eigenvalue of %
matrix $M$. 
A Markov chain with $M$ as the transition matrix is ergodic as the chain is (1) {\it irreducible}: $M_{j j^\prime}\ge  \frac{c^2}{m}\qth{1-\pth{1-c}^m}>0$ for $j,j^\prime \in[m]$ and (2) {\it aperiodic} (it has self-loops).  
In addition, $W$ matrix is by definition doubly-stochastic. 
Hence, $M$ has a uniform stationary distribution
$
\pi = \frac{1}{m}\Indc^\top.
$
Furthermore, the irreducible Markov chain is reversible since 
it holds
for all the states that
$
\pi_i M_{ij} = \pi_j M_{ji}.
$
The conductance of a reversible Markov chain \cite{jerrum1988conductance}
with  a transition matrix $M$ can be bounded by
\begin{align*}
\Phi(M) 
&= \min_{\sum_{i\in\calS} \pi_i \le \frac{1}{2}} \frac{\pi_i \sum_{i\in\calS, j\notin \calS} M_{ij}}{\sum_{i\in \calS} \pi_i}
\ge \frac{\pth{\frac{c}{m}}^2\qth{1-\pth{1-c}^m}\abth{\calS}\abth{\bar{\calS}}}{\frac{\abth{\calS}}{m}} = \frac{c^2\qth{1-\pth{1-c}^m}}{m} \abth{\bar{\calS}},
\end{align*}
where
$
\abth{\bar{\calS}} = m - \abth{\calS} \ge \frac{m}{2}.
$
From Cheeger's inequality, we know that
$
\frac{1 - \lambda_2}{2} \le \Phi(M) \le \sqrt{2 \pth{1-\lambda_2}}.
$
Finally, we have
\begin{align*}
\Phi(M) 
&\ge \frac{c^2\qth{1-\pth{1-c}^m}}{m} \abth{\bar{\calS}} \ge \frac{c^2\qth{1-\pth{1-c}^m}}{2}.
\end{align*}
Thus,
$
\rho(t) = \lambda_2 \le 1 - \frac{\Phi^2\pth{M}}{2} \le 1 - \frac{c^4\qth{1-\pth{1-c}^m}^2}{8}.
$

\section{Convergence Analysis}
Our analysis leverages compact matrix representations of several key quantities, which are especially useful for characterizing the consensus error.  
Formally, define  
\begin{align*}
\bm{X}^{(t)} & = \qth{\x_1^t, \cdots, \x_m^t};   \\
\bm{Z}^{(t)} & = \frac{B_t}{\gamma}
\frac{\indc{\abth{\calA^t} > 0}}{{{\abth{\calA^t} + \indc{\calA^{t} = 0}}}}
\qth{\bm{z}^t \indc{1 \in \calA^t}, \cdots, \bm{z}^t \indc{m \in \calA^t}};   \\
\bm{G}_0^{(t)} &= \qth{s \nabla \ell_1(\x_1^{(t,0)}), \cdots, s \nabla \ell_m(\x_m^{(t,0)})};\\
\bm{G}^{(t)} & = \qth{\sum_{r=0}^{s-1}\nabla \ell_1(\x_1^{(t,r)}), \cdots,  \sum_{r=0}^{s-1}\nabla \ell_m(\x_m^{(t,r)})} ;\\
\nabla \bm{F}^{(t)} & = \qth{\nabla F_1(\x_1^t), \cdots,  \nabla F_m(\x_m^t)},
\end{align*}
where $\bm{X}^{(t)}$ is a concatenated model parameter matrix for all devices at the {\em beginning} of round $t$,
$\bm{G}_0^{(t)}$ is an initial local stochastic gradient matrix,
$\bm{G}^{(t)}$ is a local stochastic gradient accumulation,
$\nabla \bm{F}^{(t)}$ is a true local gradient matrix,
and $\bm{Z}^{(t)}$ is a scaled receiver noise matrix.
When $\calA^t \not = \emptyset$, the receiver noise is rescaled by $B_t / (\gamma \abth{\calA^t})$.
When $\calA^t = \emptyset$, it is easy to check that we simply have $\bm{Z}^{t} = 0 \identity$.

\subsection{\bf Proof of~\prettyref{lmm: descent lemma}}
Recall that
\begin{align*}
\bar{\x}^t \triangleq \frac{1}{m}\sum_{i=1}^m \x_i^t. 
\end{align*} 
Unroll one iteration of $\bar{\x}^{t+1}$, we have 
\begin{align*}
    \bar{\x}^{t+1} - \bar{\x}^{t}
    &=
    \frac{1}{m} 
    \sum_{i \in \calA^t}
    \pth{
    \frac{1}{\abth{\calA^t}} 
    \pth{\frac{\eta B_t \bm{z}^t}{\gamma}
    +
    \sum_{i\in \calA^t} \tilde \x_i^{t} 
    }
    - \x_i^t} 
    +
    \frac{1}{m} 
    \sum_{i \not \in \calA^t}
    \pth{
    \tilde \x_i^{t}
    - \x_i^t
    } 
    =
    -
    \frac{\eta}{m} 
    \sum_{i=1}^m
    \sum_{r=0}^{s-1}
    \nabla \ell_{i}(\x_{i}^{(t,r)})
    +
    \frac{\eta }{m} \frac{B_t \bm{z}^t}{\gamma}
    .
\end{align*}

By~\prettyref{ass: 2 smmothness},
we have 
\begin{align*}
F(\bar{\x}^{t+1})  -  F(\bar{\x}^{t}) 
&\le 
\iprod{\nabla F(\bar{\x}^{t})}{\bar{\x}^{t+1} - \bar{\x}^{t}} + \frac{L}{2}\norm{\bar{\x}^{t+1}- \bar{\x}^{t}}^2 
= 
\iprod{\nabla F(\bar{\x}^{t})}{- \frac{\eta }{m} \bm{G}^{(t)} \bm{1}} +  
\iprod{\nabla F(\bar{\x}^{t})}{\frac{\eta}{m} \frac{B_t}{\gamma} \bm{z}^{t}} 
+
\frac{L}{2}\norm{\frac{\eta}{m} \bm{G}^{(t)} \bm{1} - 
\frac{\eta B_t \bm{z}^t}{\gamma m} 
}^2.  
\end{align*}
Taking expectations conditional on filtration $\calF^t$, we have 
\begin{align*}
&\expect{F(\bar{\x}^{t+1})  -  F(\bar{\x}^{t}) \mid \calF^{t}} 
\le \expect{\iprod{\nabla F(\bar{\x}^{t})}{- \frac{ \eta}{m} \bm{G}^{(t)} \bm{1}} +  
\iprod{\nabla F(\bar{\x}^{t})}{\frac{\eta}{m} \frac{B_t}{\gamma} \bm{z}^{t}} 
+
\frac{L}{2}\norm{\frac{\eta}{m} \bm{G}^{(t)} \bm{1} - 
\frac{\eta B_t \bm{z}^t}{\gamma m}
}^2 \mid \calF^{t}}. 
\end{align*}
We know, by zero-mean Gaussian noise and independence, that 
\begin{align*}
    \expect{\iprod{\nabla F(\bar{\x}^{t})}{\frac{1}{\gamma m} B_t \bm{z}^{t}} \mid \calF^t} 
    &=
    \iprod{\nabla F(\bar{\x}^{t})}{\frac{1}{\gamma m} \expect{B_t \bm{z}^{t} \mid \calF^t} } 
    =
    \iprod{\nabla F(\bar{\x}^{t})}{\frac{1}{\gamma m} \expect{B_t \mid \calF^t} \expect{\bm{z}^{t} \mid \calF^t} } 
     = 0,
\end{align*}
where the last equality holds because $B_t$ and $\bm{z}^t$ are independent.
It follows that
\begin{align*}
\expect{\frac{L}{2}\norm{\frac{\eta}{m} \bm{G}^{(t)} \bm{1} - 
\frac{\eta}{\gamma m} B_t \bm{z}^t
}^2 \mid \calF^{t}}&\le
\expect{L \pth{\norm{\frac{\eta}{m} \bm{G}^{(t)} \bm{1}}^2 +\norm{\frac{\eta B_t \bm{z}^t}{\gamma m}}^2} \Big | \calF^{t}} 
\le
L 
\expect{\norm{\frac{\eta}{m} \bm{G}^{(t)} \bm{1}}^2 \mid \calF^{t}} 
+
\frac{\eta^2 L d \sigma_z^2}{\gamma^2 m^2}
\expect{B_t^2 \mid \calF^t}. 
\end{align*}
For ease of notations, we abbreviate $\nabla \ell_i(\x_i^{\pth{t,k}})$ as $\nabla \ell_i^{\pth{t,k}}.$

\paragraph{\em (i) Bounding
$\mathbb{E}[\iprod{\nabla F(\bar{\x}^{t})}{- \frac{\eta}{m} \nabla \bm{G}^{(t)} \bm{1}} \mid \calF^{t}]$.
}
\begin{align*}
\expect{\iprod{\nabla F(\bar{\x}^{t})}{- \frac{\eta}{m} \bm{G}^{(t)} \bm{1}} \mid \calF^{t}} 
&= - \frac{\eta}{m} \expect{\iprod{\nabla F(\bar{\x}^{t})}{\sum_{i=1}^m \sum_{k=0}^{s-1} \nabla \ell_i^{(t,k)}} \mid \calF^{t}}    \\
& = \underbrace{- \frac{s\eta}{m} \iprod{\nabla F(\bar{\x}^{t})}{\nabla \bm{F}^{(t)}\bm{1}}}_{(\rmA)} 
+ \underbrace{\expect{\frac{\eta}{m} \iprod{\nabla F(\bar{\x}^{t})}{\sum_{i=1}^m s \nabla \ell_i^{(t,0)} - \sum_{k=0}^{s-1} \nabla \ell_i^{(t,k)} }\mid \calF^{t}}}_{(\rmB)}.  
\end{align*}

Term $(\rmA)$ can be bounded as 
\begin{align*}
- s\eta \iprod{\nabla F(\bar{\x}^{t})}{ \frac{1}{m} \nabla \bm{F}^{(t)} \Indc} 
 &=  -\frac{s\eta}{2} \norm{\nabla F(\bar{\x}^{t})}^2 
+ \frac{s\eta }{2} \norm{\nabla F(\bar{\x}^{t}) - \frac{1}{m}\nabla \bm{F}^{(t)}\bm{1}}^2 
- \frac{s\eta }{2} \norm{\frac{1}{m}\nabla \bm{F}^{(t)}\bm{1}}^2\\
& \le  -\frac{s\eta }{2} \norm{\nabla F(\bar{\x}^{t})}^2 - \frac{s\eta}{2} \norm{\frac{1}{m}\nabla \bm{F}^{(t)}\bm{1}}^2
+ \frac{s\eta L^2 }{2m}\sum_{i=1}^m \norm{\bar{\x}^{t} - \x_i^t}^2.   
\end{align*}

For term (B), we have
\begin{align*}
\expect{\frac{\eta }{m} \iprod{\nabla F(\bar{\x}^{t})}{\sum_{i=1}^m s \nabla \ell_i^{(t,0)} - \sum_{k=0}^{s-1} \nabla \ell_i^{(t,k)}} \mid \calF^{t}} 
& = \frac{\eta}{m} \sum_{i=1}^m\iprod{\nabla F(\bar{\x}^{t})}{ \expect{ s \nabla \ell_i^{(t,0)} - \sum_{k=0}^{s-1} \nabla \ell_i^{(t,k)}\mid \calF^{t}}} \\
& \overset{(\rma)}{\le} \frac{\eta^2 s^2}{2} \norm{\nabla F(\bar{x}^{t})}^2 
+ \underbrace{\frac{1}{2m s^2} \sum_{i=1}^m \expect{\norm{s \nabla \ell_i^{(t,0)} - \sum_{k=0}^{s-1} \nabla \ell_i^{(t,k)}}^2 \Big | ~\calF^{t}}}_{(\rmB.1)}, 
\end{align*}
where inequality $(\rma)$ holds because of Young's inequality.
By Lemma \ref{lmm: local step perturbation}, we bound term $(\rmB.1)$ as follows  
\begin{align*}
\frac{1}{2m s^2} \sum_{i=1}^m \expect{\norm{s \nabla \ell_i^{(t,0)} - \sum_{k=0}^{s-1} \nabla \ell_i^{(t,k)}}^2 \mid \calF^{t}} 
& \overset{(\rmb)}{\le} \frac{1}{2m s^2} \sum_{i=1}^m \expect{ \kappa^2 \eta^2 \binom{s}{2}^2 L^2\norm{\nabla \ell_{i}^{\pth{t,0}}}^2 \mid \calF^{t}} \\
& = \frac{\kappa^2 \eta^2 \binom{s}{2}^2 L^2}{2m s^2} \sum_{i=1}^m \expect{\norm{\nabla \ell_{i}^{\pth{t,0}} - \nabla F_i(\x_i^t) + \nabla F_i(\x_i^t)}^2 \mid \calF^{t}} \\
& \overset{(\rmc)}{\le} \kappa^2 \eta^2 L^2 \sigma^2\frac{s^2}{4} + \frac{\kappa^2 \eta^2 s^2 L^2}{4m} \sum_{i=1}^m\norm{\nabla F_i(\x_i^t)}^2\\
& \le \kappa^2 \eta^2 s^2 L^2 \frac{L^2}{m} \sum_{i=1}^m \norm{\x_i^t - \bar{\x}^t}^2 
+ \kappa^2 \eta^2 s^2 L^2 (\zeta^2 +\sigma^2)
+ \kappa^2 \eta^2 s^2 L^2\pth{\beta^2 + 1}\norm{\nabla F(\bar{\x}^t)}^2, 
\end{align*} 
where 
inequality $(\rmb)$ follows from Lemma~\ref{lmm: local step perturbation},
inequality $(\rmc)$ follows from Assumption \ref{ass: bounded variance client-wise}, 
and the last inequality holds because of Proposition \ref{prop: average gradient to global gradient}. 
Combing the bounds of terms $(\rmA)$ and $(\rmB)$, we get 
\begin{align}
\label{eq: bound 1}
\nonumber
\expect{\iprod{\nabla F(\bar{\x}^{t})}{- \frac{\eta}{m} \bm{G}^{(t)} \bm{1}} \mid \calF^{t}} 
&\le- \qth{\frac{s\eta}{2} - \frac{\eta^2 s^2}{2} - \kappa^2 \eta^2 s^2 L^2\pth{\beta^2 + 1}} \norm{\nabla F(\bar{\x}^t)}^2 
 - \frac{s\eta}{2} \norm{\frac{1}{m}\nabla \bm{F}^{(t)}\bm{1}}^2 \\
&~~~ + \kappa^2 \eta^2 s^2 L^2 (\zeta^2 +\sigma^2)
+  \pth{\frac{s\eta L^2 }{2m} + \kappa^2 \eta^2 s^2 L^2 \frac{L^2}{m}} \sum_{i=1}^m \norm{\bar{\x}^{t} - \x_i^t}^2. 
\end{align}
\paragraph{\em (ii) Bounding $\expect{\norm{\frac{1}{m} \bm{G}^{(t)} \bm{1}}^2 \mid \calF^{t}}$.}
By adding and subtracting, we get
\begin{align*}
&\norm{\frac{1}{m} \bm{G}^{(t)} \bm{1}}^2  =  \norm{\frac{1}{m} \sum_{i=1}^m \sum_{k=0}^{s-1} \nabla \ell_i^{(t,k)}}^2 
\le 2 \underbrace{\norm{\frac{1}{m} \sum_{i=1}^m \sum_{k=0}^{s-1} \pth{\nabla \ell_i^{(t,k)} - \nabla \ell_i^{(t,0)}}}^2}_{(\rmC)} + 2 \underbrace{\norm{\frac{s}{m} \sum_{i=1}^m   \nabla \ell_i^{(t,0)}}^2}_{(\rmD)}. 
\end{align*}    
\noindent For term $(\rmC)$, %
by Lemma \ref{lmm: local step perturbation}, we have
\begin{align*}
\expect{\norm{\frac{1}{m} \sum_{i=1}^m \sum_{k=0}^{s-1} \pth{\nabla \ell_i^{(t,k)} - \nabla \ell_i^{(t,0)}}}^2 \mid \calF^t}
&\le  \frac{\kappa^2 \eta^2 s^4 L^2}{4m}\sum_{i=1}^m \expect{\|\nabla \ell_i^{(t,0)}\|_2^2 \mid \calF^t}\\
&\le \frac{\kappa^2 \eta^2 s^4 L^2}{2m} (\sum_{i=1}^m \expect{\norm{\nabla \ell_i^{(t,0)} - \nabla F_i(\x_i^t)}^2 \mid \calF^t} + \sum_{i=1}^m \norm{\nabla F_i(\x_i^t)}^2) \\
& \overset{(\rmd)}{\le}  \frac{\kappa^2 \eta^2 s^4 L^2\sigma^2}{2} + \frac{\kappa^2 \eta^2 s^4 L^2}{2m} \sum_{i=1}^m \norm{\nabla F_i(\x_i^t)}^2,
\end{align*}
where inequality $(\rmd)$ holds because of Assumption \ref{ass: bounded variance client-wise}.
For term $(\rmD)$,  by Assumption \ref{ass: bounded variance client-wise}, we likewise have 
\begin{align*}
&\frac{s^2}{m^2} 
\expect{\|{\sum_{i=1}^m   \nabla \ell_i^{(t,0)}}\|_2^2 \Big| \calF^{t}}  
\le 
\frac{2 s^2}{m}
\pth{\sigma^2
+ 
\sum_{i=1}^m \norm{\nabla F_i(\x_i^t)}^2}.
\end{align*}
Combing the above upper bounds of (C) and (D)
and applying Proposition \ref{prop: average gradient to global gradient}, we get 
\begin{align}
\nonumber
\expect{\norm{\frac{1}{m} \bm{G}^{(t)} \bm{1}}^2 \mid \calF^{t}} 
&\le 2 s^2\sigma^2\pth{\frac{2}{ m} + \frac{\kappa^2 \eta^2 s^2 L^2}{2}} 
+ 6 s^2L^2\pth{2 +\frac{\kappa^2 \eta^2 s^2 L^2}{2}} \frac{1}{m} \sum_{i=1}^m \norm{\x_i^t - \bar{\x}^t}^2 \\
&~~~
+6 s^2\pth{\beta^2 + 1}\pth{2 +\frac{\kappa^2 \eta^2 s^2 L^2}{2}}\norm{\nabla F(\bar{\x}^t)}^2 
+ 6 s^2\zeta^2 \pth{2 +\frac{\kappa^2 \eta^2 s^2 L^2}{2}}. 
\label{eq: bound 2}
\end{align}
\paragraph{\em (iii) Putting them together.}
Combining \eqref{eq: bound 1} and \eqref{eq: bound 2}, we get
\begin{align*}
\expect{F(\bar{\x}^{t+1})  -  F(\bar{\x}^{t}) \mid \calF^{t}}  
&\le \kappa^2 \eta^2 s^2 L^2 (\zeta^2 +\sigma^2) 
- \frac{\eta s  }{2} \norm{\frac{1}{m}\nabla \bm{F}^{(t)}\bm{1}}^2\\
&~~~ + \frac{L\eta^2 }{2}6 s^2\zeta^2 \pth{2 +\frac{\kappa^2  L^2}{2}} 
 - \qth{\frac{\eta s}{2} - \frac{\eta^2 s^2}{2} - \kappa^2 \eta^2 s^2 L^2\pth{\beta^2 + 1}} \norm{\nabla F(\bar{\x}^t)}^2\\
&~~~ + \pth{\frac{s\eta L^2 }{2m} + \kappa^2 \eta^2 s^2 \frac{L^4}{m}} \sum_{i=1}^m \norm{\x_i^t - \bar{\x}^{t}}^2 \\
& ~~~ + \frac{L\eta^2}{2}6s^2L^2\pth{2 +\frac{\kappa^2 L^2}{2}} \frac{1}{m} \sum_{i=1}^m \norm{\x_i^t - \bar{\x}^t}^2 \\
& ~~~ +\frac{ L\eta^2}{2}6 s^2\pth{\beta^2 + 1}\pth{2 +\frac{\kappa^2 L^2}{2}}\norm{\nabla F(\bar{\x}^t)}^2 \\
&~~~ + \frac{ L\eta^2}{2} 2s^2\sigma^2\pth{\frac{2}{m} + \frac{\kappa^2 L^2}{2}}
+ 
\frac{\eta^2 L d \sigma_z^2 \expect{B_t^2 \mid \calF^t}}{\gamma^2 m^2}
. 
\end{align*}
Assuming that $\eta\le {1}/[{108s L (\kappa + 1)(\beta^2+1)}]$, the above displayed equation can be simplified as 
\begin{align*}
\expect{F(\bar{\x}^{t+1})  -  F(\bar{\x}^{t}) \mid \calF^{t}}
&\le -\frac{\eta s }{3} \norm{\nabla F(\bar{\x}^t)}^2 
+ \eta s\frac{L^2 }{m} \sum_{i=1}^m \norm{\x_i^t - \bar{\x}^t}^2
+ \eta^2 s^2  6L\pth{\zeta^2+\sigma^2}\pth{1+L^2\kappa^2} 
+
\frac{\eta^2 L d \sigma_z^2 \expect{B_t^2 \mid \calF^t}}{\gamma^2 m^2}
. 
\end{align*}

\subsection{\bf Proof of~\prettyref{lmm:gradient Bt}}
    We have
    \begin{align*}
        B_t^2 &\triangleq
        \max_{i \in \calA^t} \norm{\Delta_{i}^{t}}^2
        \le
        \sum_{i=1}^m
        \norm{\Delta_{i}^{t}}^2
        =
        C_t^2
        =
        \sum_{i=1}^m
        \norm{\sum_{k=\tau_i(t) + 1}^{t}
        \sum_{r=0}^{s-1} \nabla \ell_i (\x_i^{(k,r)}; \xi_i^{k})}^2
        \le
        \sum_{i=1}^m
        (t - \tau_i(t))
        \sum_{k=\tau_i(t) + 1}^{t}
        \norm{\sum_{r=0}^{s-1} \nabla \ell_i (\x_i^{(k,r)};\xi_i^{k})}^2
        .
    \end{align*}

    Taking expectation and averaging over $T$ rounds, we have
    \begin{align}
        \frac{1}{T}\sum_{t=0}^{T-1}\expect{C_t^2}
        &\le
        \frac{1}{T}
        \sum_{t=0}^{T-1}
        \sum_{i=1}^m
        \expect{
            (t - \tau_i(t))
            \sum_{k=\tau_i(t) + 1}^{t}
            \norm{\sum_{r=0}^{s-1} \nabla \ell_i^{(k,r)}}^2
        }.
        \label{eq:bt_start}
    \end{align}

\noindent\textbf{Change of summation order.}
Writing $\sum_{k=\tau_i(t) + 1}^{t} \norm{\sum_{r=0}^{s-1} \nabla \ell_i^{(k,r)}}^2= \sum_{k=0}^{t}\indc{\tau_i(t) + 1\le k} \norm{\sum_{r=0}^{s-1} \nabla \ell_i^{(k,r)}}^2$, 
and reordering the terms, 
we obtain
\begin{align}
    \eqref{eq:bt_start}
    &=
    \frac{1}{T}
    \sum_{i=1}^m
    \sum_{k=0}^{T-1}
    \Expect \Bigg[{
        \norm{\sum_{r=0}^{s-1} \nabla \ell_i^{(k,r)}}^2
        \underbrace{
        \sum_{t=k}^{T-1}
        (t - \tau_i(t) )
        \indc{\tau_i(t) + 1 \le k}
        }_{\triangleq\; S_{i,k}}
    }\Bigg].
    \label{eq:bt_swapped}
\end{align}
For every term in the sum, $\tau_i(t) + 1\le k \le t$,
so the gradient factor always involves iterating $\x_i^k$
and the staleness factor $S_{i,k}$ always depends on
rounds $t \ge k$ only.

\noindent\textbf{Conditional independence.}
The gradient factor $\norm{\sum_{r=0}^{s-1} \nabla \ell_i^{(k,r)}}^2$
depends on the \emph{data} randomness $\xi_i^k$ at round $k$,
while $S_{i,k}$ depends on the \emph{channel} randomness
through the activation indicators
$\{\indc{\tau_i(t) + 1 \le k}\}_{t \ge k}$.
Given $\calF^k$, under the standard assumption that future channel randomness and local data are mutually independent, we have
\begin{align*}
    \expect{
        \norm{\sum_{r=0}^{s-1} \nabla \ell_i^{(k,r)}}^2
        S_{i,k} ~\Big |~ \calF^k
    }
    &=
    \expect{\norm{\sum_{r=0}^{s-1} \nabla \ell_i^{(k,r)}}^2 ~\Big |~ \calF^{k}}
    \expect{S_{i,k} ~\Big |~ \calF^{k}}
\end{align*}

\noindent\textbf{Tower property.}
For each $(i, k)$, apply the tower property
conditioning on $\calF^k$, the sigma-algebra generated by
all randomness up to round $k$:
\begin{align}
    \expect{
        \norm{\sum_r \nabla \ell_i^{(k,r)}}^2
        S_{i,k}
    }
    =
    \expect{
        \expect{
            \norm{\sum_r \nabla \ell_i^{(k,r)}}^2
            S_{i,k}
        \;\Big|\; \calF^k}
    }.
    \label{eq:tower}
\end{align}

Let $\tau_{i}^{+}(k) \triangleq \min\{t > k : i \in \calA^t\}$ denote the
first round after $k$ at which uplink $i$ becomes active.
Since $\indc{\tau_i(t) + 1 \le k} = 0$ for all $t \ge \tau_{i}^{+}(k) + 1$, the
sum $S_{i,k}$ accumulates only until device $i$'s next contribution:
\begin{align}
    \notag
    S_{i,k}
    &=
    \sum_{t=k}^{\tau_{i}^{+}(k) }(t - \tau_i(t))
    \overset{(a)}{=}
    \sum_{j=0}^{\tau_{i}^{+}(k) - k} 
    \pth{j + k - \tau_i(k)}
    =
    \frac{(\tau_{i}^{+}(k) - k)(\tau_{i}^{+}(k) - k + 1)}{2}
    +
    (k - \tau_i(k)) \pth{\tau_{i}^{+}(k) - k + 1} \\
    &=
    \frac{(\tau_{i}^{+}(k) - k)^2}{2}
    +
    \frac{\tau_{i}^{+}(k) - k}{2}
    +(k - \tau_i(k)) \pth{\tau_{i}^{+}(k) - k}
    +(k - \tau_i(k)) 
    ,
\end{align}
where equality $(a)$ follows from change of variable $j = t - k$ and the fact that $\tau_i (k) = \tau_i(j + k )$ for $0\le j \le \tau_i^{+}(k) - k$ because the uplink is inactive during that period.
We know from~\prettyref{lmm: geo second moment main text} that $\expect{(\tau_{i}^{+}(k) - k)^2} \le \frac{2}{c^2}$, $\expect{k - \tau_i(k)} \le \frac{1}{c}$, and that $\expect{\tau_{i}^{+}(k) - k} \le \frac{1}{c}$, it follows that
\begin{align}
    \expect{S_{i,k} \mid \calF^{k}}
    =
    \expect{\frac{(\tau_{i}^{+}(k) - k)^2}{2}
    +
    \frac{\tau_{i}^{+}(k) - k}{2}
    +(k - \tau_i(k)) \pth{\tau_{i}^{+}(k) - k}
    +(k - \tau_i(k))\mid \calF^k}
    \le
    \frac{1}{c^2}
    +
    \frac{1}{2c}
    +
    \pth{\frac{1}{c} + 1}
    (k - \tau_i(k))
    .
    \label{eq:term_II}
\end{align}

\noindent{\bf Bounding the gradients.}
    We know that
    \begin{align*}
    &\norm{\sum_{r=0}^{s-1} \nabla \ell_i^{(k,r)}}^2 
    \le 2 \underbrace{\norm{\sum_{r=0}^{s-1} \pth{\nabla \ell_i^{(k,r)} - \nabla \ell_i^{(k,0)}}}^2}_{(\rmA)} + 2 \underbrace{\norm{s \nabla \ell_i^{(k,0)}}^2}_{(\rmB)}. 
    \end{align*}    
    \noindent For term $(\rmA)$, %
    by Lemma \ref{lmm: local step perturbation}, we have
    \begin{align*}
    \expect{\norm{
    \sum_{r=0}^{s-1} \pth{\nabla \ell_i^{(k,r)} - \nabla \ell_i^{(k,0)}}}^2 \mid \calF^{k}}
    &\le  \frac{\kappa^2 \eta^2 s^4 L^2}{4} 
    \expect{\|\nabla \ell_i^{(k,0)}\|_2^2 \mid \calF^k} \\
    &\le \frac{\kappa^2 \eta^2 s^4 L^2}{2} (\expect{\norm{\nabla \ell_i^{(k,0)} - \nabla F_i(\x_i^{k})}^2 \mid \calF^k} + \norm{\nabla F_i(\x_i^{k})}^2) \\
    & \overset{(\rmd)}{\le}  \frac{\kappa^2 \eta^2 s^4 L^2\sigma^2}{2} + \frac{\kappa^2 \eta^2 s^4 L^2}{2} \norm{\nabla F_i(\x_i^k)}^2.
    \end{align*}
    For term $(\rmB)$,  by Assumption \ref{ass: bounded variance client-wise}, we likewise have 
    \begin{align*}
    &s^2
    \expect{\|\nabla \ell_i^{(k,0)}\|_2^2 ~|~\calF^{t}}  
    \le 
    2 s^2
    \pth{\sigma^2
    + 
    \norm{\nabla F_i(\x_i^k)}^2}.
    \end{align*}
    Putting them together, we have
    \begin{align*}
        \expect{\norm{\sum_{r=0}^{s-1} \nabla \ell_i^{(k,r)}}^2 \mid \calF^k}
        &\le
        s^2 \pth{\kappa^2 \eta^2 s^2 L^2 + 2} \pth{\sigma^2 + \norm{\nabla F_i(\x_i^k)}^2}.
    \end{align*}

It follows that
\begin{align}
    \expect{
    \norm{\sum_{r=0}^{s-1} \nabla \ell_i^{(k,r)}}^2
    S_{i,k}
    \;\Big|\; \calF^k}
    \le
    \pth{\frac{1}{c^2} + \frac{1}{2 c}
    + \pth{\frac{1}{c} + 1}
    (k - \tau_i(k))
    } s^2 \pth{\kappa^2 \eta^2 s^2 L^2 + 2}
    \pth{\sigma^2 + \norm{\nabla F_i(\x_i^k)}^2}
    .
\end{align}
By~\prettyref{ass: bounded true gradient}, we have
\begin{align*}
    \expect{
    \norm{\sum_{r=0}^{s-1} \nabla \ell_i^{(k,r)}}^2
    S_{i,k}~\Big|~ \calF^k}
    \le
    \pth{\frac{1}{c^2} + \frac{1}{2 c}
    + \pth{\frac{1}{c} + 1}
    (k - \tau_i(k))
    } s^2 \pth{\kappa^2 \eta^2 s^2 L^2 + 2}
    \pth{\sigma^2 + G_{\max}^2}.
\end{align*}

Take expectation over the remaining randomness, it holds that
\begin{align*}
    \expect{\expect{
    \norm{\sum_{r=0}^{s-1} \nabla \ell_i^{(k,r)}}^2
    S_{i,k}~\Big|~ \calF^k}}
    \le
    \pth{\frac{1}{c^2} + \frac{1}{2 c}
    + \pth{\frac{1}{c} + 1}
    \expect{(k - \tau_i(k))}
    } s^2 \pth{\kappa^2 \eta^2 s^2 L^2 + 2}
    \pth{\sigma^2 + G_{\max}^2} 
    \le
    \frac{4 s^2 \pth{\kappa^2 \eta^2 s^2 L^2 + 2}
    \pth{\sigma^2 + G_{\max}^2} }{c^2}
    .
\end{align*}
Combining everything together, we have
\begin{align*}
    \frac{1}{T} 
    \sum_{t=0}^{T-1}
    \expect{B_t^2}&
    \le 
    \frac{1}{T} 
    \sum_{t=0}^{T-1}
    \expect{C_t^2} =
    \frac{1}{T}
    \sum_{i=1}^m
    \sum_{k=0}^{T-1}
    \Expect \Bigg[{
        \norm{\sum_{r=0}^{s-1} \nabla \ell_i^{(k,r)}}^2
        \sum_{t=k}^{T-1}
        (t - \tau_i(t) )
        \indc{\tau_i(t) + 1 \le k}
    }\Bigg]
    \le
    \frac{4 m s^2 \pth{\kappa^2 \eta^2 s^2 L^2 + 2}
    \pth{\sigma^2 + G_{\max}^2} }{c^2}
    \le
    \frac{12 m s^2 
    \pth{\sigma^2 + G_{\max}^2} }{c^2}
    ,
\end{align*}
where the last inequality follows from our learning rate condition.

\subsection{\bf Proof of Lemma \ref{lmm: consensus}}
The consensus error, 
which measures the distance between the averaged model over all the devices and %
local models, 
can be written in matrix form as \eqref{eq: consensus writeup},
\begin{align}
\nonumber
\frac{1}{m}\sum_{i=1}^m \norm{\bar{\x}^{t} - \x_i^t}^2 
&\triangleq \frac{1}{m}
\fnorm{\bm{X}^{(t)} \pth{\identity - \allones}}^2
= \frac{1}{m}
\fnorm{
\pth{\bm{X}^{(t-1)} - \eta \bm{G}^{(t-1)} } W^{(t-1)} \pth{\identity - \allones} 
+  
\eta \bm{Z}^{t-1}  W^{(t-1)} (\identity - \allones)}^2 \\
\notag
&=
\frac{1}{m} 
\fnorm{\eta\sum_{q=0}^{t-1} 
\pth{\bm{G}^{(q)} - \bm{Z}^{(q)}}
\pth{
\prod_{l=q}^{t-1} W^{(l)} - \allones}
}^2 \\
\label{eq: consensus writeup}
&
\le 
\frac{2}{m} 
\fnorm{\eta\sum_{q=0}^{t-1} \bm{G}^{(q)}
\pth{
\prod_{l=q}^{t-1} W^{(l)} - \allones}
}^2
+
\frac{2}{m} 
\fnorm{\eta \sum_{q=0}^{t-1}\bm{Z}^{(q)} 
\pth{
\prod_{l=q}^{t-1} W^{l} - \allones} }^2
,
\end{align}
where the last inequality follows from Jensen's inequality.
Our proof includes two steps, $(I)$ and $(II)$.

\paragraph{Bounding $(I)$.}
Define $\Delta \bm{G}^{(r)} \triangleq \bm{G}^{(r)} -  \bm{G}_0^{(r)}$ and
$A_{r,t}\triangleq \prod_{\ell=r}^t W^{(\ell)} - \allones$.
It holds that
\begin{align}
\fnorm{\eta
\sum_{q=0}^{t-1} \bm{G}^{(q)} A_{q,t-1}}^2 
&\le 3\eta^2 
    \underbrace{\fnorm{
    \sum_{q=0}^{t-1} 
    \Delta \bm{G}^{(q)}
    A_{q,t-1}
    }^2}_{(\rmA)}
    + 3\eta^2 
    \underbrace{\fnorm{\sum_{q=0}^{t-1}\pth{\bm{G}_0^{\pth{q}} - s\nabla \bm{F}^{\pth{q}}}
    A_{q,t-1}
    }}_{(\rmB)}
    \label{eq: conseneus iterative error}
    +3\eta^2 s^2 \underbrace{\fnorm{\sum_{q=0}^{t-1}\nabla \bm{F}^{\pth{q}}
    A_{q,t-1}
    }^2}_{(\rmC)}
\end{align}
where the second equality follows from the fact that all devices are initiated at the same weights.

\noindent{\em (i) Bounding $\expect{(\rmA)}$.}
The term $(\rmA)$ in Eq.\,\eqref{eq: conseneus iterative error} arises from multiple local steps. 
We have,
\begin{align}
\nonumber
\expect{(\rmA) } 
&\overset{(\rma)}{\le}
\sum_{q=0}^{t-1} {\rho}^{t-q}\expect{
\fnorm{\Delta \bm{G}^{(q)}}^2 
} 
+ 
\sum_{q=0}^{t-1} \sum_{p=0, p\neq q}^{t-1}\expect{ 
\fnorm{\Delta \bm{G}^{(p)}A_{p,t-1}}
\fnorm{\Delta \bm{G}^{(q)}A_{q,t-1}}}
\\
\nonumber
&\overset{(\rmb)}{\le}
\sum_{q=0}^{t-1} 
{\rho}^{t-q}\expect{
\fnorm{\Delta \bm{G}^{(q)}}^2 
} 
+ 
\sum_{q=0}^{t-1} \sum_{p=0, p\neq q}^{t-1} 
\frac{\sqrt{\rho}^{2t-p-q}}{2}
\expect{
{
\fnorm{
\Delta \bm{G}^{(p)}
}^2 
+
\fnorm{
\Delta \bm{G}^{(q)}
}^2 }}, 
\end{align}
where inequality $(\rma)$ follows from \eqref{lmm: spectral norm main},
inequality $(\rmb)$ holds because of Young's inequality.
Next, we bound the second term.
It follows that
\begin{align*}
    \sum_{q=0}^{t-1} \sum_{p=0, p\neq q}^{t-1}  \frac{\sqrt{\rho}^{2t-p-q}}{2}\expect{
    {
    \fnorm{
    \Delta \bm{G}^{(p)}
    }^2 
    +
    \fnorm{
    \Delta \bm{G}^{(q)}
    }^2 } } 
    &
    \le \sum_{q=0}^{t-1} \sum_{p=0}^{t-1} \frac{\sqrt{\rho}^{2t-p-q}}{2}
    \expect{
    {
    \fnorm{
    \Delta \bm{G}^{(p)}
    }^2 
    +
    \fnorm{
    \Delta \bm{G}^{(q)}
    }^2 } }
    \le \frac{\sqrt{{\rho}}}{1-\sqrt{{\rho}}}\sum_{q=0}^{t-1}  
    \sqrt{{\rho}}^{t-q}\expect{
    \fnorm{
    \Delta \bm{G}^{(q)}
    }^2 }.
\end{align*}
In addition, since ${\rho}<1$, it holds that ${\rho}^{t-q} \le \sqrt{\rho}{\rho}^{\frac{t-q}{2}}$ for any $q\le t-1$. Thus, we have 
\begin{align}
\nonumber
\expect{(\rmA)}
& \le 
\sqrt{{\rho}}\sum_{q=0}^{t-1}{\rho}^{\frac{t-q}{2}} \expect{\fnorm{
\Delta \bm{G}^{(q)}
}^2 } 
 + \frac{\sqrt{{\rho}}}{1-\sqrt{{\rho}}}\sum_{q=0}^{t-1}  \sqrt{{\rho}}^{t-q}\expect{\fnorm{
\Delta \bm{G}^{(q)}
}^2}
\label{eq: rho T1}
\le 
\frac{2\sqrt{{\rho}}}{1-\sqrt{{\rho}}}
\sum_{q=0}^{t-1}  \sqrt{{\rho}}^{t-q}\expect{\fnorm{{\bm{G}^{\pth{q}} - \bm{G}_0^{\pth{q}}}}^2} \\
&=
\frac{2\sqrt{{\rho}}}{1-\sqrt{{\rho}}}
\sum_{q=0}^{t-1}  
\sqrt{{\rho}}^{t-q}\expect{\fnorm{{\bm{G}^{\pth{q}} - \bm{G}_0^{\pth{q}}}}^2}. 
\end{align}
It remains to bound $\expect{\fnorm{\Delta \bm{G}^{\pth{q}}}^2 },$
\begin{align*}
\expect{\fnorm{\Delta \bm{G}^{\pth{q}}}^2 } 
&\overset{(\rmc)}{\le} \kappa^2 \eta^2 s^2 L^2 \expect{\fnorm{\bm{G}_0^{\pth{q}} - s\nabla \bm{F}^{\pth{q}} + s\nabla \bm{F}^{\pth{q}}  }^2 }\\
& \le 2 \kappa^2 \eta^2 s^2 L^2 \expect{\fnorm{\bm{G}_0^{\pth{q}} - s\nabla \bm{F}^{\pth{q}}}^2 } 
+ 2 \kappa^2 s^2 \eta^2 s^4 L^2 \expect{\fnorm{\nabla \bm{F}^{\pth{q}}}^2 }\\
& \le 2 \kappa^2 s^2 \eta^2 s^4 L^2 m \sigma^2 
+ 2 \kappa^2 s^2 \eta^2 s^4 L^2 \expect{\fnorm{\nabla \bm{F}^{\pth{q}}}^2 }, 
\end{align*}
where inequality ($\rmc$) follows from Lemma \ref{lmm: local step perturbation}, adding and subtracting.
Thus,
\begin{align*}
\expect{(\rmA) } 
&\le 
\frac{2\sqrt{{\rho}}}{1-\sqrt{{\rho}}}
\sum_{q=0}^{t-1} 
\sqrt{{\rho}}^{t-q}\expect{\fnorm{\bm{G}^{\pth{q}} - \bm{G}_0^{\pth{q}}}^2 }
\le \frac{4\kappa^2 s^2 \eta^2 s^4 L^2 m \sigma^2 \rho}{\pth{1-\sqrt{{\rho}}}^2} 
+ 
\frac{4\kappa^2 s^2 \eta^2 s^4 L^2 \sqrt{\rho} }{1-\sqrt{{\rho}}}
\sum_{q=0}^{t-1} \sqrt{{\rho}}^{t-q}
\expect{\fnorm{\nabla \bm{F}^{\pth{q}}}^2}.
\end{align*}
\noindent{\em (ii) Bounding $\expect{(\rmB)}$.}
\begin{align*}
\expect{(\rmB) } 
&\le 
\sum_{q=0}^{t-1}
{\rho}^{t-q}\expect{
\fnorm{\bm{G}_0^{\pth{q}} - s\nabla \bm{F}^{\pth{q}}}^2}
\le \frac{{\rho} m s^2 \sigma^2}{1-{\rho}} 
= \frac{{\rho} m s^2 \sigma^2}{\pth{1-\sqrt{{\rho}}}\pth{1 + \sqrt{ \rho}}}
\le
\frac{{\rho} m s^2 \sigma^2}{\pth{1-\sqrt{{\rho}}}^2}.
\end{align*}

\noindent{\em (iii) Bounding $\expect{(\rmC)}$.}
Use a similar derivation as in \prettyref{eq: rho T1}, and we get
\begin{align*}
\expect{(\rmC)} 
&\le
\frac{2 \sqrt{\rho}}{1 - \sqrt{\rho}}
\sum_{q=0}^{t-1} \sqrt{\rho}^{t-q}
\expect{\fnorm{\nabla \bm{F}^{\pth{q}}}^2 }.
\end{align*}
Furthermore, we have
\begin{align*}
\frac{2 \sqrt{\rho}}{1 - \sqrt{\rho}}
\sum_{t=0}^{T-1}\sum_{q=0}^{t-1} 
\sqrt{{\rho}}^{t-q}
&\expect{\fnorm{\nabla \bm{F}^{\pth{q}}}^2} 
=\frac{2 \sqrt{\rho}}{1 - \sqrt{\rho}}
\sum_{t=0}^{T-2} \expect{\fnorm{\nabla \bm{F}^{\pth{t}}}^2} \sum_{q=1}^{T-1-t}\sqrt{{\rho}}^{q} 
\le 
\frac{2 \rho}{\pth{1-\sqrt{\rho}}^2}\sum_{t=0}^{T-1}\expect{\fnorm{\nabla \bm{F}^{\pth{t}}}^2}.
\end{align*}

\paragraph{Bounding $(II)$.}
By using a similar techniques as when bounding $\expect{(A)}$, it holds that
\begin{align*}
\fnorm{\sum_{q=0}^{t-1}\bm{Z}^{(q)}  \pth{\prod_{l=q}^{t-1} W^{l} - \allones} }^2
&=
\fnorm{\sum_{q=0}^{t-1}\bm{Z}^{(q)} A_{q}^{t-1}}^2
\le
\sum_{q=0}^{t-1} 
\expect{\fnorm{\bm{Z}^{(q)} A_{q,t-1}}^2 } 
+ \sum_{q=0}^{t-1} \sum_{p=0, p\neq q}^{t-1}\expect{ 
\fnorm{
\bm{Z}^{(p)}
A_{p,t-1}
}
\fnorm{
\bm{Z}^{(q)}
A_{q,t-1}
} 
}.
\end{align*}

We know that both $W^{(q)}$ is doubly stochastic, \ie, 
$W^{(q)} \allones = \allones$.
It holds that
\begin{align*}
    A_{q, t-1}
    &=
    \prod_{l=q}^{t-1} W^{l} - \allones
    =
    W^{(q)} \pth{\prod_{l=q+1}^{t-1} W^{l} - \allones}
    =
    \pth{W^{(q)} - \allones + \allones}
    \pth{\prod_{l=q+1}^{t-1} W^{l} - \allones} \\
    &=
    \pth{W^{(q)} - \allones}
    \pth{\prod_{l=q+1}^{t-1} W^{l} - \allones}
    +
    \allones 
    \pth{\prod_{l=q+1}^{t-1} W^{l} - \allones}
    =
    \pth{W^{(q)} - \allones}
    \pth{\prod_{l=q+1}^{t-1} W^{l} - \allones}.
\end{align*}

Recall that $\bm{Z}^{(t)} = \frac{B_t}{\gamma}
\frac{\indc{\abth{\calA^t} > 0}}{{{\abth{\calA^t} + \indc{\calA^{t} = 0}}}}
\qth{\bm{z}^t \indc{1 \in \calA^t}, \cdots, \bm{z}^t \indc{m \in \calA^t}}.$
Let $\mathbb{I}_{\calA^t} \triangleq \qth{\indc{1 \in \calA^t}, \cdots, \indc{m \in \calA^t}}$.

It remains to bound $\expect{\fnorm{\bm{Z}^{(q)} A_{q,t-1}}^2}$.
Recall that $C_q^2 \triangleq \sum_{i=1}^m \norm{\Delta_{i}^{q}}^2 \ge B_q$, and that both $\bm{z}^q$ and $C_q^2$ are independent of $\calA^q$.
\begin{align*}
    \expect{\fnorm{\bm{Z}^{(q)} A_{q,t-1}}^2}
    &=
    \expect{
    \fnorm{ 
    \frac{B_q \bm{z}^q  \mathbb{I}_{\calA^q}}{\gamma}
    \frac{\indc{\abth{\calA^q}>0}}{\abth{\calA^q} + \indc{\abth{\calA^q} = 0} } A_{q, t-1}}^2}
    \le
    \frac{\expect{C_q^2}}{\gamma^2}
    \expect{\fnorm{ \frac{ \bm{z}^{q} \mathbb{I}_{\calA^t}
    \indc{\abth{\calA^q}>0}}{\abth{\calA^q} + \indc{\abth{\calA^q} = 0} } A_{q, t-1}}^2} \\
    &=
    \frac{\expect{C_q^2}}{\gamma^2}
    \expect{\fnorm{ 
    \frac{\indc{\abth{\calA^q}>0}  \bm{z}^{q} \mathbb{I}_{\calA^t}
    \pth{W^{(q)} - \allones}
    }{\abth{\calA^q} + \indc{\abth{\calA^q} = 0} }
    \pth{\prod_{\ell=q+1}^{t-1} W^{(\ell)} - \allones}}^2}
\end{align*}

For the term $ \frac{\indc{\abth{\calA^q}>0}  \bm{z}^{q} \mathbb{I}_{\calA^t}
    \pth{W^{(q)} - \allones}
    }{\abth{\calA^q} + \indc{\abth{\calA^q} = 0} }$, we have two cases:
\paragraph{Case \#1: $\calA^q \not= \emptyset$}
We have
\begin{align*}
    \fnorm{ \frac{\indc{\abth{\calA^q}>0}  \bm{z}^{q} \mathbb{I}_{\calA^t}
    \pth{W^{(q)} - \allones}
    }{\abth{\calA^q} + \indc{\abth{\calA^q} = 0} }}^2
    &=
    \fnorm{\frac{\bm{z}^{q} \mathbb{I}_{\calA^q}
    \pth{W^{(q)} - \allones}}{\abth{\calA^q}}}^2 
    =
    \frac{\norm{\bm{z}^{q}}^2}{\abth{\calA^q}^2}
    \fnorm{\mathbb{I}_{\calA^q}
    \pth{W^{(q)} - \allones}}^2  \\
    &\le
    \frac{\norm{\bm{z}^{q}}^2 \norm{\mathbb{I}_{\calA^q}}^2}{\abth{\calA^q}^2}
    \fnorm{W^{(q)} - \allones}^2
    =
    \frac{\norm{\bm{z}^{q}}^2 }{\abth{\calA^q}}
    \fnorm{W^{(q)} - \allones}^2
    \le
    \norm{\bm{z}^{q}}^2 
    \fnorm{W^{(q)} - \allones}^2
    ,
\end{align*}
where the inequality follows from $\abth{\calA^q} \ge 1$.

\paragraph{Case \#2: $\calA^q = \emptyset$}
We have 
\[
\fnorm{ \frac{\indc{\abth{\calA^q}>0}  \bm{z}^{q} \mathbb{I}_{\calA^q}
    \pth{W^{(q)} - \allones}
    }{\abth{\calA^q} + \indc{\abth{\calA^q} = 0} }}^2 = 0 .
\]
Combining both cases, we have
\begin{align*}
    \fnorm{ \frac{\indc{\abth{\calA^q}>0}  \bm{z}^{q} \mathbb{I}_{\calA^q}
    \pth{W^{(q)} - \allones}
    }{\abth{\calA^q} + \indc{\abth{\calA^q} = 0} }}^2
    =
    \fnorm{ \frac{\indc{\abth{\calA^q}>0}  \bm{z}^{q} \mathbb{I}_{\calA^q}
    \pth{W^{(q)} - \allones}
    }{\abth{\calA^q} + \indc{\abth{\calA^q} = 0} }}^2
    \pth{\indc{\calA^q \not = \emptyset} + \indc{\calA^q = \emptyset}}
    \le
    0 \indc{\calA^q = \emptyset} +
    \norm{\bm{z}^{q}}^2
    \fnorm{W^{(q)} - \allones}^2  \indc{\calA^q \not = \emptyset}
    \le
    \norm{\bm{z}^{q}}^2
    \fnorm{W^{(q)} - \allones}^2.
\end{align*}
It follows that
\begin{align*}
    \expect{\fnorm{\bm{Z}^{(q)} A_{q,t-1}}^2}
    &\le
    \frac{\expect{C_q^2}}{\gamma^2}
    \expect{\fnorm{ 
    \frac{\indc{\abth{\calA^q}>0}  \bm{z}^{q} \mathbb{I}_{\calA^q}
    \pth{W^{(q)} - \allones}
    }{\abth{\calA^q} + \indc{\abth{\calA^q} = 0} }
    \pth{\prod_{\ell=q+1}^{t-1} W^{(\ell)} - \allones}}^2}
    \le
    \frac{\expect{C_q^2}}{\gamma^2}
    \expect{\fnorm{ 
    \frac{\indc{\abth{\calA^q}>0}  \bm{z}^{q} \mathbb{I}_{\calA^q}
    \pth{W^{(q)} - \allones}
    }{\abth{\calA^q} + \indc{\abth{\calA^q} = 0} }
    }^2}
    {\rho}^{t-q-1} \\
    &\le
    \frac{\expect{C_q^2}}{\gamma^2}
     \expect{\norm{\bm{z}^{q}}^2\fnorm{W^{(q)} - \allones}^2}
    {\rho}^{t-q-1}
    \overset{(d)}{=}
    \frac{\expect{\norm{\bm{z}^{q}}^2} \expect{C_q^2}}{\gamma^2}
     \expect{\fnorm{W^{(q)} - \allones}^2}
    {\rho}^{t-q-1}
    \le
    \frac{d \sigma_z^2 \expect{C_q^2}}{\gamma^2}
    {\rho}^{t-q}
    \fnorm{\identity}^2
    =
    \frac{m d \sigma_z^2 \expect{C_q^2}}{\gamma^2}
    {\rho}^{t-q}
    ,
\end{align*}
where equality $(d)$ follows from $\bm{z}^{q}$ and $(\calA^{q}, W^{(q)})$ are independent.

For ease of notation, let $H_q = m {\rho d \sigma_z^2 \expect{C_q^2}}/{\gamma^2}$.
It follows that
\begin{align*}
\expect{\fnorm{\sum_{q=0}^{t-1}\bm{Z}^{(q)}  \pth{W^{(q)} \prod_{l=q+1}^{t-1} W^{l} - \allones} }^2}
&\le
\sum_{q=0}^{t-1} 
\expect{\fnorm{\bm{Z}^{(q)} A_{q,t-1}}^2 } 
+ \sum_{q=0}^{t-1} \sum_{p=0, p\neq q}^{t-1}\expect{ 
\fnorm{
\bm{Z}^{(p)}
A_{p,t-1}
}
\fnorm{\bm{Z}^{(q)}
A_{q,t-1}}} \\
&\le
\sum_{q=0}^{t-1} H_q {\rho}^{t-q}
+
\sum_{q=0}^{t-1} \sum_{p=0, p\neq q}^{t-1}\expect{ 
\fnorm{
\bm{Z}^{(p)}
A_{p,t-1}
}
\fnorm{\bm{Z}^{(q)}
A_{q,t-1}}} \\
&\le
\sum_{q=0}^{t-1} H_q {\rho}^{t-q}
+
\sum_{q=0}^{t-1} \sum_{p=0, p\neq q}^{t-1} 
\frac{\sqrt{{\rho}}^{2t-p-q}}{2}
 \pth{H_p + H_q}.
\end{align*}
We have
\begin{align*}
    \sum_{q=0}^{t-1} \sum_{p=0, p\neq q}^{t-1}  \frac{\sqrt{{\rho}}^{2t-p-q}}{2}\pth{H_p + H_q} 
    &
    \le \sum_{q=0}^{t-1} \sum_{p=0}^{t-1} \frac{\sqrt{{\rho}}^{2t-p-q}}{2}
    \pth{H_p + H_q}
    \le \frac{\sqrt{{\rho}}}{1-\sqrt{{\rho}}}\sum_{q=0}^{t-1}  \sqrt{{\rho}}^{t-q} H_q.
\end{align*}
In addition, since ${\rho}<1$, it holds that ${\rho}^{t-q} \le \sqrt{{\rho}}{\rho}^{\frac{t-q}{2}}$ for any $q\le t-1$. Thus, we have 
\begin{align*}
\expect{\fnorm{\bm{Z}^{(q)} A_{q,t-1}}^2}
& \le 
\sqrt{{\rho}}\sum_{q=0}^{t-1}{\rho}^{\frac{t-q}{2}} H_q 
 + \frac{\sqrt{{\rho}}}{1-\sqrt{{\rho}}}\sum_{q=0}^{t-1}  \sqrt{{\rho}}^{t-q} H_q
\le
\frac{2 \sqrt{\rho}}{1 - \sqrt{\rho}}
\sum_{q=0}^{t-1}  \sqrt{{\rho}}^{t-q} H_q. 
\end{align*}
It follows that
\begin{align*}
\sum_{t=0}^{T-1}\sum_{q=0}^{t-1} 
\sqrt{{\rho}}^{t-q}
&H_q
=\sum_{t=0}^{T-2} H_t
\sum_{q=1}^{T-1-t}\sqrt{{\rho}}^{q} 
\le 
\frac{\sqrt{{\rho}}}{\pth{1-\sqrt{{\rho}}}}\sum_{t=0}^{T-1} H_t,
\end{align*}
which leads to
\begin{align*}
\sum_{q=0}^{T-1}
\expect{\fnorm{\bm{Z}^{(q)} A_{q,t-1}}^2}
&\le \frac{2 \sqrt{{\rho}}}{(1-\sqrt{{\rho}})}
\sum_{t=0}^{T-1}
\sum_{q=0}^{t-1}  \sqrt{{\rho}}^{t-q} H_q
=
\frac{2 \sqrt{{\rho}}}{(1-\sqrt{{\rho}})}
\sum_{t=0}^{T-2}
H_t
\sum_{q=1}^{T-1-t}  
\sqrt{{\rho}}^{q} 
\le
\frac{2 \rho}{(1-\sqrt{{\rho}})^2}
\sum_{t=0}^{T-2}
H_t.
\end{align*}

\noindent{\em (iv) Putting them together.}
\begin{align}
\nonumber
\frac{1}{mT}\sum_{t=0}^{T-1}\expect{\fnorm{\bm{X}^{\pth{t}} \pth{\identity - \allones}}^2}
&\le 
6 
\eta^2 s^2 \sigma^2 
\frac{
\rho
\pth{1 + \kappa^2 \eta^2 s^2 L^2}}{\pth{1 - \sqrt{{\rho}}}^2}
+ 
\pth{\frac{\kappa^2 \eta^2 s^2 L^2}{2} + 1}
\frac{12 \eta^2 s^2 \rho 
}{mT\pth{1-\sqrt{{\rho}}}^2}\sum_{t=0}^{T-1}\expect{\fnorm{\nabla \bm{F}^{\pth{t}}}^2} 
+
\frac{4 \eta^2 \rho d \sigma_z^2 }{\gamma^2 \pth{1-\sqrt{{\rho}}}^2 T}
\sum_{t=0}^{T-1}
\expect{C_t^2}
\\
\nonumber
&\overset{(e)}{\le}
\frac{18 {\rho \eta^2 s^2} 
}{(1-\sqrt{{\rho}})^2} \frac{1}{mT} \sum_{t=0}^{T-1} \fnorm{\nabla \bm{F}^{(t)}} 
+ \frac{18 \rho \eta^2 s^2
\sigma^2}{(1-\sqrt{{\rho}})^2}
+
\frac{4 \eta^2 \rho d \sigma_z^2 }{\gamma^2 \pth{1-\sqrt{{\rho}}}^2 T}
\sum_{t=0}^{T-1}
\expect{C_t^2}
,
\end{align}
where we assume that $\eta \le \frac{1}{s \kappa L}$ in inequality $(e)$.
We know, from~\prettyref{lmm:gradient Bt}, that
\begin{align*}
    \frac{1}{T}\sum_{t=0}^{T-1}\expect{C_t^2} 
    &\le
    \frac{12 m s^2
    \pth{\sigma^2 + G_{\max}^2} }{c^2}.
\end{align*}
It follows that
\begin{align*}
    \qth{1 - 
    \frac{54}{(1-\sqrt{\rho})^2
    }}
    \frac{1}{mT}\sum_{t=0}^{T-1}\expect{\fnorm{\bm{X}^{\pth{t}} \pth{\identity - \allones}}^2}
    &\le
    \frac{54}{(1 - \sqrt{\rho})^2}
    \frac{\rho \eta^2 s^2 (\beta^2 + 1)}{T}
    \sum_{t=0}^{T-1}
    \expect{\norm{\nabla F (\bar{\x}^t)}^2} \\
    &~~~
    + \frac{18 \rho \eta^2 s^2 \sigma^2}{(1-\sqrt{\rho})^2} 
    +
    \frac{48 \rho m \eta^2 s^2 d \sigma_z^2 (\sigma^2 + G_{\max}^2) }{c^2 {\gamma^2 (1-\sqrt{\rho})^2}}
    .
\end{align*}

Choosing $\eta \le
\frac{1 - \sqrt{\rho}}{632 s L (\kappa + 1) \sqrt{\beta^2 + 1}}$, we have 
\begin{align*}
    1 - 
    \frac{54 \eta^2 s^2 L^2}{(1-\sqrt{\rho})^2} 
    &\ge
    1 - \frac{1}{2}
    = \frac{1}{2}.
\end{align*}
Rearrange the terms, we have
\begin{small}
\begin{align*}
\frac{1}{mT}\sum_{t=0}^{T-1}\expect{\fnorm{\bm{X}^{\pth{t}} \pth{\identity - \allones}}^2} 
&\le
\frac{108 \rho \eta^2 s^2 (\beta^2 + 1)}{(1 - \sqrt{\rho})^2}
    \frac{1}{T} \sum_{t=0}^{T-1}
    \expect{\norm{\nabla F (\bar{\x}^t)}^2} 
    + \frac{36 \rho \eta^2 s^2 \sigma^2}{(1-\sqrt{\rho})^2} 
    + \frac{96 \rho m \eta^2 s^2 d \sigma_z^2 (\sigma^2 + G_{\max}^2) }{c^2 {\gamma^2 (1-\sqrt{\rho})^2}}
    .
\end{align*}
\end{small}

\subsection{\bf Proof of Theorem \ref{thm: main}}

In this proof,
we combine all the above intermediate results to show the final theorem.

\paragraph{\em (a) Taking expectation over the remaining randomness and a telescoping sum.}
\begin{align*}
\frac{1}{T}\sum_{t=0}^{T-1}\expect{F(\bar{\x}^{t+1})  -  F(\bar{\x}^{t}) }
&\le 
- \frac{s\eta }{3} 
\frac{1}{T}\sum_{t=0}^{T-1}
\expect{\norm{\nabla F(\bar{\x}^t)}^2} 
+ 6 L \eta^2 s^2 \pth{\kappa^2 L^2 + 1}\pth{\sigma^2 + \zeta^2} 
+ \eta s \frac{L^2}{mT}\sum_{t=0}^{T-1}\expect{\norm{\x_i^t - \bar{\x}^t}^2} 
+ \frac{\eta^2 L d \sigma_z^2}{\gamma^2 m^2}
\expect{B_t^2}
.
\end{align*}

\paragraph{\em (b) Plugging in Lemma~\ref{lmm:gradient Bt}.}
\begin{align*}
&\frac{F^\star  -  \expect{F(\bar{\x}^{0})}}{T}
\le 
6 L \eta^2 s^2 \pth{\kappa^2 L^2 + 1}\pth{\sigma^2 + \zeta^2} 
+ \frac{\eta s L^2}{mT}\sum_{t=0}^{T-1}\expect{\norm{\x_i^t - \bar{\x}^t}^2} 
+ \frac{\eta^2 L d \sigma_z^2}{\gamma^2 m^2}
\frac{1}{T}
\sum_{t=0}^{T-1}
\expect{B_t^2} 
- \frac{s\eta }{3} 
\frac{1}{T}\sum_{t=0}^{T-1}
\expect{\norm{\nabla F(\bar{\x}^t)}^2} \\
&\le
6 L \eta^2 s^2 \pth{\kappa^2 L^2 + 1}\pth{\sigma^2 + \zeta^2} 
+ \eta s  \frac{L^2}{mT}\sum_{t=0}^{T-1}\sum_{i=1}^m\expect{\norm{\x_i^t - \bar{\x}^t}^2} 
- \frac{s\eta }{3} 
\frac{1}{T}\sum_{t=0}^{T-1}
\expect{\norm{\nabla F(\bar{\x}^t)}^2} 
+
\frac{12 \eta^2 s^2 L d \sigma_z^2}{c^2 \gamma^2 m}
\pth{\sigma^2 + G_{\max}^2}.
\end{align*}
where the last inequality follows from $\kappa^2 \eta^2 s^2 L^2 \le 1$.

\paragraph{\em (c) Plugging in Lemma~\ref{lmm: consensus}.}
\begin{align*}
\frac{F^\star  -  \expect{F(\bar{\x}^{0})}}{T}
&\le
6 L \eta^2 s^2  \pth{\kappa^2 L^2 + 1}\pth{\sigma^2 + \zeta^2} 
- \frac{s\eta }{3}
\pth{1 
- 
\frac{316 \rho \eta^2 s^2 L^2 (\beta^2 + 1) }{(1 - \sqrt{\rho})^2}
}
\frac{1}{T}\sum_{t=0}^{T-1}
\expect{\norm{\nabla F(\bar{\x}^t)}^2}  
\\
&~~~
+\frac{36 \rho \eta^3 s^3 L^2 \sigma^2}{(1-\sqrt{\rho})^2} 
+ \frac{96 \rho m \eta^3 s^3 L^2 d \sigma_z^2 }{c^2 {\gamma^2 (1-\sqrt{\rho})^2}}
(\sigma^2 + G_{\max}^2)
+\frac{12 \eta^2 s^2 L d \sigma_z^2}{c^2 \gamma^2 m}
\pth{\sigma^2 + G_{\max}^2}
.
\end{align*}

We know from 
$\eta \le
\frac{1 - \sqrt{\rho}}{632 s L (\kappa + 1) \sqrt{\beta^2 + 1}}$
that
\begin{align*}
    \frac{316 \rho \eta^2 s^2 L^2 (\beta^2 + 1) }{(1 - \sqrt{\rho})^2}
    \le
    \frac{1}{2}.
\end{align*}

It follows that
\begin{align}
\frac{F^\star  -  \expect{F(\bar{\x}^{0})}}{T}
&\le
6 L \eta^2 s^2  \pth{\kappa^2 L^2 + 1}\pth{\sigma^2 + \zeta^2} 
- \frac{s\eta }{6}
\frac{1}{T}\sum_{t=0}^{T-1}
\expect{\norm{\nabla F(\bar{\x}^t)}^2}  
+\frac{36 \rho \eta^3 s^3 L^2 \sigma^2}{(1-\sqrt{\rho})^2} 
+ \frac{96 \rho m \eta^3 s^3 L^2 d \sigma_z^2 }{c^2 {\gamma^2 (1-\sqrt{\rho})^2}}
(\sigma^2 + G_{\max}^2)
+\frac{12 \eta^2 s^2 L d \sigma_z^2}{c^2 \gamma^2 m}
\pth{\sigma^2 + G_{\max}^2}.
\label{eq: main theorem intermediate}
\end{align}

Therefore, rearrange the terms in \eqref{eq: main theorem intermediate}, it follows that
\begin{align*}
\frac{1}{T}\sum_{t=0}^{T-1}
\expect{\norm{\nabla F(\bar{\x}^t)}^2}  
&\le
\frac{6(\expect{F(\bar{\x}^{0})} - F^{\ast})}{\eta s T}
+ 36 L \eta s  \pth{\kappa^2 L^2 + 1}\pth{\sigma^2 + \zeta^2} 
+\frac{216 \rho \eta^2 s^2 L^2 \sigma^2}{(1-\sqrt{\rho})^2} 
+ \frac{576 \rho m \eta^2 s^2 L^2 d \sigma_z^2 }{c^2 {\gamma^2 (1-\sqrt{\rho})^2}}
(\sigma^2 + G_{\max}^2)
+\frac{72 \eta s L d \sigma_z^2}{c^2 \gamma^2 m} \\
&\le
\frac{6(\expect{F(\bar{\x}^{0})} - F^{\ast})}{\eta s T}
+ 36 L \eta s  \pth{\kappa^2 L^2 + 1}\pth{\sigma^2 + \zeta^2} 
+\frac{216 \rho \eta^2 s^2 L^2 \sigma^2}{(1-\sqrt{\rho})^2} 
+\frac{73 \eta s L d \sigma_z^2}{c^2 \gamma^2 m}
\pth{\sigma^2 + G_{\max}^2}.
\end{align*}
where the inequality holds because $\eta \le
\frac{1 - \sqrt{\rho}}{632 s L m (\kappa + 1) \sqrt{\beta^2 + 1}}$.

\section{Auxiliary Lemmas}
\label{app: auxiliary lemmas}

\begin{lemma}[\cite{xiang2024efficient}]
\label{lmm: geo second moment main text}
Define the last active round of the link $i$ as $\tau_i(t) \triangleq \{t^\prime \mid t^\prime < t, i \in \calA^{t^\prime}\}$.
Given~\prettyref{ass: threat model} 
where $p_i^t \ge c$, 
and $c$ is an absolute constant, we have 
\[
\Expect[t - \tau_i(t)] \le \frac{1}{c},
~\text{and}
~\Expect[(t - \tau_i(t))^2] \le \frac{2}{c^2}.
\]
\end{lemma}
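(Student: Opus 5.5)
The plan is to show that the staleness $X_t \triangleq t - \tau_i(t)$ is stochastically dominated by a geometric random variable with success probability $c$. The moment bounds then follow from the tail-sum formulas. Throughout, $\tau_i(t)$ is the most recent active round of uplink $i$ strictly before $t$, with the convention $\tau_i(t) = -1$ if no such round exists. Hence $X_t \in \{1, \dots, t+1\}$ is always well defined and at least $1$.

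\textbf{Step 1 (tail bound).} For an integer $k \ge 0$, the event $\{X_t > k\}$ means that uplink $i$ was inactive in each of the rounds $t-1, t-2, \dots, t-k$. This requires $k \le t$; for $k \ge t+1$ the event is empty, because $X_t \le t+1$. The event $\{i \in \calA^{r}\}$ depends only on the fading threshold $|h_i^r| \ge \gamma/\sqrt{d E_s}$. By \prettyref{ass: threat model}, these events are independent across rounds, and each has probability $p_i^r \ge c$. The product rule therefore gives
\begin{align*}
\Pr\{X_t > k\} = \prod_{r=t-k}^{t-1} (1 - p_i^r) \le (1-c)^k \quad \text{for } 0 \le k \le t,
\end{align*}
and $\Pr\{X_t > k\} = 0 \le (1-c)^k$ for larger $k$. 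So the bound $(1-c)^k$ holds for every $k \ge 0$. The only point that needs care is the boundary convention $\tau_i(0)=-1$. It only truncates the tail, which helps.

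\textbf{Step 2 (moments).} For the first moment, use $\Expect[X_t] = \sum_{k \ge 0} \Pr\{X_t > k\} \le \sum_{k\ge 0}(1-c)^k = 1/c$. For the second moment, use the identity $\Expect[X_t^2] = \sum_{k\ge 0}(2k+1)\Pr\{X_t > k\}$, valid for nonnegative integer-valued random variables. Applying the tail bound termwise gives
\begin{align*}
\Expect[X_t^2] \le \sum_{k \ge 0} (2k+1)(1-c)^k = \frac{2(1-c)}{c^2} + \frac{1}{c} = \frac{2-c}{c^2} \le \frac{2}{c^2}.
\end{align*}

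\textbf{Step 3 (forward-looking variant).} The same argument applies verbatim to the look-ahead quantity $\tau_i^{+}(k) - k$ used in the proof of \prettyref{lmm:gradient Bt}. The event $\{\tau_i^+(k) - k > j\}$ means that rounds $k+1, \dots, k+j$ are all inactive. These activations are independent of $\calF^k$, so the bounds also hold conditionally on $\calF^k$.

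The main obstacle is justifying the independence of the activation events across rounds, in particular their independence from the past trajectory. This is exactly why the energy-compliant normalization matters: with normalization, activation depends only on $h_i^r$ and not on the message. Once that is granted, the rest is the standard geometric computation.
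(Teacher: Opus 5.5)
Your proof is correct. The independence of activations across rounds gives the tail bound $\Pr\{t-\tau_i(t)>k\}\le(1-c)^k$, and the tail-sum identities then give $1/c$ and $(2-c)/c^2\le 2/c^2$; this is the standard geometric-domination argument behind this lemma, which the paper imports from \cite{xiang2024efficient} rather than proving itself, and your handling of the convention $\tau_i(0)=-1$ and of the conditional look-ahead quantity $\tau_i^{+}(k)-k$ (needed later when bounding $B_t$) is also sound.
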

\prettyref{lmm: geo second moment main text} 
captures the expected staleness of local updates amid highly uncertain and heterogeneous uplink availability. 
It can be readily checked that when $p_i^t$'s are identical, \prettyref{lmm: geo second moment main text} trivially holds with $\expect{ t - \tau_i(t)} = \frac{1}{c}$.

\begin{proposition}[\cite{su2023federated,xiang2023towards,xiang2025empowering}]
\label{prop: average gradient to global gradient}
For any $t\in[T-1]$, it holds that 
\begin{align}
\nonumber
\frac{1}{m}\sum_{i=1}^m\norm{\nabla F_i(\x_i^t)}^2
&\le \frac{3L^2}{m} \sum_{i=1}^m \norm{\x_i^t - \bar{\x}^t}^2  
+ 3\pth{\beta^2 + 1}\norm{\nabla F(\bar{\x}^t)}^2 
+ 3\zeta^2. 
\end{align}
\end{proposition}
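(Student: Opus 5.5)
The statement to prove is the proposition immediately above, bounding $\frac{1}{m}\sum_i\|\nabla F_i(\x_i^t)\|^2$. I would work pointwise in $\x$ and use only Assumption~\ref{ass: 2 smmothness} and Assumption~\ref{ass: bounded similarity}. The plan is a three-way add-and-subtract decomposition
\[
\nabla F_i(\x_i^t) = \pth{\nabla F_i(\x_i^t)-\nabla F_i(\bar{\x}^t)} + \pth{\nabla F_i(\bar{\x}^t)-\nabla F(\bar{\x}^t)} + \nabla F(\bar{\x}^t),
\]
followed by the elementary inequality $\|a+b+c\|^2\le 3(\|a\|^2+\|b\|^2+\|c\|^2)$, which comes from Jensen/Cauchy--Schwarz. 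This inequality is exactly what produces the factor $3$ in each term of the statement.

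After averaging over $i\in[m]$, I would bound the three pieces separately.

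For the first piece, I use that $F_i=\Expect_{\xi}[\ell_i(\cdot;\xi)]$. Since $\nabla\ell_i(\cdot;\xi)$ is $L_i$-Lipschitz for every sample, Jensen's inequality applied to the expectation gives that $\nabla F_i$ is $L_i\le L$ Lipschitz. Hence $\|\nabla F_i(\x_i^t)-\nabla F_i(\bar{\x}^t)\|^2\le L^2\|\x_i^t-\bar{\x}^t\|^2$, which yields the consensus term $\frac{3L^2}{m}\sum_i\|\x_i^t-\bar{\x}^t\|^2$.

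For the second piece, the average $\frac1m\sum_i\|\nabla F_i(\bar{\x}^t)-\nabla F(\bar{\x}^t)\|^2$ is bounded directly by \eqref{eq: BG condition} evaluated at $\x=\bar{\x}^t$. This gives $\beta^2\|\nabla F(\bar{\x}^t)\|^2+\zeta^2$.

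The third piece contributes $\|\nabla F(\bar{\x}^t)\|^2$ for each $i$, so it survives the averaging unchanged.

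Multiplying the three bounds by $3$ and merging the two $\|\nabla F(\bar{\x}^t)\|^2$ contributions gives $3(\beta^2+1)\|\nabla F(\bar{\x}^t)\|^2+3\zeta^2$, as claimed.

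There is no real obstacle in this argument. The only point needing care is the passage from Lipschitzness of the stochastic gradient to Lipschitzness of $\nabla F_i$, which requires interchanging gradient and expectation; I would treat that as the standard regularity implicit in the assumptions. A sharper constant could be obtained by using the cross-term structure (the middle differences average to zero over $i$), but the stated factor $3$ does not require it.
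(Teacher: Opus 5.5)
Your proof is correct. The paper states this proposition without proof, citing prior work, and the standard argument there is the same as yours: split through $\nabla F_i(\bar{\x}^t)$ and $\nabla F(\bar{\x}^t)$, apply $\|a+b+c\|^2\le 3(\|a\|^2+\|b\|^2+\|c\|^2)$, use the $L$-Lipschitz continuity of $\nabla F_i$, and evaluate \eqref{eq: BG condition} at $\bar{\x}^t$, which gives exactly the stated constants.
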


\section{Additional Details of Numerical Experiments}
\label{app: additional exp}

\subsection{Experimental setups}
\label{app: numerical setup}

\noindent{\bf Hardware and Software Setups.}
\begin{itemize}
\item{\bf Hardware.}
The simulations are performed on a private cluster with 8 CPUs, 64 GB RAM, and one NVIDIA A30 GPU card.

\item {\bf Software.}
We code our experiments based on PyTorch 2.8.0 and Python 3.9.1.
\end{itemize}

\noindent{\bf Neural Network and Hyper-parameter Specifications.}

Table~\ref{tbl: cnn structures} specifies details of the structures of the convolutional neural network and training.
We initialize CNNs using the Kaiming initialization.
The initial local learning rate $\eta_0$ 
is searched based on the best performance after $100$ global rounds, 
over two grids $\{0.1, 0.03, 0.01, 0.003, 0.001, 0.0003\}$.

\noindent\textbf{Datasets and Data Heterogeneity.}
The MNIST dataset \cite{lecun2010mnist} contains 10 classes of images. 
In total, there are 50000 train images and 10000 test images.

\begin{minipage}{0.45\linewidth}
 \begin{figure}[H]
\centering
\includegraphics[width=.8\linewidth, trim=0.1cm 0.1cm 0 0.1cm, clip]{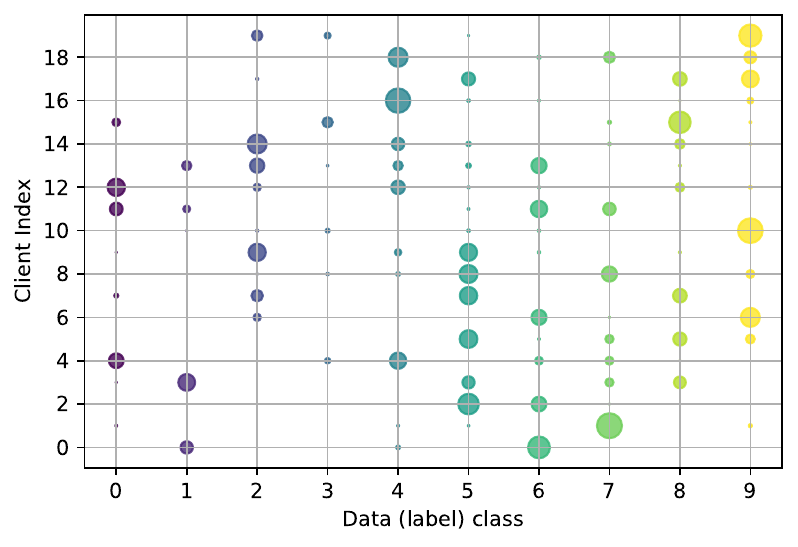}
\caption{
An example of data heterogeneity using $\mathsf{Dirichlet}(\alpha=0.1)$ distribution with $20$ clients.
$x$-axis denotes the categories of images, 
while $y$-axis denotes the client index.
The size of a circle refers to the proportion of pictures in a given class.
The color of a circle distinguishes images with different categories.
}
\centering
\label{fig: noniid 20 mnist}
\end{figure}
\end{minipage}
\hfill
\begin{minipage}{0.45\linewidth}
\begin{table}[H]
\caption{
Neural network architecture, 
loss function, 
training steps,
and batch size specifications
}
\label{tbl: cnn structures}
\begin{tabular}{cc}
\toprule
{\bf Dataset}& 
{\bf MNIST} 
\\
\toprule
Neural network &
MLP 
\\
Model architecture$^*$ & 
\begin{tabular}{p{.18\textwidth}}
\centering
{\bf L}(784,1024)
-- {\bf R}
-- {\bf L}(10)
\end{tabular}
\\
\midrule
Loss function &
Cross-entropy loss
 \\
\addlinespace[1ex]
Number of local steps $s$ &
10 \\
Number of global rounds $T$ &
800
\\
\midrule
Batch size &
32
\\
\bottomrule
\end{tabular}
\begin{tabular}{p{.95\textwidth}}
$^*$
\begin{footnotesize}    
{\bf R}: ReLU activation function;
{\bf L}: (\# outputs): a fully-connected linear layer.
\end{footnotesize}
\end{tabular}
\end{table}
\end{minipage}

\subsection{Baseline algorithms}
We compare the proposed~\fedopi~with five baseline algorithms.
(i) The ideal FedAvg algorithm \cite{mcmahan2017communication} aggregates each device's update without any receiver noise,
(ii) the SCA algorithm \cite{abrar2025non}
adopts a successive convex approximation framework to optimize the learning convergence bound based on the channel conditions,
(iii) the BB-interior algorithm \cite{zhu2019broadband}
schedules only devices within a fixed radius $R_{\text{in}}$ by excluding distant devices,
(iv) the BB-alternative algorithm \cite{zhu2019broadband}
randomly alternates between full and interior-only scheduling to balance update reliability with full data utilization,
and (v) the vanilla OTA \cite{yang2020federated} enforces channel inversion across all devices and is thus susceptible to deep-fading devices.

In terms of the CDI knowledge, the vanilla OTA, BB-interior, and BB-alternative algorithms require the parameter server to know each device's small-scale CDI $h_{i}^{t}$'s, which may incur substantial overhead.
The SCA algorithm requires the small-scale CDI at the devices and the statistical CDI at the parameter server.
In contrast, our~\fedopi~only requires the small-scale CDI at each device.
Notably, we let the SCA algorithm optimize its convergence bound based only on the {\em initial} statistical CDIs, without accounting for the potential temporal variations in our setup.

\subsection{Additional results}
\prettyref{fig:gamma tradeoff complete} plots the complete test accuracy curves of~\fedopi~reported in~\prettyref{fig:impacts of gamma}.
The observations are consistent with~\prettyref{sec: numerical experiment}, where only $\gamma = 10^{-9}$ achieves the best balance between uplink availability and receiver noise amplification.

\begin{center}
\begin{minipage}{.7\linewidth}
\begin{figure}[H]
    \centering
    \includegraphics[width=.8\linewidth]{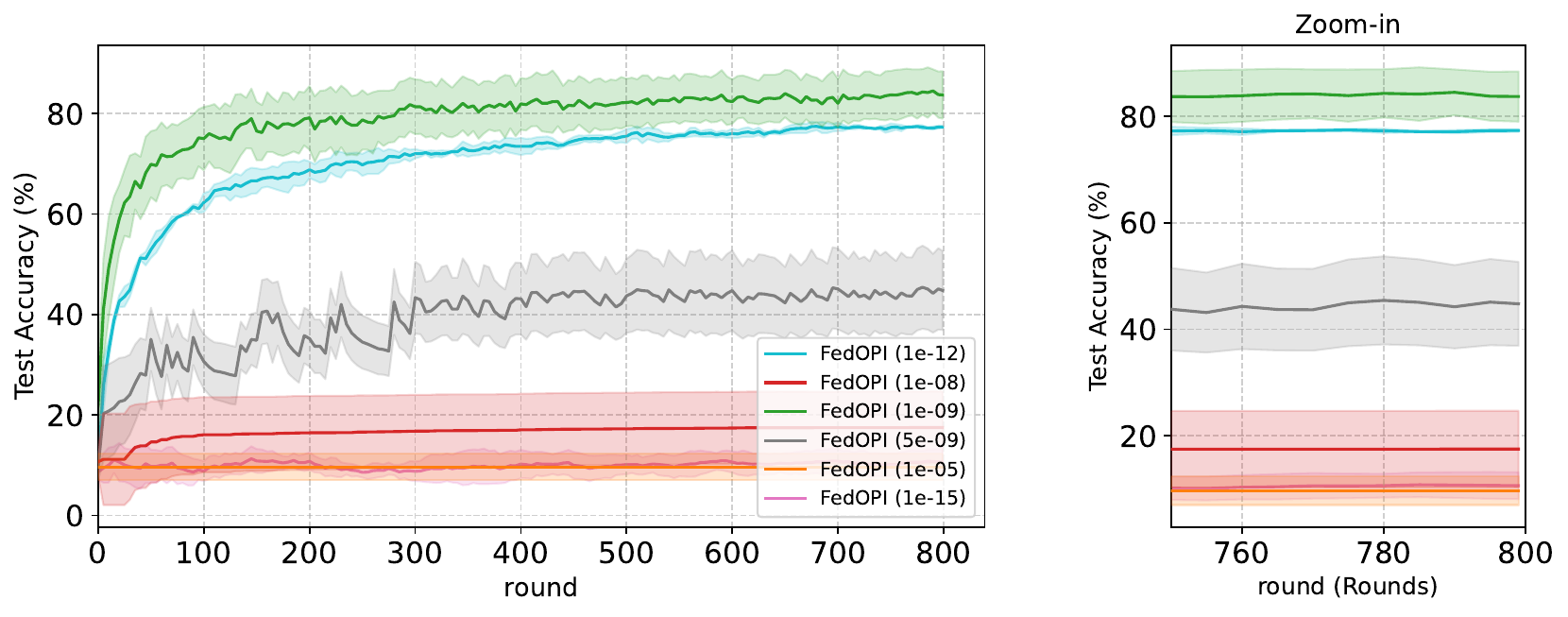}
    \caption{Comparisons of the proposed~\fedopi~under different $\gamma$'s on the MNIST dataset with $m=30$ devices in the stationary regime. 
    The results are obtained over five repetitions.
    Smaller $\gamma$ relates to lower truncation threshold and therefore leads to more active uplinks, but at the expense of higher receiver noise amplification, and vice versa.
    }
    \label{fig:gamma tradeoff complete}
\end{figure}    
\end{minipage}
\end{center}

\end{document}